\documentclass{article}

\usepackage[preprint]{neurips_2026}

\usepackage[utf8]{inputenc} 
\usepackage[T1]{fontenc}    
\usepackage{hyperref}       
\usepackage{url}            
\usepackage{booktabs}       
\usepackage{amsfonts}       
\usepackage{nicefrac}       
\usepackage{microtype}      
\usepackage{xcolor}         
\usepackage{amsmath}
\usepackage{amsthm}
\usepackage{subcaption}
\usepackage{graphicx}
\usepackage{booktabs}
\usepackage{array}
\usepackage{multirow}
\usepackage{tikz}
\usepackage{graphicx}
\usepackage{array}
\usetikzlibrary{arrows.meta}
\usepackage{booktabs}
\usepackage{siunitx}
\usepackage{placeins}
\usepackage{mathabx}

\newtheorem{theorem}{Theorem}
\newtheorem{lemma}[theorem]{Lemma}

\newtheorem{definition}[theorem]{Definition}
\newtheorem{remark}[theorem]{Remark}

\title{Vision Transformer-Conditioned UNet for Domain-Adaptive Semantic Segmentation}

\author{
  Joel Valdivia Ortega \(^{1,2,3}\)
  \And
  Tingying Peng \(^{1}\)
  \And
  Marion Jasnin \(^{2,4}\)
  \And
  \\
   \(^1\)Helmholtz AI, Helmoltz Munich, Neuherberg, Germany\\
   \(^2\)Helmholtz Pioneer Campus, Helmholtz Munich, Neuherberg, Germany\\
   \(^3\)School of Computation, Information and Technology, TUM, Garching, Germany\\
   \(^4\)Department of Chemistry, TUM, Garching, Germany.
   \\
   \\
   All authors can be contacted for correspondance.\\
     \{\texttt{joel.valdiviaortega}, \texttt{tingying.peng}, \texttt{marion.jasnin}\}\texttt{@helmholtz-munich.de}
}

\begin{document}

\maketitle

\begin{abstract}

Semantic segmentation is essential for analysing anatomical features in biomedical research, yet a performance gap remains for Vision Transformers (ViTs) in the field, particularly for sparse, fine-structured, and low signal-to-noise targets. We attribute this challenge in part to the lightweight pixel decoders commonly used in promptable ViT models, who may lack the local inductive bias needed for high-precision biomedical masks. We bridge this gap by introducing ViTC-UNet, which conditions a UNet on frozen pre-trained ViT representations through learnable tokens and a two-way attention decoder. This combines ViT global visual priors with the local inductive bias and high-resolution decoding capacity of UNets, while avoiding end-to-end ViT fine-tuning even in cross-domain settings. ViTC-UNet outperforms baseline results in semantic segmentation tasks across MRI and CT modalities, demonstrating that structure-conditioned UNet decoding can efficiently adapt large-scale visual priors to high-complexity biomedical segmentation. Code can be accessed at [url will be released after publication].

\end{abstract}

\section{Introduction}
\label{sec:introduction}

Semantic segmentation remains a central challenge in computer vision across a wide range of imaging modalities. In natural images, the field has shifted from convolutional neural networks (CNNs) \cite{segnet, convolutional networks} towards ViTs \cite{vit}, which now serve as the backbone of many state-of-the-art (SOTA) segmentation models. These models pair strong pre-trained representations with task-specific decoders, ranging from simple linear heads \cite{dinov2} to lightweight upsampling mask decoders \cite{mask2former, dpt}, and further enable prompt-based interaction through points, boxes, masks \cite{sam, sam2}, and text inputs \cite{sam3, cosine model}.

Adapting this paradigm to biomedical imaging remains challenging. Biomedical datasets are often annotation-limited \cite{data scarcity 2, data scarcity 1}, and dominated by specialized structures with irregular, sparse, thin, or low-contrast morphology \cite{context aware segmentation, totalseg}. While fine-tuning large ViT-based models has improved performance in several biomedical applications \cite{dinobloom, transfer learning 1, transfer learning 2}, such adaptation can be computationally expensive and data-intensive, and lightweight ViT decoders may lack the local inductive bias needed for precise biomedical masks. These challenges have preserved the relevance of UNet-based architectures \cite{unet_biomedical}; in particular, nnU-Net \cite{nnunet} and its derivatives remain highly competitive across diverse medical and biological imaging modalities \cite{membrain, nnunet baseline fundus}. This suggests a methodological synergy where ViTs provide pre-trained global representations and flexible conditioning, whereas UNets provide sample-efficient, high-resolution decoding for intricate biomedical structures.

To exploit this complementarity, we propose ViTC-UNet (Figure \ref{fig:general_description}), a framework that leverages the robust, general-purpose visual priors of a frozen ViT pre-trained on the natural modality to guide a UNet as pixel decoder optimized for the intricate structures of biomedical imaging. By fusing convolutional inductive biases with these rich latent representations, our approach retains the sample efficiency of UNets while bypassing the need for computationally intensive Transformer fine-tuning. This architecture effectively transforms the UNet into a promptable, structure-conditioned and high-fidelity pixel decoder (Figure \ref{fig:proposed_arch}) that achieves SOTA performance in semantic segmentation across diverse modalities, including sparse and fine-structured targets where fixed-channel decoders are defied (Table \ref{tab:biomedical results}). Furthermore, the underlying conditioning mechanism decouples semantic identity from the output layer, allowing new structure tokens to be incorporated without modifying the network architecture (Table \ref{tab:halved results}), demonstrating a robust capacity for incremental learning in heterogeneous domains.
\begin{figure}[!t]
        \centering
        \includegraphics[width=0.82\linewidth]{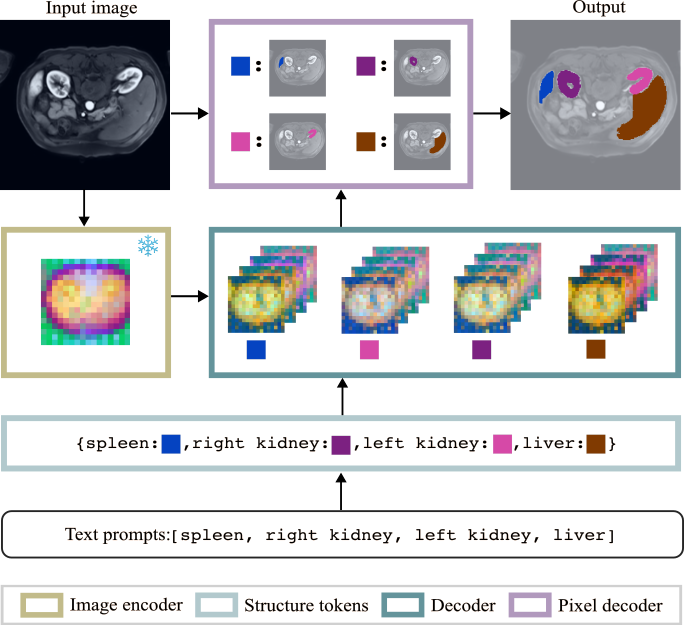}
        \caption{Information flow of ViTC-UNet: a multi-modal framework for semantic segmentation. The input image is processed through a frozen ViT to extract high-level visual priors. These priors are modulated by structure tokens in a two-way attention decoder via latent conditioning. The resulting feature maps, along with the original image, are fed into a UNet acting as a pixel-level decoder. By fusing the global context from the conditioned ViT priors with the local inductive biases of CNNs, the UNet generates precise, structure-specific masks. Notably, the architecture produces individual masks for each input structure through a single-channel output, efficiently leveraging the learned latent conditioning via prompting.}
        \label{fig:general_description}
\end{figure}

\begin{figure}[!t]
        \centering
        \includegraphics[width=0.99\linewidth]{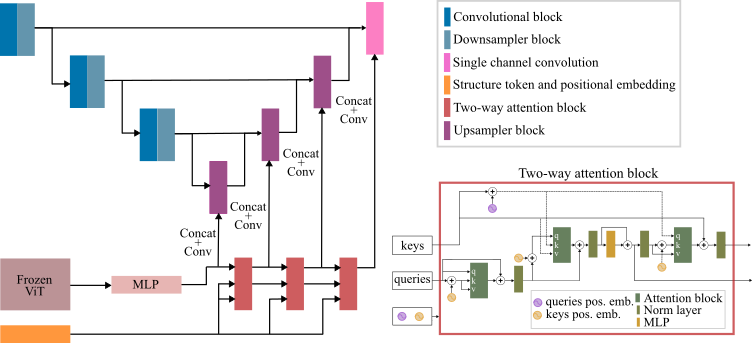}
        \caption{Overview of ViTC-UNet. A ViT generates image embeddings which are integrated with structure tokens through a two-way attention-based decoder modified from SAM \cite{sam}. These conditioned features then serve to prompt the nnU-Net architecture as pixel decoder.}
        \label{fig:proposed_arch}
\end{figure}

\section{Related Work}

\subsection{Hybridizing Convolutional and Attention-Based Architectures}

Despite the dominance of UNet in biomedical segmentation \cite{nnunet}, recent trends favor integrating ViTs to capture global context \cite{microsam,medsam}. Early hybrids typically replace the convolutional bottleneck or augment skip connections with attention blocks \cite{3d-vit-unet, evit}, but require computationally intensive end-to-end fine-tuning and often fail to consistently outperform a well-tuned UNet under rigorous validation \cite{nnunet revisited}. 

To mitigate this, work has been done to employ frozen ViT backbones. \citet{frozen mae} fuse features from a static Masked Autoencoder into a convolutional decoder, though it remains dependent on domain-specific pre-training. Although \citet{frozen dinov2} utilize a frozen DINOv2 for the segmentation of surgical instruments, the domain shift remains minimal compared to modalities such as CT or MRI. Closely related to our approach, \citet{rmlp} concatenates ViT embeddings into a UNet bottleneck. However, their framework requires domain-adapted backbones and remains structurally rigid. Our method distinguishes itself by integrating natural image priors without domain-specific fine-tuning or task-specific architectural restrictions such as imposing structural constraints on the output dimensionality relative to the labels set.

\subsection{Promptable Segmentation and Pixel Decoder Conditioning}
ViT-based models such as SAM \cite{sam3, sam, sam2} have pushed interactive segmentation through multi-modal prompting. However, their reliance on relatively simple upsampling decoders often necessitates massive fine-tuning of domain-specific labels to handle biomedical heterogeneity \cite{microsam, medsam}. While BiomedParse \cite{biomedparse} introduces text-conditioned lightweight decoders, its reliance on large-scale image-mask-text supervision and joint vision-language fine-tuning makes adaptation costly, and its performance may be limited in specialized biomedical domains that are underrepresented in the training corpus. Furthermore, the continued competitiveness of the more sample-efficient nnU-Net \cite{nnunet} against large-scale biomedical models suggests that convolutional decoders remain effective alternatives for resolving intricate anatomical structures.

In contrast, most UNet-based models \cite{nnunet abdtumor_adrenal,nnunet baseline brightfield 1} use fixed output channels, with one channel per target structure. While effective for standard semantic segmentation, this formulation ties semantic identity to output dimensionality and lacks target-specific latent conditioning. We depart from this by introducing a conditioned UNet decoder (Figure \ref{fig:proposed_arch}), merging the flexibility of promptable models with the robust spatial precision of the UNet framework.

\section{Model Architecture} \label{sec:architecture}

The ViTC-UNet architecture (Figure \ref{fig:proposed_arch}) is based on a modular setting that merges feature extraction with high-resolution mask prediction, consisting of a frozen backbone encoder, a learnable conditioning decoder, and a UNet-based pixel decoder seeking to unify the global representational power of ViTs with the localized precision of CNNs. Aiming to avoid costly fine-tuning while leveraging the robust embeddings of ViTs pre-trained on natural images, our focus is not on modifying the backbone, but on how frozen ViT representations can be decoded. ViTC-UNet injects target-specific ViT guidance into the inherently multi-scale reconstruction path of a UNet, enabling high-capacity, structure-conditioned decoding for biomedical targets.

\subsection{Backbone Encoder}

The encoder consists of a frozen ViT backbone pre-trained on large-scale natural images to provide robust visual priors. To contrast the adaptability across distinct pretraining objectives, we evaluate our framework using both DINOv2 \cite{dinov2}, which follows a self-supervised contrastive regime, and SAM 2 \cite{sam2}, optimized for supervised promptable segmentation. By maintaining fixed weights, the model bypasses the intensive data requirements of Transformer fine-tuning and preserves high-quality, general-purpose features. This strategy reduces the number of trainable parameters and mitigates the data demands associated with end-to-end ViT fine-tuning in annotation-limited biomedical settings \cite{data scarcity 2, data scarcity 1}. 

\subsection{Conditioning Decoder}

The transition from high-level natural image embeddings to biomedical spatial representations is mediated by a conditioning decoder. This module employs an MLP followed by a series of modified two-way attention blocks \cite{sam} to facilitate bidirectional communication between image features and learnable prompts. Unlike fixed-label architectures, we introduce a structure token for each target class, paired with shared spatial positional embeddings. The MLP functions as a continuous mapping, projecting backbone representations into a specialized segmentation space. Within the attention blocks, image embeddings and structure tokens mutually attend to one another, allowing the visual representation to be modulated by the requested target while the structure token aggregates image-specific context. In the initial fusion stage, the structure token is spatially replicated to match the image-embedding grid before bidirectional cross-attention. This configuration enables a dynamic, fully learnable segmentation process optimized for task-specific alignment.

\subsection{Multi-Stage and Multi-Scale Latent Conditioning}\label{representing curve}

Empirical evidence suggests that dense prediction benefits from using more than a single latent representation during decoding compared to relying solely on the final block \cite{dinov2, dpt}. In ViTC-UNet, we do not extract intermediate layers' outputs from the frozen ViT backbone, but start from the final ViT representation and generate a sequence of structure-conditioned decoder states through successive two-way attention blocks. \citet{rmlp} established theoretically that ViTs act as homeomorphisms i.e bijective continuous mappings with continuous inverses, provided that they operate on compact domains and satisfy an injectivity constraint. This property is enforced in models like DINOv2 during training through the KoLeo regularizer \cite{koleo}, defined as \[\mathcal{L}_{\text{KoLeo}}\left(\{x_1,\dots,x_n\}\right) = -\frac{1}{n} \sum_{i=1}^n \log\left(\min_{j \neq i} \|x_i - x_j\|_2\right),\] which diverges unless injectivity is maintained. Using Heine-Borel theorem (Theorem \ref{heine-borel}), we extend this reasoning to SAM 2 since, given fixed-resolution natural image domains and binary mask codomains, an optimal segmentation mapping is inherently injective, as identical masks across all potential queries imply identical input images. 

Given the ViT backbone functions as an homeomorphism, the decoder must preserve this topological trait to maintain segmentation fidelity. Per Remark \ref{concatenation of homeomorphisms}, a finite composition of mappings is homeomorphic if and only if each constituent transformation is itself an homeomorphism. Consequently, we can understand individual attention blocks within our decoder as homeomorphic transformations. This perspective treats the sequence of intermediate outputs not as discrete points, but as a continuous evolution of the image representation tracing a curve in the latent space and thus becoming able to represent images despite a change in domain taking place. This topological property ensures a well-behaved representation space, facilitating more robust downstream decoding and enhancing segmentation precision.

Motivated by this, the two-way attention decoder progressively transforms the frozen ViT embedding into a sequence of target-conditioned latent states. Rather than relying on a single bottleneck representation, this trajectory provides repeated opportunities to align image features with the target structure during decoding. Architecturally, these states provide multi-stage conditioning, since image-token interactions are refined across successive attention blocks, and multi-scale conditioning, since each state is fused with a corresponding level of the UNet decoding path. Deeper decoding stages receive target-specific global context, while shallower stages support the recovery of fine boundaries, narrow structures, and sparse targets. 

\subsection{Structure-conditioned UNet Pixel Decoder}

To resolve fine morphological details, we employ nnU-Net architecture as the pixel decoder, integrated in such a way that the output of the n\(^{\text{th}}\) attention block from the decoder is concatenated with the skip connection of the n\(^{\text{th}}\) shallowest convolutional block (Figure \ref{fig:proposed_arch}). This fusion ensures the final mask benefits from both convolutional inductive biases and the target-specific global context provided by the frozen backbone and conditioning decoder. Notably, the pixel decoder utilizes a single output channel regardless of the number of target structures. Since the mask is conditioned by the input structure token, the model decouples semantic identity from output dimensionality, enabling new target structures to be incorporated through additional learnable tokens without further pixel decoder modifications (Table \ref{tab:halved results}). At inference time, each requested structure token is paired with the frozen ViT image embedding to produce a single-channel binary mask. Multi-class segmentations are obtained by repeating this process over the target token set.

\section{Implementation and Training Details} \label{sec:implementation}

\paragraph{Encoder Configurations.} To evaluate the robustness and flexibility of the proposed architecture, we conducted experiments using two distinct ViT backbones, namely DINOv2 and SAM 2. To demonstrate that the framework's efficacy is not solely dependent on massive model scales, we utilized the small variants of both encoders. These models were selected to test the system's adaptability to disparate latent space geometries and training paradigms, specifically the self-supervised contrastive learning of DINOv2 versus the supervised, interactive objective of SAM 2. In all experiments, the encoder remained frozen to preserve pre-trained visual priors and minimize computational overhead.

\paragraph{Decoder Specifications.} The two-way attention blocks maintained a consistent structural configuration across all experimental trials, with the latent dimensionality adjusted to match the respective backbone. We represented object classes using a dictionary of learnable structure tokens, with the number of entries corresponding to the total classes in each dataset, excluding the background. Each token was initialized as a unidimensional tensor congruent with the ViT’s latent dimension. Furthermore, the decoder utilized a learnable positional encoding scheme designed to emulate the spatial embeddings of DINOv2.

\paragraph{Pixel Decoder Baselines.} We benchmarked our proposed ViTC-Unet architecture against two pixel decoder settings. The first baseline employed a simple linear head for pixel-wise classification, while the second utilized the ViT-UNet hybrid configuration used by \citet{rmlp}. For both the hybrid baseline and our proposed model, the convolutional backbone followed a fix structure from the DynUNet architecture used in nnU-Net with a 2D configuration following that of the encoder. Notably, while the linear and hybrid baselines treated the background as a dedicated output channel, our proposed architecture omitted a background one.

\paragraph{Training and Optimization.} Experimental training and evaluation was performed using a single NVIDIA A100 GPU with 64GB of memory for less than 12 hours per job, highlighting this way the low resource training setting. All models were optimized using AdamW \cite{adamw} with a batch size of 32 for a maximum of 100 epochs, where each epoch comprised 300 iterations. To facilitate robust convergence and prevent over-fitting, we employed an early stopping criterion where training was terminated if the validation metric failed to yield a relative improvement of 1\% over a 10-epoch plateau. The training objective was defined as a linear combination of Focal loss \cite{focal loss} and Dice loss \cite{dice loss}, with the Focal loss hyperparameter \(\gamma\) set to 2 to address potential class imbalance. To ensure the reproducibility and statistical significance of our findings, each configuration of backbone and pixel decoder was evaluated across five independent trials using distinct stochastic weight initializations. The resulting performance metrics, aggregated across these runs, are reported in Table \ref{tab:biomedical results}.

\paragraph{Datasets and Evaluation Framework.} To ensure a rigorous benchmark against the nnU-Net baseline, we utilized the data compilation curated by \citet{biomedparse}. We trained and evaluated our architecture on individual datasets within this collection to simulate the data-constrained regimes typical of specialized biomedical modalities. For each dataset, we employed an 80-20 random split of the training and validation partitions. To evaluate generalization, we utilized the independent holdout set from the "Interactive 3D Biomedical Image Segmentation" benchmark as our final test ensemble. To ensure statistical robustness and mitigate selection bias, the assignment of training and validation folds was randomized independently across each of our five experimental runs.

\paragraph{Class Sampling Strategy.} Empirical evaluation revealed that the proposed conditioned pixel decoder achieves optimal performance when all semantic classes within a dataset are presented simultaneously within each optimization step. Consequently, our training protocol involves loading a single image and computing the full ensemble of segmentation masks for all defined classes, irrespective of their presence in the specific input volume. This exhaustive sampling strategy mirrors the training objective of a standard UNet, ensuring that the model is exposed to a balanced distribution of both positive and negative examples across all anatomical structures. By maintaining this parity, we ensure that the conditioning mechanism, driven by the structure tokens, learns to maintain the localized precision characteristic of traditional convolutional decoders.

\paragraph{Label Space Expansion.}\label{sec:label_expansion} To assess the model's capacity for incremental learning, we conducted a two-stage training experiment. We first trained the model on half of the available labels, followed by a second stage comprising the full label space. Results in Table \ref{tab:halved results} demonstrate that the architecture integrates novel classes without degrading performance on previously learned structures. This confirms that our conditioning mechanism enables label-space expansion while preserving representational fidelity.

\section{Results and Discussion}\label{sec:results}

We evaluate segmentation performance using mean Intersection over Union (mIoU), excluding the background class to focus on discriminative accuracy. Volumetric scans are processed slice-wise, with 2D predictions reassembled into 3D volumes for final metric computation.

\paragraph{Comparative Performance.}
As shown in Table \ref{tab:biomedical results}, DINOv2-based ViTC-UNet achieves the strongest overall performance among the evaluated models, with an average foreground mIoU of 0.90 across the tested datasets under equal dataset weighting. This exceeds nnU-Net, which achieves an average mIoU of 0.79. ViTC-UNet also matches or exceeds nnU-Net on 14 of 15 benchmarks, falling slightly below it only on Abdomen1K and matching it on TotalSegmentator. Notably, this average performance also slightly exceeds the aggregated 0.88 mIoU reported by BiomedParse on its dataset, while ViTC-UNet uses frozen ViT features and does not require large-scale image-mask-text training. Together, these results indicate that our structure-conditioned UNet decoding provides an effective and computationally efficient route for adapting large-scale visual priors to biomedical segmentation.

\setlength{\tabcolsep}{3pt} 
\setlength{\parskip}{1em}
\begin{table}[]
  \caption{Comparative quantitative analysis of semantic segmentation performance across CT and MRI modalities. We report the mean mIoU \(\pm\) standard deviation across five independent experimental runs for each encoder-decoder-pixel decoder configuration. Bold and underlined indicates the models that performed the best and second best in mean per dataset respectively. Datasets are taken from the compilation made by \citet{biomedparse}.}
  \vspace{1em}
  \label{tab:biomedical results}
  \centering
  \begin{subtable}[t]{0.95\textwidth}
    \centering
    \scriptsize
     \begin{tabular}{lccc @{\hspace{3.0em}} ccc @{\hspace{3.0em}} c}
  \toprule
   & \multicolumn{3}{l}{\hspace{6em}DINOv2} & \multicolumn{3}{l}{\hspace{6em}SAM 2} & \\
  \cmidrule(r{2.7em}){2-4}\cmidrule(r{2.7em}){5-7}
     &&ViT-UNet&&&ViT-UNet&&\\
    Dataset   &   Linear   & hybrid             &ViTC-UNet  &Linear   & hybrid             &ViTC-UNet&nn-UNet\\

    \midrule
    &  \multicolumn{7}{c}{CT}\\
    \cmidrule(r){2-8}
    Abdomen1K \cite{abdomen1k}&0.04\(\pm\)0.03&0.07\(\pm\)0.05&\underline{0.81\(\pm\)0.07}&0.03\(\pm\)0.03&0.07\(\pm\)0.05&0.14\(\pm\)0.01&\textbf{0.85\(\pm\)0.05} \cite{abdomen1k}\\
    Aorta \cite{aorta}&0.13\(\pm\)0.08&0.33\(\pm\)0.21&\textbf{0.98\(\pm\)0.01}&0.04\(\pm\)0.03&0.35\(\pm\)0.21&0.75\(\pm\)0.04&\underline{0.96} \cite{nnunet aorta}\\
    AirwayTree \cite{airwaytree}&0.01\(\pm\)0.01&0.01\(\pm\)0.01&\textbf{0.93\(\pm\)0.04}&0.01\(\pm\)0.01&0.03\(\pm\)0.01&0.68\(\pm\)0.04&\underline{0.82\(\pm\)0.03} \cite{airwaytree}\\
    HaN-Seg \cite{han}&0.01\(\pm\)0.01&0.06\(\pm\)0.01&\textbf{0.97\(\pm\)0.01}&0.01\(\pm\)0.01&0.16\(\pm\)0.01&0.76\(\pm\)0.01&\underline{0.77\(\pm\)0.07} \cite{han}\\
    Lungs \cite{lungs}&0.63\(\pm\)0.21&0.77\(\pm\)0.26&\textbf{0.90\(\pm\)0.03}&0.63\(\pm\)0.21&0.78\(\pm\)0.26&\underline{0.85\(\pm\)0.06}&0.67\(\pm\)0.07 \cite{lungs}\\
    TotalSeg \cite{totalseg}&0.12\(\pm\)0.05&0.36\(\pm\)0.13&\textbf{0.87\(\pm\)0.06}&0.03\(\pm\)0.02&0.33\(\pm\)0.16&\underline{0.55\(\pm\)0.04}&\textbf{0.87} \cite{totalsegmentator}\vspace{1em}\\
    

    &  \multicolumn{7}{c}{MRI}\\
    \cmidrule(r){2-8}
    AMOS \cite{amos}&0.30\(\pm\)0.12&0.56\(\pm\)0.25&\textbf{0.90\(\pm\)0.03}&0.19\(\pm\)0.07&0.52\(\pm\)0.25&0.48\(\pm\)0.03&\underline{0.76} \cite{amos unet}\\
    BraTS \cite{brats 1, brats 3, brats 2}&0.08\(\pm\)0.13&0.19\(\pm\)0.21&\textbf{0.93\(\pm\)0.03}&0.07\(\pm\)0.10&0.13\(\pm\)0.16&0.43\(\pm\)0.04&\underline{0.75} \cite{nnunet brats}\\
    CHAOS-T1 \cite{chaos}&0.57\(\pm\)0.01&0.71\(\pm\)0.03&\textbf{0.91\(\pm\)0.01}&0.56\(\pm\)0.01&0.75\(\pm\)0.02&0.26\(\pm\)0.07&\underline{0.90\(\pm\)0.01} \cite{nnunet chaos}\\
    Heart\_ACDC \cite{nnunet heart acdc}&0.18\(\pm\)0.17&0.22\(\pm\)0.22&\textbf{0.92\(\pm\)0.02}&0.20\(\pm\)0.18&0.22\(\pm\)0.23&0.46\(\pm\)0.01&\underline{0.62\(\pm\)0.22} \cite{nnunet heart acdc}\\
    ISLES ADC/DWI \cite{isles}&0.01\(\pm\)0.01&0.01\(\pm\)0.01&\textbf{0.99\(\pm\)0.01}&0.01\(\pm\)0.01&0.01\(\pm\)0.01&\underline{0.72\(\pm\)0.01}&0.69\(\pm\)0.08 \cite{nnunet isles}\\
    LeftAtrium \cite{medical decathlon}&0.28\(\pm\)0.01&0.63\(\pm\)0.03&\textbf{0.98\(\pm\)0.01}&0.36\(\pm\)0.08&0.61\(\pm\)0.09&0.50\(\pm\)0.09&\underline{0.87}\cite{nnunet}\\
    ProstateADC/T2 \cite{medical decathlon}&0.59\(\pm\)0.07&0.73\(\pm\)0.06&\underline{0.73\(\pm\)0.02}&0.62\(\pm\)0.09&0.71\(\pm\)0.06&\textbf{0.92\(\pm\)0.05}&0.71 \cite{nnunet}\\
    QIN-Lesion \cite{QIN-PROSTATE}&0.00\(\pm\)0.00&0.01\(\pm\)0.01&\underline{0.80\(\pm\)0.06}&0.01\(\pm\)0.01&0.01\(\pm\)0.01&\textbf{0.99\(\pm\)0.01}&0.79 \cite{prostate lesion}\\
    Spider \cite{spider} & 0.12\(\pm\)0.05&0.56\(\pm\)0.10&\textbf{0.84\(\pm\)0.05}&0.09\(\pm\)0.05&0.44\(\pm\)0.08&0.61\(\pm\)0.08&\underline{0.82\(\pm\)0.02} \cite{nnunet spider}\\ 

    \bottomrule
  \end{tabular}
  \end{subtable}

  \end{table}

{\setlength{\tabcolsep}{3pt} 
\setlength{\parskip}{1em}
\begin{table}[]
  \caption{Impact of incremental class acquisition on ViTC-UNet performance. We compare a standard joint training baseline (all classes from initialization, top) against a two-phase training where the model is first trained on half subset of classes before incorporating the remainder (bottom). Results are reported as mean mIoU \(\pm\) standard deviation over five independent runs. The lack of significant performance degradation in the two-phase setting demonstrates ViTC-UNet's capacity for incremental expansion of the label space without the need of architectural modifications nor compromising existing representations or final segmentation fidelity.}
  \vspace{1em}
  \label{tab:halved results}
  \centering
  \begin{subtable}[t]{0.95\textwidth}
    \centering
    \scriptsize
     \begin{tabular}{c@{\hspace{1.0em}}cc @{\hspace{2.0em}} cccccc}
  \toprule
   & \multicolumn{2}{c}{\hspace{0em}CT} & \multicolumn{5}{c}{\hspace{0em}MRI} & \\
  \cmidrule(r{2.0em}){2-3}\cmidrule(l{0.25em}){4-8}
     Two-stage&&&&&&&\\
     Training&HaN-Seg \cite{han}&TotalSeg \cite{totalseg}&AMOS \cite{amos}& CHAOS-T1 \cite{chaos}&ISLES ADC/DWI \cite{isles}&QIN-Lesion \cite{QIN-PROSTATE}&Spider \cite{spider}&\\
     \midrule
     No&0.97\(\pm\)0.01&0.87\(\pm\)0.06&0.90\(\pm\)0.03&0.91\(\pm\)0.01&0.99\(\pm\)0.01&0.80\(\pm\)0.06&0.84\(\pm\)0.05\\
    Yes&0.94\(\pm\)0.01&0.93\(\pm\)0.05&0.87\(\pm\)0.02&0.86\(\pm\)0.03&0.99\(\pm\)0.01&0.79\(\pm\)0.03&0.84\(\pm\)0.05\\

    \bottomrule
  \end{tabular}
  \end{subtable}

  \end{table}}

\begin{figure}[h!]
    \centering
    \newcommand{\imgcell}[1]{%
        \begin{minipage}[c][1.9cm][c]{\linewidth}
            \centering
            \includegraphics[width=\linewidth]{#1}
        \end{minipage}%
    }
    \newcommand{\tikzimg}[2]{%
        \begin{minipage}[c][1.9cm][c]{\linewidth}
            \centering
            \begin{tikzpicture}
                \node[inner sep=0pt] (image) {\includegraphics[width=\linewidth]{#1}};
                #2
            \end{tikzpicture}
        \end{minipage}%
    }
    \setlength{\tabcolsep}{7pt}
    \renewcommand{\arraystretch}{5.5} 

    \begin{subfigure}{\textwidth}
        \centering
        \begin{tabular}{>{\centering\arraybackslash}p{0.16\textwidth} >{\centering\arraybackslash}p{0.16\textwidth} >{\centering\arraybackslash}p{0.16\textwidth} >{\centering\arraybackslash}p{0.16\textwidth} >{\centering\arraybackslash}p{0.16\textwidth}}
            \textbf{CT Image} & \textbf{Ground Truth} & \textbf{Linear} & \textbf{ViT-UNet hybrid} & \textbf{ViTC-UNet} \\[-2.0em]
            
            \imgcell{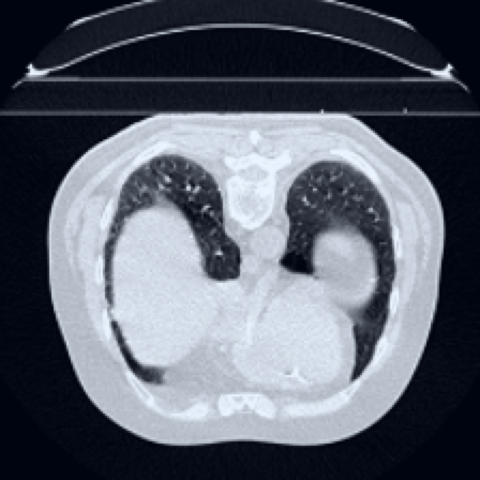} &
            \tikzimg{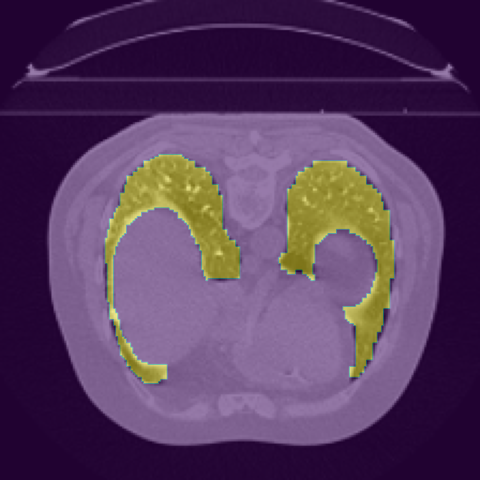}{\draw[white, line width=1.5pt, ->, >=Latex] (0.05,-0.1) -- ++(0.4,-0.4);} &
            \tikzimg{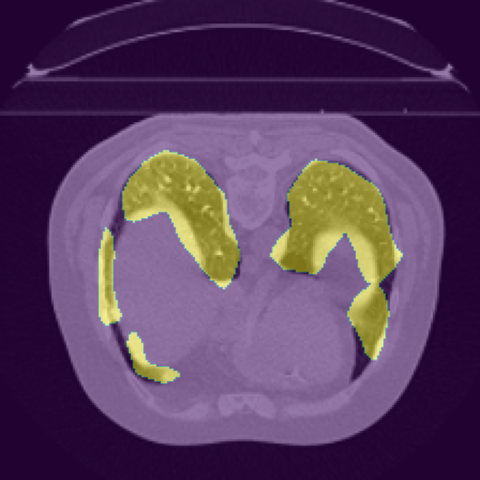}{\draw[white, line width=1.5pt, ->, >=Latex] (0.05,-0.1) -- ++(0.4,-0.4);} &
            \tikzimg{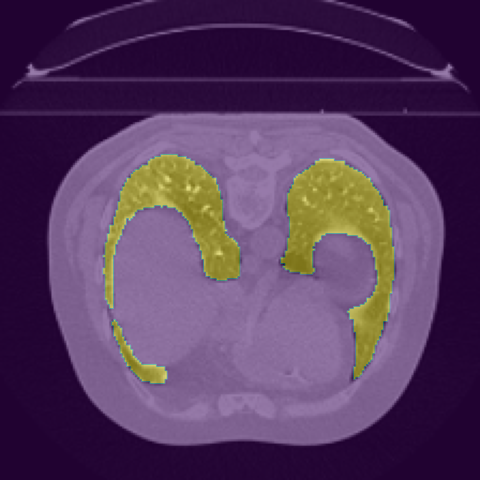}{\draw[white, line width=1.5pt, ->, >=Latex] (0.05,-0.1) -- ++(0.4,-0.4);} &
            \tikzimg{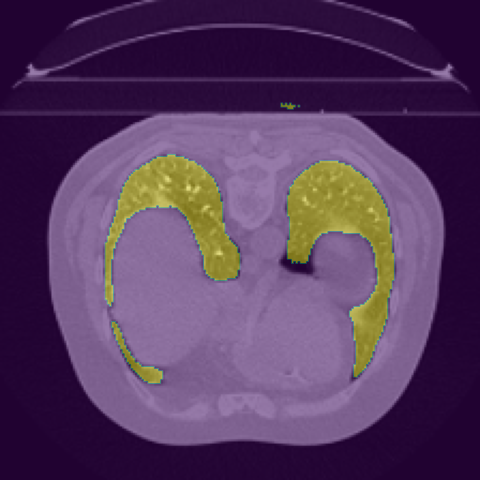}{\draw[white, line width=1.5pt, ->, >=Latex] (0.05,-0.1) -- ++(0.4,-0.4);} \\
            
            \imgcell{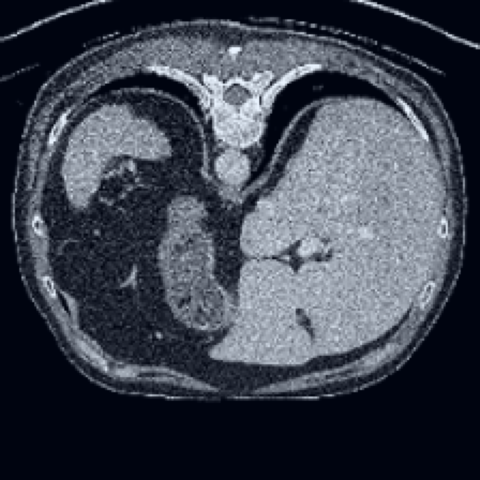} &
            \tikzimg{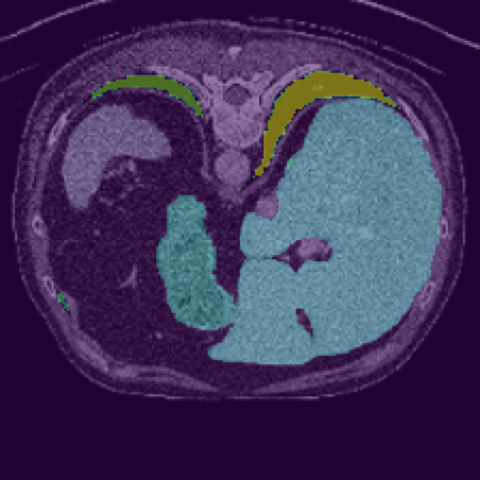}{\draw[white, line width=1.5pt, ->, >=Latex] (-0.2,0.2) -- ++(0.4,0.4);} &
            \tikzimg{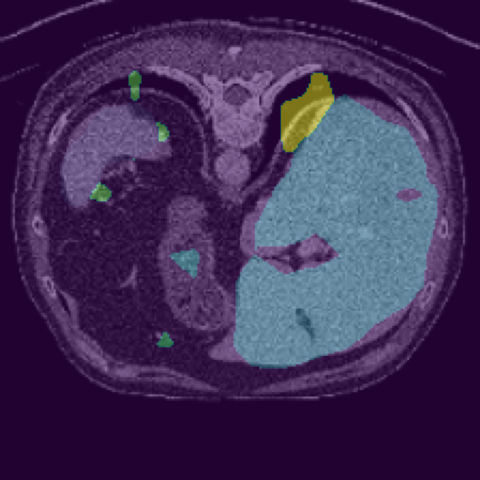}{\draw[white, line width=1.5pt, ->, >=Latex] (-0.2,0.2) -- ++(0.4,0.4);} &
            \tikzimg{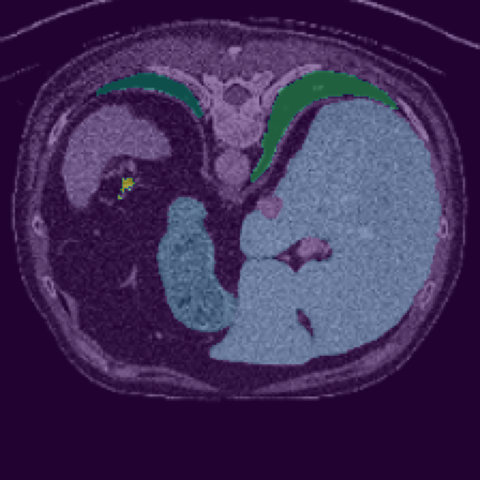}{\draw[white, line width=1.5pt, ->, >=Latex] (-0.2,0.2) -- ++(0.4,0.4);} &
            \tikzimg{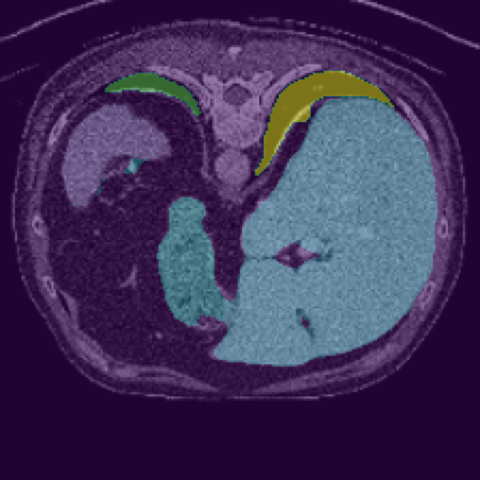}{\draw[white, line width=1.5pt, ->, >=Latex] (-0.2,0.2) -- ++(0.4,0.4);} \\
        \end{tabular}
        \vspace{1em}
        \caption{CT modality results across datasets. Rows illustrate samples from Lungs \cite{lungs} and Totalseg \cite{totalseg}. Arrows highlight the mask segmentation for lungs (row 1) and right lung (row 2).}
        \label{fig:qualitative_ct}
    \end{subfigure}

    \vspace{-2em} 

    \begin{subfigure}{\textwidth}
        \centering
        \begin{tabular}{>{\centering\arraybackslash}p{0.16\textwidth} >{\centering\arraybackslash}p{0.16\textwidth} >{\centering\arraybackslash}p{0.16\textwidth} >{\centering\arraybackslash}p{0.16\textwidth} >{\centering\arraybackslash}p{0.16\textwidth}}
            \textbf{MRI Image} & \textbf{Ground Truth} & \textbf{Linear} & \textbf{ViT-UNet hybrid} & \textbf{ViTC-UNet} \\[-2.0em]
            
            \imgcell{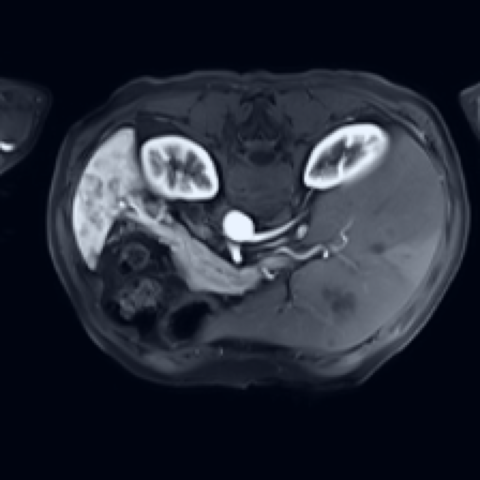} &
        \tikzimg{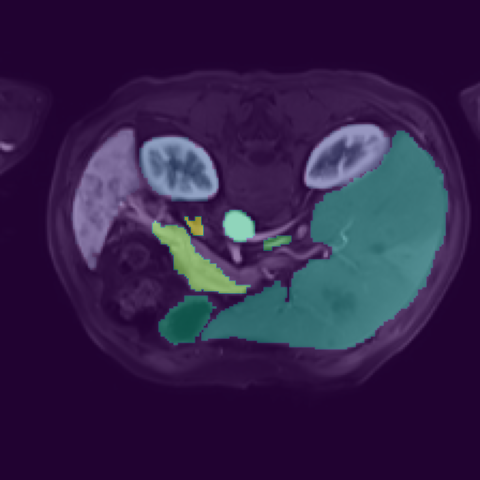}{\draw[white, line width=1.5pt, ->, >=Latex] (-0.65,0.3) -- ++(0.4,-0.4);} &
        \tikzimg{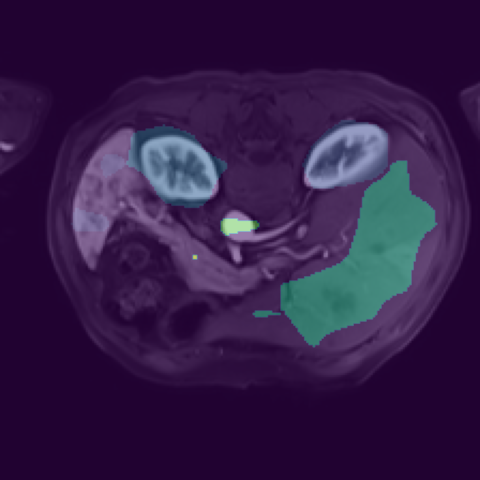}{\draw[white, line width=1.5pt, ->, >=Latex] (-0.65,0.3) -- ++(0.4,-0.4);} &
        \tikzimg{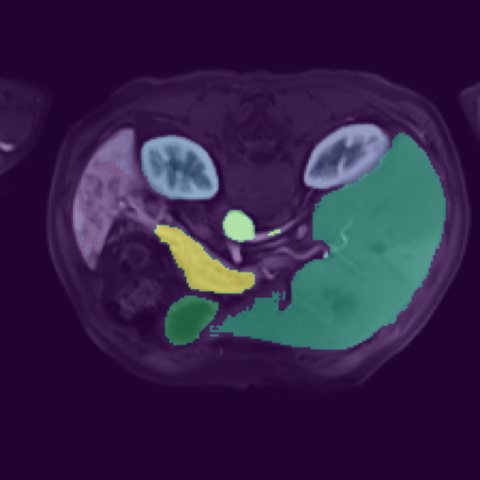}{\draw[white, line width=1.5pt, ->, >=Latex] (-0.65,0.3) -- ++(0.4,-0.4);} &
        \tikzimg{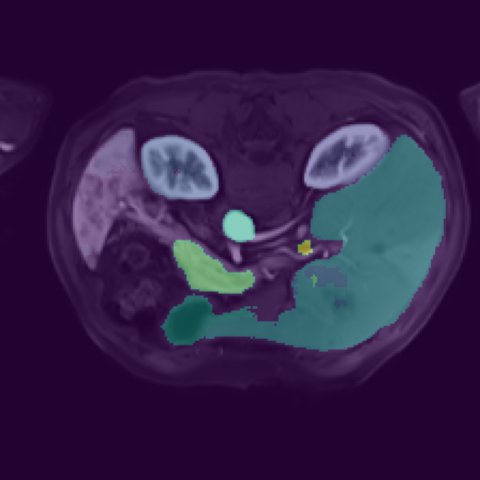}{\draw[white, line width=1.5pt, ->, >=Latex] (-0.65,0.3) -- ++(0.4,-0.4);}  \\

        \imgcell{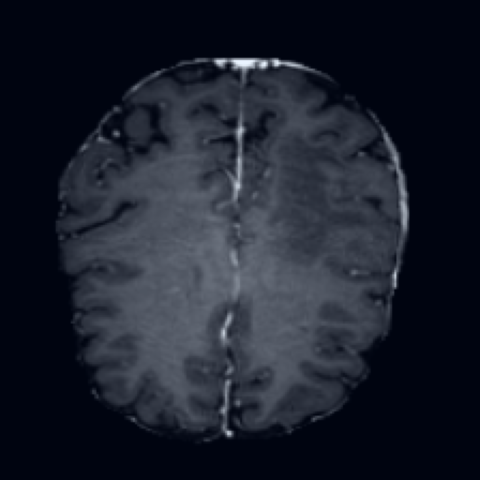} &
        \tikzimg{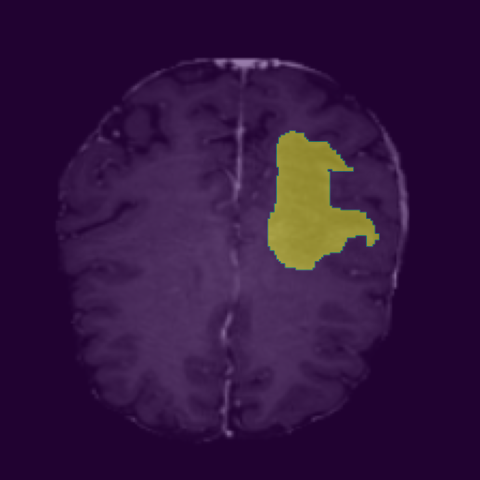}{\draw[white, line width=1.5pt, ->, >=Latex] (-0.2,0.6) -- ++(0.4,-0.4);} &
        \tikzimg{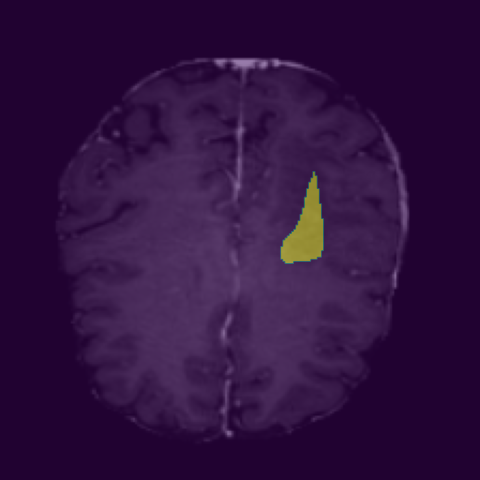}{\draw[white, line width=1.5pt, ->, >=Latex] (-0.2,0.6) -- ++(0.4,-0.4);} &
        \tikzimg{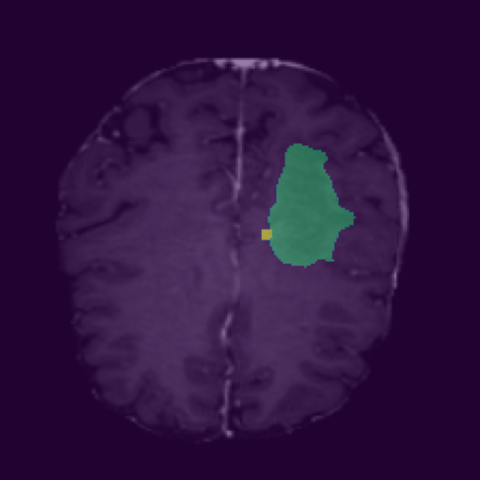}{\draw[white, line width=1.5pt, ->, >=Latex] (-0.2,0.6) -- ++(0.4,-0.4);} &
        \tikzimg{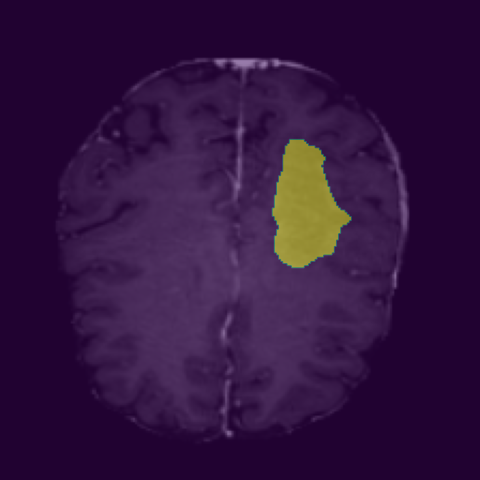}{\draw[white, line width=1.5pt, ->, >=Latex] (-0.2,0.6) -- ++(0.4,-0.4);}  \\

        \imgcell{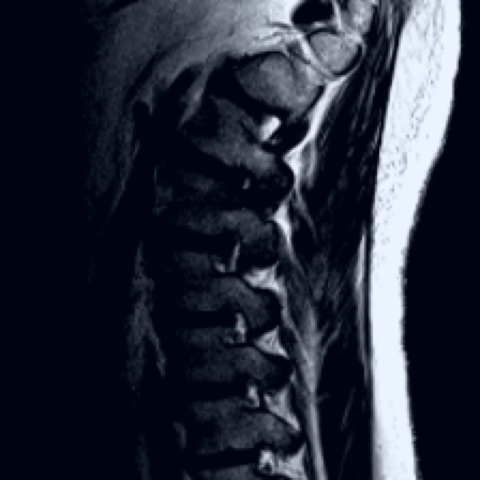} &
        \tikzimg{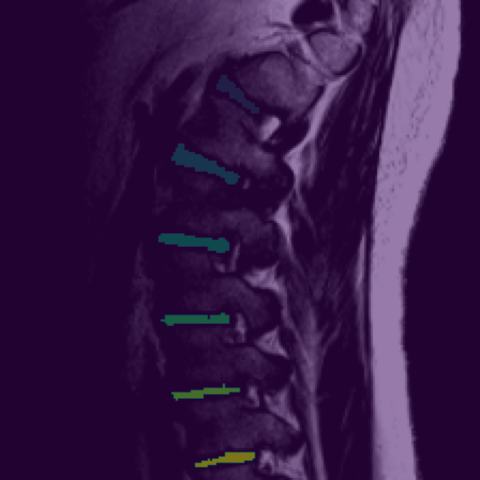}{\draw[white, line width=1.5pt, ->, >=Latex] (-0.78,0.43) -- ++(0.4,-0.4);} &
        \tikzimg{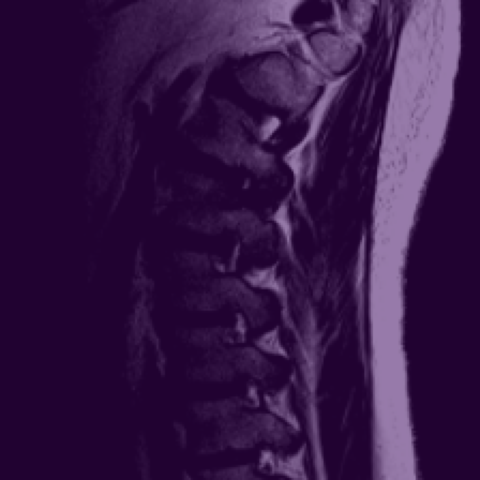}{\draw[white, line width=1.5pt, ->, >=Latex] (-0.78,0.43) -- ++(0.4,-0.4);} &
        \tikzimg{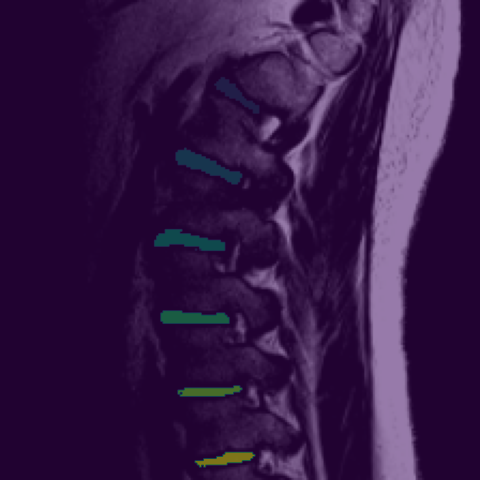}{\draw[white, line width=1.5pt, ->, >=Latex] (-0.78,0.43) -- ++(0.4,-0.4);} &
        \tikzimg{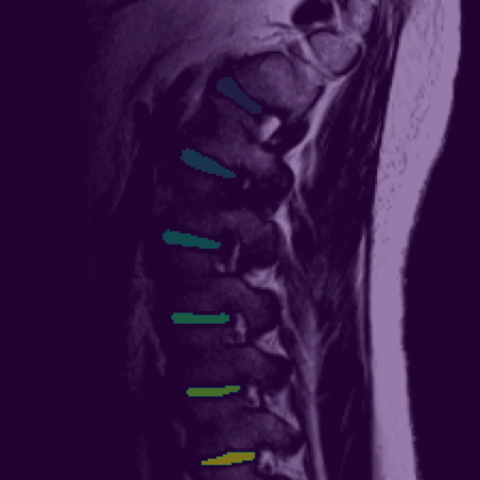}{\draw[white, line width=1.5pt, ->, >=Latex] (-0.78,0.43) -- ++(0.4,-0.4);}  \\

        \imgcell{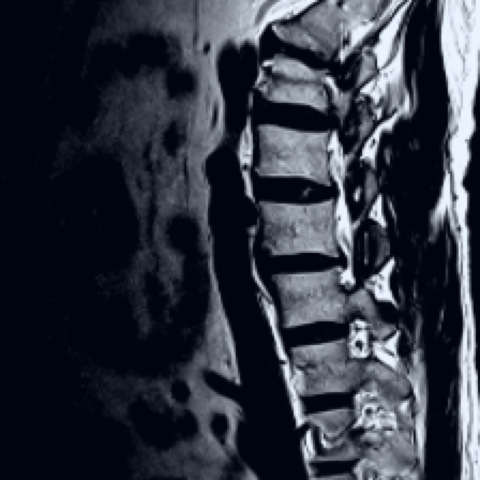} &
        \tikzimg{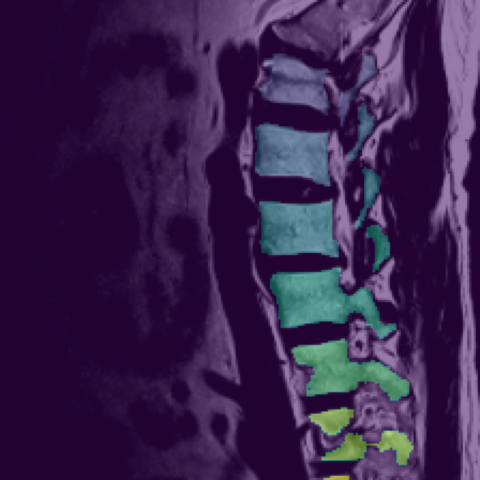}{\draw[white, line width=1.5pt, ->, >=Latex] (0.05,-0.4) -- ++(0.4,-0.4);} &
        \tikzimg{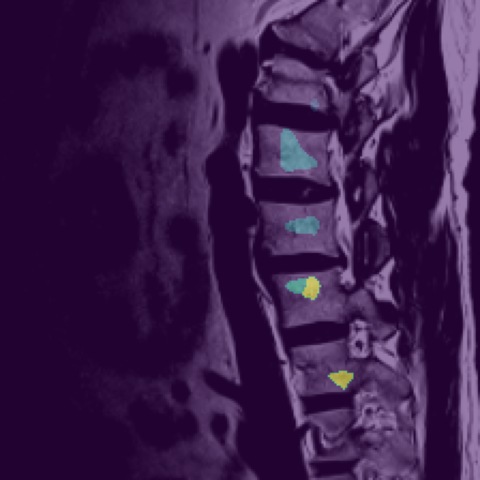}{\draw[white, line width=1.5pt, ->, >=Latex] (0.05,-0.4) -- ++(0.4,-0.4);} &
        \tikzimg{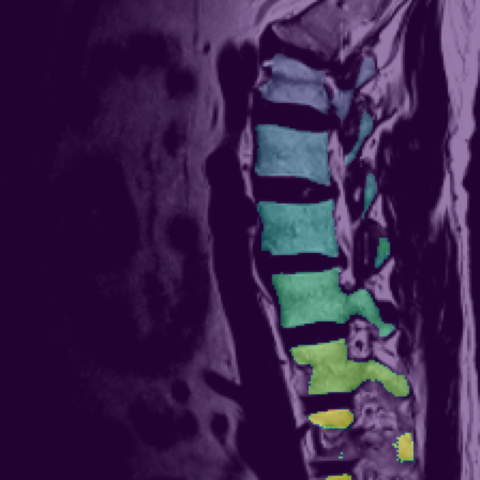}{\draw[white, line width=1.5pt, ->, >=Latex] (0.05,-0.4) -- ++(0.4,-0.4);} &
        \tikzimg{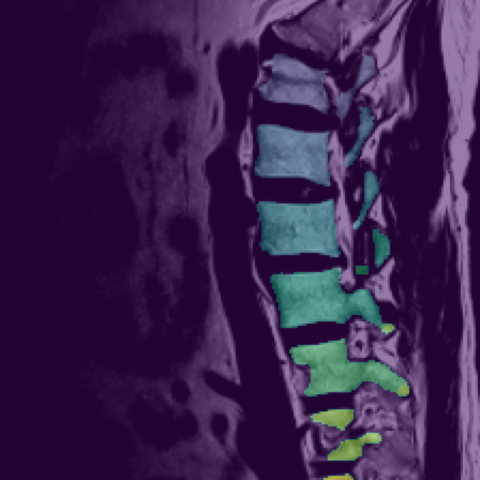}{\draw[white, line width=1.5pt, ->, >=Latex] (0.05,-0.4) -- ++(0.4,-0.4);}  \\
        \end{tabular}
        \vspace{1em}
        \caption{MRI modality results across datasets. Datasets for first and second rows are AMOS \cite{amos}, BraTS \cite{brats 1, brats 2, brats 3} respectively while rows third and fourth are from the Spider \cite{spider} dataset. The anatomical structures pointed by arrows from top to bottom are pancreas, tumor annotation, inter-vertebral disk and vertebrae. More qualitative results can be found in the Appendix Section \ref{sec:sup figs}.}
        \label{fig:qualitative_mri}
    \end{subfigure}

    \caption{Qualitative comparison of segmentation performance across (a) CT and (b) MRI modalities. ViTC-UNet using DINOv2 as encoder demonstrates superior boundary resolution and structure classification compared to hybrid and linear baselines.}
    \label{fig:qualitative_results}
\end{figure}

\paragraph{The Role of Multi-Stage and Multi-Scale Decoder.} The performance gain of the ViT-UNet hybrid over the linear baseline confirms that dense biomedical segmentation benefits significantly from convolutional inductive biases. Moreover, Table \ref{tab:biomedical results} shows that bottleneck-level integration of frozen ViT representations into a fixed-channel UNet-based pixel decoder is not always sufficient. Although the ViT-UNet hybrid is able to delineate the anatomical targets via its skip connections and convolutional blocks, the lack of multi-stage and multi-scale information fusing added to the cross-domain nature of the test might lead to structure misclassifications. In contrast, by making use of the complementary abilities from ViTs and CNNs, our ViTC-UNet using DINOv2 as encoder is able to better detect and correctly recognize the anatomical morphologies. These behaviors are qualitatively illustrated in Figure \ref{fig:qualitative_results}. This suggests that reliable frozen-ViT decoding requires not only convolutional reconstruction, but also target-specific conditioning injected across the multi-stage and multi-scale UNet decoding path. This aligns with prior literature citing the loss of inductive bias in Transformers as a primary hurdle in biomedical tasks, specially in data scarse settings \cite{hybrid 3, hybrid 2, hybrid 1}. Furthermore, our model’s outperformance over nnU-Net indicates that general-purpose ViT representations can effectively complement strong biomedical segmentation pipelines when processed by a structure-conditioned, multi-stage and multi-scale UNet decoder capable of expressing target-specific information from conditioned latent states (Subsection \ref{representing curve}).

\paragraph{Latent Space Geometry.} We observe that our model's performance significantly depends on the backbone's latent space geometry. While DINOv2-based models consistently perform best, we note a performance decrease when employing SAM 2. We hypothesize that DINOv2’s self-supervised regime, which encodes information into orthogonal linear subspaces \cite{orthogonal subspaces}, provides a more separable trajectory for our decoder. In contrast, SAM 2’s prompt-centric supervised objective may produce a latent space less conducive to the homeomorphic transformations modeled in our two-way attention blocks (Section \ref{representing curve}). By tracing a continuous evolution through conditioned decoder states rather than a discrete point, our pixel decoder effectively resolves the structured DINOv2 space into high-fidelity masks.

\paragraph{Label Space Flexibility.} Table \ref{tab:halved results} demonstrates that ViTC-UNet incorporates novel classes without degrading performance on previously learned structures with a reduction of 36\% of training time during fine-tuning. This capacity for label-space expansion follows from the token-conditioned single-channel formulation, which avoids architectural modification when new labels are introduced.

\paragraph{UNet as promptable pixel decoder.} These results validate the feasibility of conditioning a UNet architecture, highlighting a path towards more sophisticated promptable pixel decoders. While natural-domain SOTA models often rely on simplified upsampling for segmentation, our work suggests that integrating high-capacity, conditioned convolutional decoders with frozen Transformer backbones provides the localized precision necessary for rigorous biomedical accuracy.

\section{Conclusions}

In this work, we have demonstrated that SOTA performance in biomedical semantic segmentation can be achieved by conditioning the UNet architecture on frozen ViT latent representations when processed by a multi-stage and multi-scale decoder in cross-domain settings. Our findings represent an advancement in dense prediction by proposing an architecture that combines the flexible conditioning mechanisms of contemporary Transformer-based segmentation models with the high-resolution decoding capacity of UNets for biomedical modalities with intricate morphological complexity, low signal-to-noise ratios and the label scarcity inherent in the field.

Beyond empirical performance, this study enhances our theoretical understanding of the UNet's architectural flexibility as a conditionable pixel decoder. We provide evidence that convolutional layers and skip connections can effectively integrate global, prompt-based inductive biases. By decoupling semantic identity from the output layer, our ViTC-Unet architecture facilitates multi-class segmentation with UNet-level precision without requiring dedicated output channels for each object class. This design supports scalability, where new anatomical structures can be incorporated through the addition of learnable structure tokens rather than architectural modifications.

Furthermore, our results show that ViTs pre-trained on natural images possess useful transferable priors for disparate biomedical modalities when decoded through structure-conditioned UNet reconstruction. By achieving these results without fine-tuning the backbone, we offer a computationally efficient and small-data alternative to the standard paradigm of end-to-end adaptation, which is often prohibitive in data-scarce biomedical domains. Ultimately, this work demonstrates that general-purpose visual representations can complement strong biomedical segmentation pipelines, bridging the gap between large-scale vision models and specialized clinical and biological applications.

\section{Limitations} \label{sec:limitations}

While our framework demonstrates robust performance across diverse biomedical datasets, several limitations remain that offer avenues for future investigation. Firstly, although biomedical volumes are inherently 3D, our current pipeline processes data in a slice-wise 2D manner due to the ViT encoders being pre-trained on 2D natural images. Consequently, the model does not fully exploit the spatial continuity and volumetric context available in 3D scans. Extending the architecture to incorporate 3D native encoders or specialized volumetric attention mechanisms could further leverage volumetric inductive biases during decoding.

Secondly, while our conditioning mechanism effectively utilizes a dictionary of learnable structure tokens, they remain discrete and finite. This poses a challenge for open-vocabulary generalizability. To enhance the flexibility of the architecture across a broader spectrum of anatomical classes, future work could explore generating these tokens through a continuous prompt space, potentially utilizing the latent outputs of Large Language Models.

Furthermore, our model performance required orthogonality on the latent for the encoder. More work is needed to understand how to make use of latent representations in codomains with different geometries.

Finally, while this work focuses on semantic structure tokens, extending the conditioning capability of the UNet to include interactive visual prompts such as points or bounding boxes would be beneficial. Such an extension would further our understanding of the UNet as a high-fidelity pixel decoder compared to the lightweight mask decoders commonly used in Transformer-based segmentation models.

\begin{ack}
J.V.O. received support from the Helmholtz Association under the joint research school "Munich School for Data Science - MUDS".
The authors thank Rushin H. Gindra for his valuable and constructive feedback during writing this paper. J.V.O. wants to further thank Rushin H. Gindra because, although he put colors out of their figures, he puts them into their life.
\end{ack}


\bibliographystyle{alpha}

\begin{thebibliography}{99}

\bibitem[3d-vit-unet (2026)]{3d-vit-unet}
Afridi, S. et al. (2026) ‘3D-VIT-unet: 3D Vision Transformer based unet-like model for volumetric brain tumor segmentation’, PLOS Digital Health, 5(3). doi:10.1371/journal.pdig.0001323. 

\bibitem[Antonelli (2022)]{medical decathlon}
Antonelli, M., Reinke, A., Bakas, S. et al. The Medical Segmentation Decathlon. Nat Commun 13, 4128 (2022). https://doi.org/10.1038/s41467-022-30695-9

\bibitem[microsam (2025)]{microsam}
Archit, A. et al. (2025) ‘Segment anything for Microscopy’, Nature Methods, 22(3), pp. 579–591. doi:10.1038/s41592-024-02580-4. 

\bibitem[SegNet (2017)]{segnet}
Badrinarayanan,V., et al. (2017) SegNet: A Deep Convolutional Encoder-Decoder Architecture for Image Segmentation. IEEE Transactions on Pattern Analysis and Machine Intelligence, 39(12), 2481-2495

\bibitem[brats 1 (2021)]{brats 1}
U.Baid, et al., "The RSNA-ASNR-MICCAI BraTS 2021 Benchmark on Brain Tumor Segmentation and Radiogenomic Classification", arXiv:2107.02314, 2021(opens in a new window).

\bibitem[brats 3 (2017)]{brats 3}
S. Bakas, H. Akbari, A. Sotiras, M. Bilello, M. Rozycki, J.S. Kirby, et al., "Advancing The Cancer Genome Atlas glioma MRI collections with expert segmentation labels and radiomic features", Nature Scientific Data, 4:170117 (2017) DOI: 10.1038/sdata.2017.117


\bibitem[SAM 3 (2025)]{sam3}
Carion, N. et al. (2025) SAM 3: Segment Anything with Concepts. arXiv. https://arxiv.org/abs/2511.16719 

\bibitem[mask2former(2021)]{mask2former}
Cheng, B., et al. (2021) Masked-attention Mask Transformer for Universal Image Segmentation. arXiv. arXiv:2112.01527 

\bibitem[Chi (2020)]{data scarcity 2}
Chi, W. et al. (2020) ‘Deep learning-based medical image segmentation with limited labels’, Physics in Medicine \&amp; Biology, 65(23), p. 235001. doi:10.1088/1361-6560/abc363. 

\bibitem[orthogonal subspaces (2026)]{orthogonal subspaces}
Doshi, F.R, et al. (2026) Bi-Orthogonal Factor Decomposition for Vision Transformers. arXiv. arXiv:2601.05328

\bibitem[Dosovitskiy et al.(2020)]{vit}
Dosovitskiy, A., Beyer, L., Kolesnikov, A., Weissenborn, D., Zhai, X., Unterthiner, T., Dehghani, M., Minderer, M., Heigold, G., Gelly, S., et al.: An image is worth 16x16 words: Transformers for image recognition at scale. arXiv preprint arXiv:2010.11929 (2020)

\bibitem[QIN-PROSTATE (2018)]{QIN-PROSTATE}
Fedorov, A; Schwier, M; Clunie, D; Herz, C; Pieper, S; Kikinis, R; Tempany, C; Fennessy, F. (2018). Data From QIN-PROSTATE-Repeatability. The Cancer Imaging Archive. DOI: 10.7937/K9/TCIA.2018.MR1CKGND

\bibitem[hybrid 3 (2024)]{hybrid 3}
He J, Ma Y, Yang M, Yang W, Wu C, Chen S. TAC-UNet: transformer-assisted convolutional neural network for medical image segmentation. Quant Imaging Med Surg. 2024 Dec 5;14(12):8824-8839. doi: 10.21037/qims-24-1229. Epub 2024 Nov 5. PMID: 39698603; PMCID: PMC11651933.


\bibitem[isles (2022)]{isles}
Hernandez Petzsche, M.R., de la Rosa, E., Hanning, U. et al. ISLES 2022: A multi-center magnetic resonance imaging stroke lesion segmentation dataset. Sci Data 9, 762 (2022). https://doi.org/10.1038/s41597-022-01875-5

\bibitem[Isensee (2024)]{nnunet revisited}
Isensee, F. et al. (2024) ‘NNU-Net Revisited: A Call for rigorous validation in 3D medical image segmentation’, Lecture Notes in Computer Science, pp. 488–498. doi:10.1007/978-3-031-72114-4\_47. 

\bibitem[nnU-Net(2021)]{nnunet}
Isensee, F., Jaeger, P. F., Kohl, S. A., Petersen, J., \& Maier-Hein, K. H. (2021).
nnU-Net: a self-configuring method for deep learning-based biomedical image segmentation.
Nature Methods, 18(2), 203-211.

\bibitem[nnunet brats (2020)]{nnunet brats}
Isensee, F., et al. (2020) nnU-Net for Brain Tumor Segmentation. arXiv. arXiv:2011.00848

\bibitem[amos unet (2022)]{amos unet}
Isensee, F., et al. (2022) Extending nnU-Net is all you need. arXiv. arXiv:2208.10791

\bibitem[chaos (2019)]{chaos}
Kavur, A., et al. (2019) ‘CHAOS - Combined (CT-MR) Healthy Abdominal Organ Segmentation Challenge Data’. The IEEE International Symposium on Biomedical Imaging (ISBI), Zenodo. doi:10.5281/zenodo.3431873.

\bibitem[nnunet chaos (2021)]{nnunet chaos}
Kavur, A.E. et al. (2021) ‘Chaos challenge - combined (CT-MR) healthy abdominal organ segmentation’, Medical Image Analysis, 69, p. 101950. doi:10.1016/j.media.2020.101950. 

\bibitem[SAM (2023)]{sam}
Kirillov, A., Mintun, E., Ravi N., Mao, H., Rolland, C., Gustafson, L., Xiao, T., Whitehead, S., Berg, A., Lo, W., Doll{\'a}r, P., Girshick, R.: Segment Anything. arXiv (2023) arXiv:2304.02643

\bibitem[Koch (2024)]{dinobloom}
Koch, V., et al (2024) DinoBloom: A Foundation Model for Generalizable Cell Embeddings in Hematology. arXiv. arXiv:2404.05022

\bibitem[membrain (2024)]{membrain}
Lamm, L. et al. (2024) MemBrain V2: An end-to-end tool for the analysis of membranes in cryo-electron tomography [Preprint]. doi:10.1101/2024.01.05.574336. 

\bibitem[context aware segmentation (2025)]{context aware segmentation}
Last, M.G., Voortman, L.M. and Sharp, T.H. (2025) Scaling data analyses in cellular cryoET using comprehensive segmentation [Preprint]. doi:10.1101/2025.01.16.633326. 

\bibitem[Li (2022)]{nnunet aorta}
Li, F. et al. (2022) ‘Segmentation of human aorta using 3D NNU-net-oriented deep learning’, Review of Scientific Instruments, 93(11). doi:10.1063/5.0084433. 

\bibitem[heart acdc (2023)]{nnunet heart acdc}
Li, L. et al. (2023) ‘MyoPS: A benchmark of myocardial pathology segmentation combining three-sequence cardiac magnetic resonance images’, Medical Image Analysis, 87, p. 102808. doi:10.1016/j.media.2023.102808. 

\bibitem[Li, et al.(2025)]{frozen mae}
Li, M., et al. (2025) 'Few-Shot Deployment of Pretrained MRI Transformers in Brain Imaging Tasks'. arXiv. arXiv:2508.05783

\bibitem[evit(2026)]{evit}
Li, X., et al. (2025) ‘Evit-UNET: U-net like efficient vision transformer for medical image segmentation on mobile and Edge Devices’, 2025 IEEE 22nd International Symposium on Biomedical Imaging (ISBI), pp. 1–5. doi:10.1109/isbi60581.2025.10981108. 

\bibitem[focal loss (2017)]{focal loss}
Lin, T., et al. (2017) Focal loss for dense object detection. ICCV

\bibitem[cosine model (2025)]{cosine model}
Liu, Y., et al. (2025) Unified Open-World Segmentation with Multi-Modal Prompts. ICCV. arXiv:2510.10524 

\bibitem[convolutional networks (2024)]{convolutional networks}
Long, J. (2015) Fully Convolutional Networks for Semantic Segmentation. Proceedings of the IEEE Conference on Computer Vision and Pattern Recognition (CVPR), 3431-3440.

\bibitem[Loschilov and Hutter(2019)]{adamw}
Loshchilov, I., Hutter, F.: Decoupled weight decay regularization. ICLR (2019)

\bibitem[abdomen1k (2022)]{abdomen1k}
Ma, J., et al. (2022) AbdomenCT-1K: Is Abdominal Organ Segmentation a Solved Problem?. IEEE Transactions on Pattern Analysis and Machine Intelligence. 10.1109/TPAMI.2021.3100536

\bibitem[lungs (2021)]{lungs}
Ma, J. et al. (2021) ‘Toward data‐efficient learning: A benchmark for Covid‐19 CT Lung and infection segmentation’, Medical Physics, 48(3), pp. 1197–1210. doi:10.1002/mp.14676. 

\bibitem[medsam (2025)]{medsam}
Ma, J. et al. (2024) ‘Segment anything in Medical Images’, Nature Communications, 15(1). doi:10.1038/s41467-024-44824-z. 

\bibitem[Matsoukas (2022)]{transfer learning 1}
Matsoukas, C., et al (2022) What Makes Transfer Learning Work For Medical Images: Feature Reuse \& Other Factors. arXiv. 	arXiv:2203.01825

\bibitem[Matsoukas (2023)]{transfer learning 2}
Matsoukas, C., et al. (2023) Pretrained ViTs Yield Versatile Representations For Medical Images. arXiv. arXiv:2303.07034 

\bibitem[nnunet baseline fundus (2024)]{nnunet baseline fundus}
Mehmood, M. (2024) LVS-Net: A Lightweight Vessels Segmentation Network for Retinal Image Analysis. arXiv. arXiv:2412.05968v1

\bibitem[brats 2 (2015)]{brats 2}
Menze, B.H. et al. (2015) ‘The Multimodal Brain Tumor Image Segmentation Benchmark (brats)’, IEEE Transactions on Medical Imaging, 34(10), pp. 1993–2024. doi:10.1109/tmi.2014.2377694.  

\bibitem[dice loss (2016)]{dice loss}
Milletari, F., et al. (2016) V-Net:Fully convolutional neural networks for volumetric medical image segmentation. 3DV

\bibitem[DINOv2 (2023)]{dinov2}
Oquab, M., Darcet, T., Moutakanni, T., Vo, H., Szafraniec, M., Khalidov, V., Fernandez, P., Haziza, D., Massa, F., El-Nouby, A., et al.: Dinov2: Learning robust visual features without supervision. arXiv preprint arXiv:2304.07193 (2023)

\bibitem[Pal (2025)]{nnunet abdtumor_adrenal}
Pal, D. et al. (2025) ‘Pannet: A feature-based attention aggregation model for segmenting pancreatic ductal adenocarcinoma on contrast-enhanced CT images of the abdomen’, Medical Imaging 2025: Computer-Aided Diagnosis, p. 63. doi:10.1117/12.3048971. 

\bibitem[Payer (2023)]{data scarcity 1}
Payer, T. et al. (2023) ‘Medical volume segmentation by overfitting sparsely annotated data’, Journal of Medical Imaging, 10(04). doi:10.1117/1.jmi.10.4.044007. 

\bibitem[han (2023)]{han}
Podobnik, G. et al. (2023) ‘Han‐Seg: The head and neck organ‐at‐risk CT and mr segmentation dataset’, Medical Physics, 50(3), pp. 1917–1927. doi:10.1002/mp.16197. 

\bibitem[aorta (2022)]{aorta}
Radl, Lukas; Jin, Yuan; Pepe, Antonio; Li, Jianning; Gsaxner, Christina; Zhao, Fen-hua; et al. (2022). Aortic Vessel Tree (AVT) CTA Datasets and Segmentations. figshare. Dataset. https://doi.org/10.6084/m9.figshare.14806362.v1

\bibitem[SAM 2 (2024)]{sam2}
Ravi, N., et al. (2024) SAM 2: Segment Anything in Images and Videos. arXiv. https://arxiv.org/abs/2408.00714

\bibitem[dpt (2021)]{dpt}
Ranftl, R., Bochkovskiy, A. and Koltun, V. (2021) Vision Transformers for dense prediction, 2021 IEEE/CVF International Conference on Computer Vision (ICCV), pp. 12159–12168. doi:10.1109/iccv48922.2021.01196. 

\bibitem[Ronenber et al (2015)]{unet_biomedical}
Ronneberger, O. (2015) U-Net: Convolutional Networks for Biomedical Image Segmentation. arXiv. arXiv:1505.04597 

\bibitem[nnunet isles (2025)]{nnunet isles}
de la Rosa, E., Reyes, M., Liew, SL. et al. DeepISLES: a clinically validated ischemic stroke segmentation model from the ISLES'22 challenge. Nat Commun 16, 7357 (2025). https://doi.org/10.1038/s41467-025-62373-x


\bibitem[Sablayrolles, et al.(2019)]{koleo}
Sablayrolles A., Douze M., Schmid C., and Jégou H.: Spreading vectors for similarity search. ICLR (2019)

\bibitem[hybrid 2 (2025)]{hybrid 2}
Sang, Y. et al. (2025) Benchmark of Segmentation Techniques for Pelvic Fracture in CT and X-ray: Summary of the PENGWIN 2024 Challenge. arXiv. arXiv:2504.02382

\bibitem[hybrid 1 (2023)]{hybrid 1}
Soh WK and Rajapakse JC (2023) Hybrid UNet transformer architecture for ischemic stoke segmentation with MRI and CT datasets. Front. Neurosci. 17:1298514. doi: 10.3389/fnins.2023.1298514

\bibitem[airwaytree (2024)]{airwaytree}
Støverud, K.-H. et al. (2024) ‘AeroPath: An airway segmentation benchmark dataset with challenging pathology and Baseline Method’, PLOS ONE, 19(10). doi:10.1371/journal.pone.0311416. 

\bibitem[Valdivia Ortega, et al.(2025)]{rmlp}
Valdivia Ortega, J., et al. (2025) Randomized-MLP Regularization Improves Domain Adaptation and Interpretability in DINOv2. NeurIPS. arXiv:2511.05509

\bibitem[nnunet spider (2024)]{nnunet spider}
van der Graaf, J.W., van Hooff, M.L., Buckens, C.F.M. et al. Lumbar spine segmentation in MR images: a dataset and a public benchmark. Sci Data 11, 264 (2024). https://doi.org/10.1038/s41597-024-03090-w

\bibitem[spider (2023)]{spider}
van der Graaf, J., et al. (2023) ‘SPIDER - Lumbar spine segmentation in MR images: a dataset and a public benchmark’. Zenodo. doi:10.5281/zenodo.10159290.


\bibitem[totalseg (2023)]{totalseg}
Wasserthal, J. (2023) ‘Dataset with segmentations of 117 important anatomical structures in 1228 CT images’. Zenodo. doi:10.5281/zenodo.10047292.

\bibitem[totalsegmentator (2023)]{totalsegmentator}
Wasserthal, J. et al. (2023) ‘TotalSegmentator: Robust segmentation of 104 anatomic structures in CT images’, Radiology: Artificial Intelligence, 5(5). doi:10.1148/ryai.230024. 

\bibitem[Wei, et al.(2024)]{frozen dinov2}
Wei, M., et al. (2024) 'Enhancing surgical instrument segmentation: integrating vision transformer insights with adapter' Int J Comput Assist Radiol Surg. 10.1007/s11548-024-03140-z

\bibitem[nnunet baseline brightfield 1 (2026)]{nnunet baseline brightfield 1}
Xu, Q. et al. (2026) ‘Robust multi-domain digital pathology image segmentation via joint balancing representation learning’, Expert Systems with Applications, 320, p. 132093. doi:10.1016/j.eswa.2026.132093. 

\bibitem[amos (2022)]{amos}
Yuanfeng, J., et al. (2022) AMOS: A Large-Scale Abdominal Multi-Organ Benchmark for Versatile Medical Image Segmentation. arXiv. arXiv:2206.08023

\bibitem[BiomedParse (2024)]{biomedparse}
Zhao, T. et al. (2024) ‘A Foundation model for joint segmentation, detection and recognition of biomedical objects across nine modalities’, Nature Methods, 22(1), pp. 166–176. doi:10.1038/s41592-024-02499-w. 

\bibitem[prostate lesion (2020)]{prostate lesion}
Zhu, Q., Du, B. and Yan, P. (2020) ‘Boundary-weighted domain adaptive neural network for prostate mr image segmentation’, IEEE Transactions on Medical Imaging, 39(3), pp. 753–763. doi:10.1109/tmi.2019.2935018. 



\end{thebibliography}

\medskip

\newpage

\newpage
\appendix

\section{Technical Appendices and Supplementary Material}

\subsection{Licenses}\label{sec:licenses}
The datasets used in this paper where obtained as part of the compilation made by BiomedParse \cite{biomedparse} who release it under the Creative Commons Attribution Non Commercial Share Alike 4.0 license. This work adheres to this license.

This paper used the models DINOv2 \cite{dinov2} and SAM 2 \cite{sam2} who were released under a Apache License 2.0. Their use adhered to this license in this paper.

\subsection{Mathematical Definitions and Proofs} \label{sec:math}

We provide some useful topological definitions and mathematical demonstrations to back up our decision to use the output of intermediate attention blocks instead of only the output of the last one.

\begin{definition}\label{compact set}
    Given \(X\) a topological space and \(A\subseteq X\), we say that \(A\) is compact if for every open cover of \(A\), i.e. \(\{U_i\}_{i\in I}\) such \(A\subseteq \bigcup\limits_{i\in I} \) where \(U_i\) is open for every \(i\in I\), there is \(\{i_0,\cdots,i_n\}\subseteq I\) such that \(A\subseteq \bigcup\limits_{n=0}^{N} \).
\end{definition}

\begin{theorem}\label{heine-borel}
A subset in \(\mathbb{R}^n\) is compact if and only if it is closed and bounded.
\end{theorem}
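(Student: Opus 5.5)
The plan is to prove the two implications separately, using the open-cover characterization of compactness in Definition \ref{compact set}, the Euclidean metric on \(\mathbb{R}^n\), and the completeness of \(\mathbb{R}\) (least upper bound property).

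For the forward direction, let \(A\subseteq\mathbb{R}^n\) be compact.

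\emph{Boundedness.} Cover \(A\) by the open balls \(B(0,k)\), \(k\in\mathbb{N}\). These balls are nested, so a finite subcover is contained in a single ball \(B(0,K)\), and hence \(A\) is bounded.

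\emph{Closedness.} Fix \(x\notin A\). The sets \(U_k=\{y:\|y-x\|>1/k\}\) are open and cover \(A\), since \(x\notin A\). A finite subcover is a single \(U_K\), because the \(U_k\) increase with \(k\). Then \(B(x,1/K)\) is disjoint from \(A\), so the complement of \(A\) is open.

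For the converse, the plan has three steps.

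\begin{enumerate}
\item \emph{Reduce to a cube.} A bounded set lies in some closed cube \(C=[-R,R]^n\). If \(C\) is compact, then so is any closed \(A\subseteq C\): given an open cover of \(A\), adjoin the open set \(\mathbb{R}^n\setminus A\) to obtain an open cover of \(C\), extract a finite subcover, and discard \(\mathbb{R}^n\setminus A\).
\item \emph{Set up bisection.} To show \(C\) is compact, suppose an open cover \(\{U_i\}_{i\in I}\) admits no finite subcover of \(C\). Split \(C\) into \(2^n\) closed subcubes of half the side length. At least one of them also admits no finite subcover. Iterating gives nested closed cubes \(C\supseteq C_1\supseteq C_2\supseteq\cdots\) with diameters \(2^{-k}\operatorname{diam}C\), none of which is finitely covered.
\item \emph{Derive the contradiction.} By the nested interval property, applied coordinatewise, there is a point \(p\in\bigcap_k C_k\). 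Then \(p\in U_{i_0}\) for some \(i_0\), and since \(U_{i_0}\) is open, \(B(p,\varepsilon)\subseteq U_{i_0}\) for some \(\varepsilon>0\). Once \(\operatorname{diam}C_k<\varepsilon\), we get \(C_k\subseteq U_{i_0}\), a one-element subcover, which is a contradiction.
\end{enumerate}

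The main obstacle is the existence of \(p\) in step 3. This is exactly where the completeness of \(\mathbb{R}\) enters. Concretely, in each coordinate take \(p_j\) to be the supremum of the left endpoints of the nested intervals; this supremum lies in every interval. Everything else is routine bookkeeping.
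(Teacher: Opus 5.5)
The paper gives no proof of this statement. Theorem~\ref{heine-borel} is quoted as the classical Heine--Borel theorem and invoked as a standard fact in Section~\ref{representing curve}, so there is no argument of the authors' to compare yours against.

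Your proposal is a correct and complete proof, and it works directly from the open-cover notion of compactness in Definition~\ref{compact set}.

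\emph{Forward direction.} The nested balls \(B(0,k)\) give boundedness. The increasing open sets \(U_k=\{y:\|y-x\|>1/k\}\) have union \(\mathbb{R}^n\setminus\{x\}\supseteq A\), so they yield a ball \(B(x,1/K)\) that misses \(A\). This half uses only the metric, so it holds in any metric space.

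\emph{Converse.} Each step checks out. The closed-subset reduction is valid because \(\mathbb{R}^n\setminus A\) is open. The pigeonhole over the \(2^n\) subcubes correctly propagates the absence of a finite subcover. Since \(p\in C_k\), the bound \(\operatorname{diam}C_k<\varepsilon\) does force \(C_k\subseteq B(p,\varepsilon)\subseteq U_{i_0}\).

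You also place the one non-formal ingredient correctly. Write \(C_k=\prod_j[a^{(j)}_k,b^{(j)}_k]\). Completeness of \(\mathbb{R}\) enters through \(p_j=\sup_k a^{(j)}_k\), which satisfies \(a^{(j)}_k\le p_j\le b^{(j)}_m\) for all \(k,m\) because the intervals are nested.

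The other common routes are Bolzano--Weierstrass plus a Lebesgue-number argument, or Tychonoff applied to \([-R,R]^n\). Your bisection argument is more self-contained than either and needs nothing beyond the definition the paper states, so it would serve well as the missing proof.
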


\begin{definition}\label{injectivity}
    Let \(f:X\rightarrow Y\) be a function. We say that \(f\) is injective if for every \(\{x,y\}\subseteq X\) such that \(x\neq y\), we have that \(f(x)\neq f(y)\).
\end{definition}
\begin{definition}\label{continuous}
    Given we have a function \(f:X\rightarrow Y\) with \(X\) and \(Y\) being topological spaces, we say that \(f\) is \textit{continuous} if, for every \(U\subseteq Y\) open, its inverse image under \(f\), i.e. \(f^{-1}[U]\), is open.
\end{definition}

\begin{definition}\label{open function}
    Given we have a function \(f:X\rightarrow Y\) with \(X\) and \(Y\) being topological spaces, we say that \(f\) is \textit{open} if, for every \(U\subseteq Y\) open, its image under \(f\), i.e. \(f[U]\), is open.
\end{definition}

\begin{definition}\label{homeomorphism}
    Given we have a function \(f:X\rightarrow Y\) with \(X\) and \(Y\) being topological spaces, we say that \(f\) is an \textit{homeomorphism} if \(f\) is continuous and invertible and its inverse, \(f^{-1}:Y\rightarrow X\) is also continuous. 
\end{definition}

\begin{remark}
    If \(f:X\rightarrow Y\) is an homeomorphism if and only if it is a continuous, open and invertible function.
\end{remark}
\begin{proof}
    Since we already have invertibility and continuity by hypothesis, we just need to prove openness and continuity for the inverse. 
    
    This way, let us assume that \(f\) is an homeomorphism and prove that \(f\) needs to be open. Then, \(U\) being open, we have that \((f^{-1})^{-1}[U]\) is open due to \(f^{-1}\) being continuous. Furthermore, \((f^{-1})^{-1}[U]=f[U]\).

    \(\therefore f\) is open.

    Let us now prove that \(f\) being open in this scenario implies that \(f^{-1}\) is continuous. If \(f\) is an open function and \(U\subseteq X\) is an open set, we have that \(f[U]\) is open while at the same time \(f[U]=(f^{-1})^{-1}[U]\).

    \(\therefore f^{-1}\) is continuous.
\end{proof}

\begin{lemma}\label{extended proof}
    If \(f:X\rightarrow Y\) is an homeomorphism and \(f=f_n\circ\cdots\circ f_1\), then \(f_i\) is an homeomorphism for every \(i\).
\end{lemma}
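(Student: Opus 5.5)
The plan is to express each inverse $f_i^{-1}$ as a composition of maps already known to be continuous. The statement as written does not hold without further assumptions, so I first fix which hypotheses are needed and then run that argument.

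\textbf{Necessary hypotheses.} Take the inclusion $f_1:\mathbb{R}\to\mathbb{R}^2$, $x\mapsto(x,0)$, and the projection $f_2:\mathbb{R}^2\to\mathbb{R}$. Then $f_2\circ f_1=\mathrm{id}_{\mathbb{R}}$ is a homeomorphism, but neither factor is invertible. So I will prove the lemma under these hypotheses: each $f_i:X_{i-1}\to X_i$ (with $X_0=X$, $X_n=Y$) is continuous and bijective. This matches the intended use, where each attention block is a continuous map that is a bijection onto the next latent space. A weaker assumption also suffices, namely that each $f_i$ is continuous and each partial composite is surjective onto its target. The first step will be to record what the bare composition gives by itself: $f_1$ must be injective and $f_n$ must be surjective.

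\textbf{Main step.} Assume the $f_i$ are continuous bijections. For each $i$, the composite
\[
f_{i-1}\circ\cdots\circ f_1\circ f^{-1}\circ f_n\circ\cdots\circ f_{i+1}
\]
is a map $X_i\to X_{i-1}$. It is continuous, because $f^{-1}$ is continuous by Definition \ref{homeomorphism} and compositions of continuous maps are continuous: preimages of open sets are pulled back step by step, as in Definition \ref{continuous}. I then check that this composite is the two-sided inverse of $f_i$.
\begin{itemize}
\item Right inverse: applying $f_i$ after it gives $(f_i\circ\cdots\circ f_1)\circ f^{-1}\circ(f_n\circ\cdots\circ f_{i+1})$. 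Because $f_n\circ\cdots\circ f_{i+1}$ is a bijection, this equals $(f_n\circ\cdots\circ f_{i+1})^{-1}\circ f\circ f^{-1}\circ(f_n\circ\cdots\circ f_{i+1})$, which is the identity.
\item Left inverse: the same manipulation, using that $f_{i-1}\circ\cdots\circ f_1$ is a bijection, gives the identity on the other side.
\end{itemize}
Hence $f_i^{-1}$ equals this continuous composite, and $f_i$ is a homeomorphism. As an alternative route, I could instead show that $f_i$ is open and invoke the preceding Remark.

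\textbf{Main obstacle.} The difficulty is not the algebra but the hypotheses: the literal statement, with no conditions on the factors, is false, as the counterexample shows. The proof in the paper should therefore state the needed hypotheses explicitly, either that every $f_i$ is a continuous bijection or that the intermediate spaces are exactly the images of the partial composites. The remaining verification is routine.
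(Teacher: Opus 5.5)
Your counterexample is correct, so the lemma cannot be proved as stated. Your corrected version and its proof are sound, and the paper's own proof confirms your diagnosis. The paper argues by induction. It first declares, ``without loss of generality,'' that every $f_i$ maps $X$ to $X$. It then opens the inductive step with ``since $f_i$ is a homeomorphism for every $i\le n$, $g=f_n\circ\cdots\circ f_1$ is a homeomorphism.'' That sentence assumes what is to be shown. The induction hypothesis yields the claim for the factors of $g$ only if $g$ is already known to be a homeomorphism, and nothing about $f=f_{n+1}\circ g$ forces this. What the step actually verifies (bijectivity, continuity and openness of $f_{n+1}$) is only the valid fact that if $g$ and $f_{n+1}\circ g$ are homeomorphisms, so is $f_{n+1}=f\circ g^{-1}$. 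The reduction to self-maps does not rescue the statement either. On $\mathbb{N}$ with the discrete topology, $s(k)=k+1$ and $t(k)=\max(k-1,0)$ satisfy $t\circ s=\mathrm{id}$, yet neither is bijective. The ``only if'' half of Remark \ref{concatenation of homeomorphisms}, which rests on this lemma, inherits the same defect.

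Your route drops the induction and writes $f_i^{-1}=A\circ f^{-1}\circ B$, with $A=f_{i-1}\circ\cdots\circ f_1$ and $B=f_n\circ\cdots\circ f_{i+1}$. This follows from $f=B\circ f_i\circ A$ once $A$ and $B$ are bijective, and it is visibly continuous. It treats every position $i$ at once, contains the paper's salvageable step as the case $A=g$, $B=\mathrm{id}$, and makes explicit the bijectivity the paper uses silently. Your weaker hypothesis also suffices. If each $P_i=f_i\circ\cdots\circ f_1$ is onto $X_i$, injectivity of $f=(f_n\circ\cdots\circ f_{i+1})\circ P_i$ makes each $P_i$ bijective, hence each $f_i=P_i\circ P_{i-1}^{-1}$ is bijective.

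One refinement to your ``necessary hypotheses'' framing: bijectivity of the factors is not always an extra assumption. Take continuous self-maps of $\mathbb{R}^m$ and set $g=f_n\circ\cdots\circ f_2$. Then $f_1$ is injective, hence open by invariance of domain. Its image is closed: if $f_1(x_k)\to p$, then $x_k=f^{-1}(g(f_1(x_k)))\to q:=f^{-1}(g(p))$, so $p=f_1(q)$. By connectedness $f_1$ is a homeomorphism, and one can induct on $g=f\circ f_1^{-1}$. So the paper's self-map version is in fact true for $X=\mathbb{R}^m$, but only through a genuinely topological ingredient that its proof never supplies.
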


\begin{proof}
    We will show this results by mathematical induction. The initial step, i.e. \(f=f_1\) is true by hypothesis. Thus, let us take our induction hypothesis to be that our statement is true for \(g=f_n\circ\cdots\circ f_1\) for \(n\in \mathbb{N}\). 

    Without loss of generality, let us assume \(f_i:X\rightarrow X\) for every \(i\in\{1,\cdots n+1\}\) and take \(f:=f_{n+1}\circ g\) to be a homeomorphism. This way, let us begin by proving that \(f_{n+1}\) is injective.

    Since \(f_i\) is an homeomorphism for every \(i\in\{1,\cdots n\}\), \(g\) is an homeomorphism as well. Thus, let us take \(\{x,y\} \subseteq X\) such that \(x\neq y\). Then \(\exists \{u,w\} \subseteq X\) such that \(g(u)=x\) and \(g(w)=y\) since \(g\) is surjective. In addition, \(u\neq w\) due to \(g\) being a function and so it follows that \(f(u)\neq f(w)\) because of \(f\) being an homeomorphism. This way we have that \[f_{n+1}(x)=f_{n+1}(g(u)=f(u)\neq f(w)=f_{n+1}(g(w))=f_{n+1}(y).\]

    \(\therefore f_{n+1}\) is injective.

    To prove surjectivity, let us take \(y\in X\). Then \(x\in X\) such that \(f(x)=y\), but then \(f_{n+1}(g(x))=y\) where \(g(x)\in X\).

    \(\therefore f_{n+1}\) is surjective.

    Moving on for continuity, let us tke \(U\subseteq X\) open \(\Rightarrow f^{-1}[U]\) is open \(\Rightarrow g[f^{-1}[U]]\) is open, where we have that \[g[f^{-1}[u]]=g[(f_{n+1}\circ g)^{-1}[U]]=g\circ g^{-1} \circ f_{n+1}^{-1}[U]=f_{n+1}^{-1}[U].\]

    \(\therefore f_{n+1}\) is continuous.

    Finally, to prove that \(f_{n+1}\) is open, let us take \(U\subseteq X\) open \(\Rightarrow g^{-1}[U]\) is open out of continuity from \(g\) \(\Rightarrow f[g^{-1}[U]]\) is open out of \(f\) being open. This way we arrive to \[f\circ g^{-1}[U]=f_{n+1}\circ g\circ g^{-1}[U]=f_{n+1}[U].\]

    \(\therefore f_{n+1}\) is open.

    \(\therefore f_{n+1}\) is an homeomorphism.
    
\end{proof}

\begin{remark}\label{concatenation of homeomorphisms}
    If a function \(f:X\rightarrow Y\) is such that \(f=f_1\circ f_2\circ\cdots \circ f_n\), then \(f\) is an homeomorphism if an only if \(f_i\) is an homeomorphism for every \(i\).
\end{remark}
\begin{proof}
    The proof that if \(f_i\) being an homeomorphism for every \(i\) implies that \(f\) is an homeomorphism too comes directly from the fact that injectivity, surjectivity and being continuous are properties preserved by the finite composition of functions.

    The proof that \(f\) being an homeomorphism forces \(f_i\) to be an homeomorphism for every finite \(i\) follows form Lemma \ref{extended proof}.
\end{proof}  

\subsection{Supplementary Figures}\label{sec:sup figs}

In this section, we provide more qualitative examples on the performance of our proposed ViTC-UNet when using DINOv2 as encoder for segmenting anatomical structures on CT and MRI modalities.

\begin{figure}[h!]
    \centering
    \newcommand{\imgcell}[1]{%
        \begin{minipage}[c][1.9cm][c]{\linewidth}
            \centering
            \includegraphics[width=\linewidth]{#1}
        \end{minipage}%
    }
    \newcommand{\tikzimg}[2]{%
        \begin{minipage}[c][1.9cm][c]{\linewidth}
            \centering
            \begin{tikzpicture}
                \node[inner sep=0pt] (image) {\includegraphics[width=\linewidth]{#1}};
                #2
            \end{tikzpicture}
        \end{minipage}%
    }
    \setlength{\tabcolsep}{7pt}
    \renewcommand{\arraystretch}{5.5} 

    \begin{subfigure}{\textwidth}
        \centering
        \begin{tabular}{>{\centering\arraybackslash}p{0.16\textwidth} >{\centering\arraybackslash}p{0.16\textwidth} >{\centering\arraybackslash}p{0.16\textwidth} >{\centering\arraybackslash}p{0.16\textwidth} >{\centering\arraybackslash}p{0.16\textwidth}}
            \textbf{CT Image} & \textbf{Ground Truth} & \textbf{Linear} & \textbf{ViT-UNet hybrid} & \textbf{ViTC-UNet} \\[-2.0em]
            
            \imgcell{examples/overlays/ct/vitc_unet/lungs/slice15_raw.png} &
            \imgcell{examples/overlays/ct/vitc_unet/lungs/slice15_target.png} &
            \imgcell{examples/overlays/ct/linear/lungs/slice15_pred.png} &
            \imgcell{examples/overlays/ct/hybrid/lungs/slice15_pred.png} &
            \imgcell{examples/overlays/ct/vitc_unet/lungs/slice15_pred.png} \\
            
            \imgcell{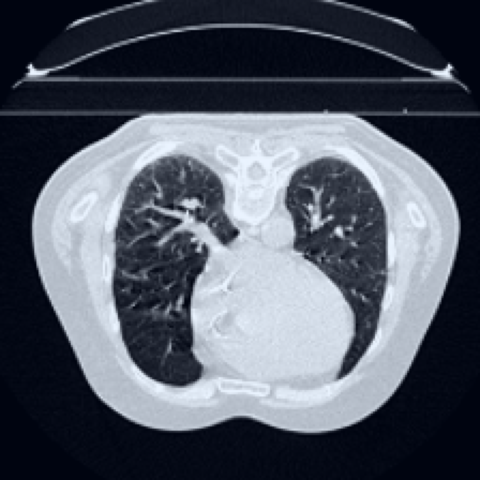} &
            \imgcell{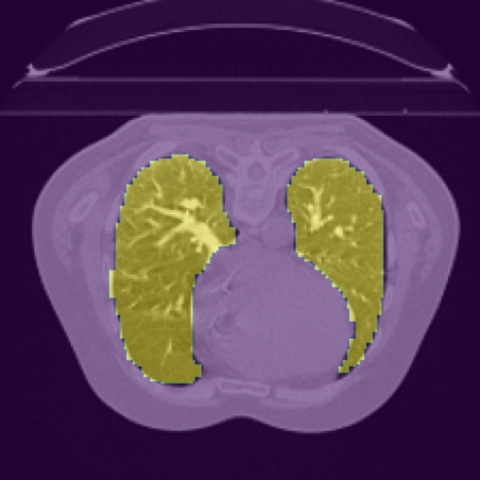} &
            \imgcell{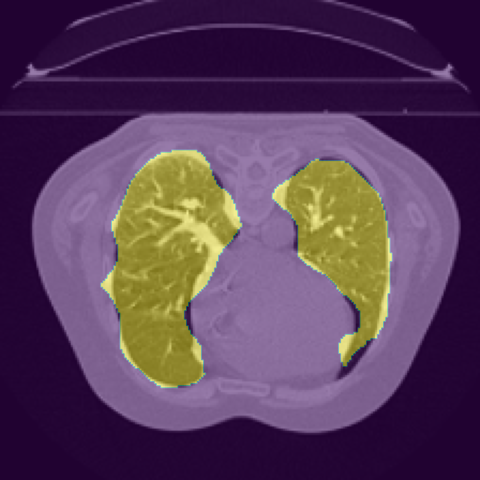} &
            \imgcell{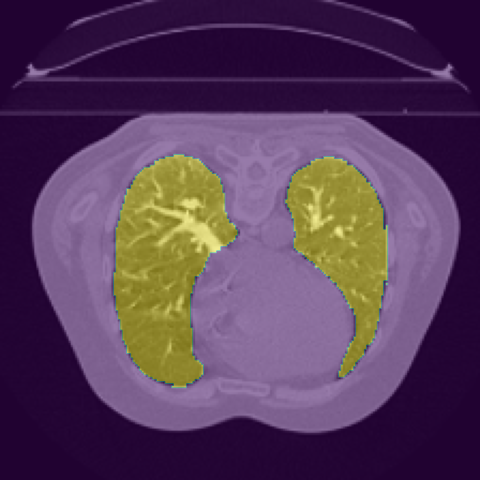} &
            \imgcell{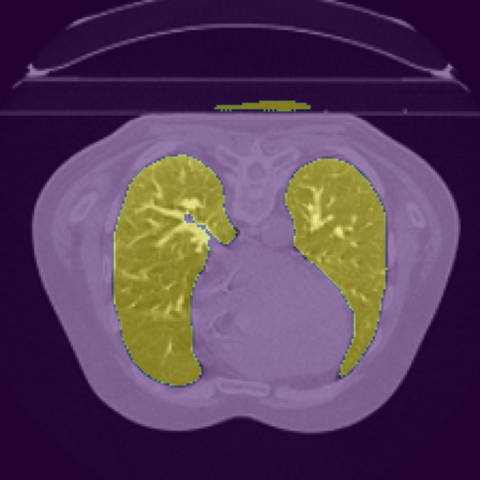}\\

            \imgcell{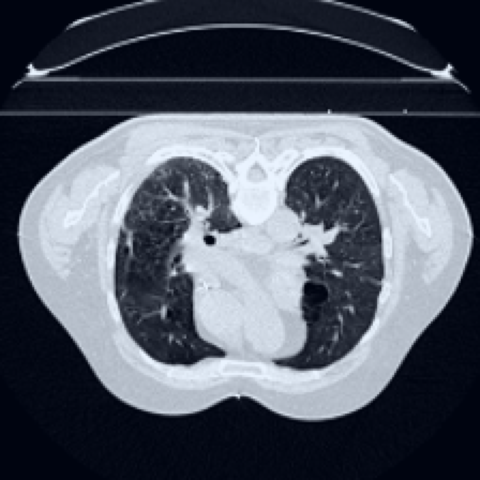} &
            \imgcell{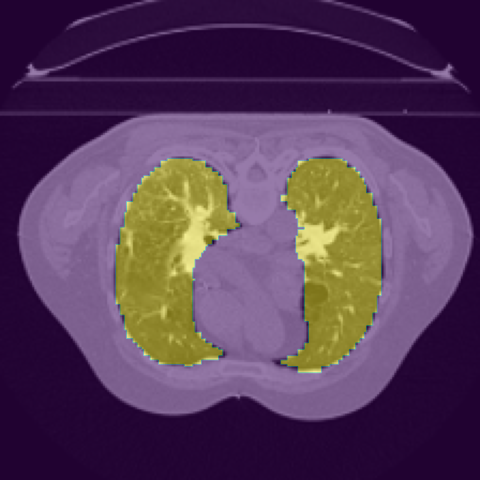} &
            \imgcell{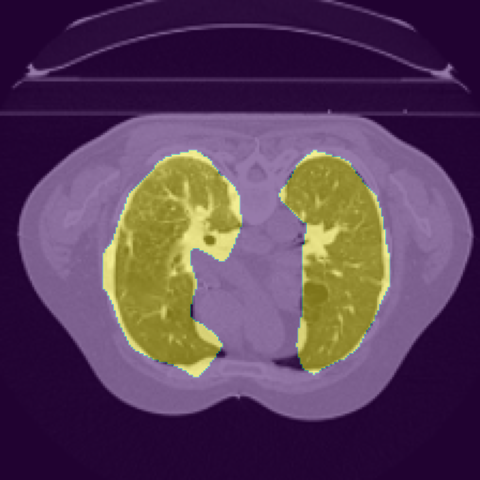} &
            \imgcell{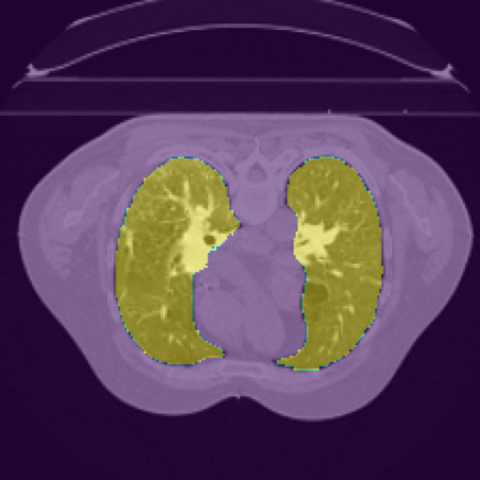} &
            \imgcell{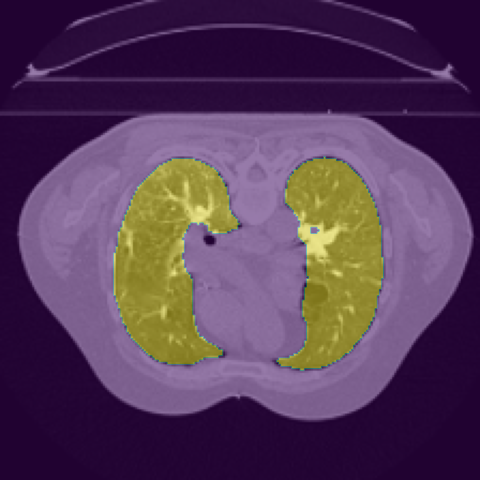}\\

            \imgcell{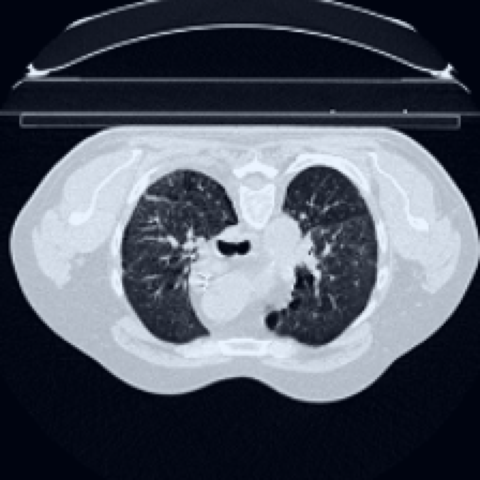} &
            \imgcell{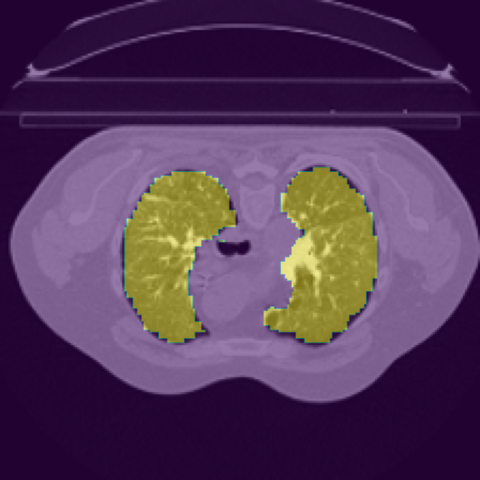} &
            \imgcell{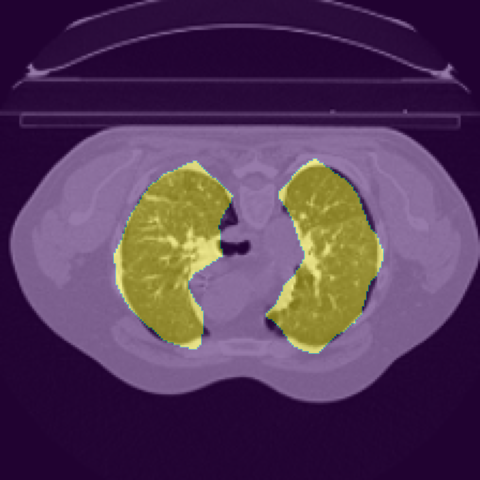} &
            \imgcell{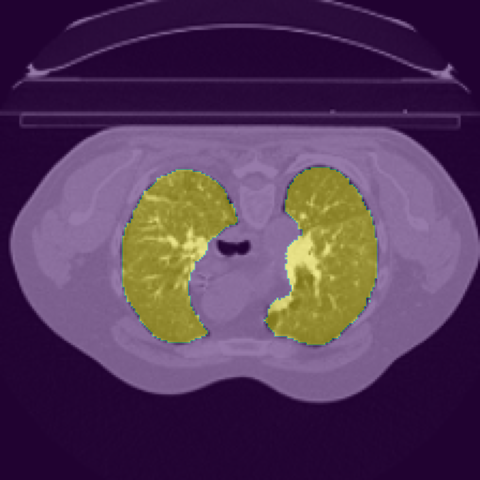} &
            \imgcell{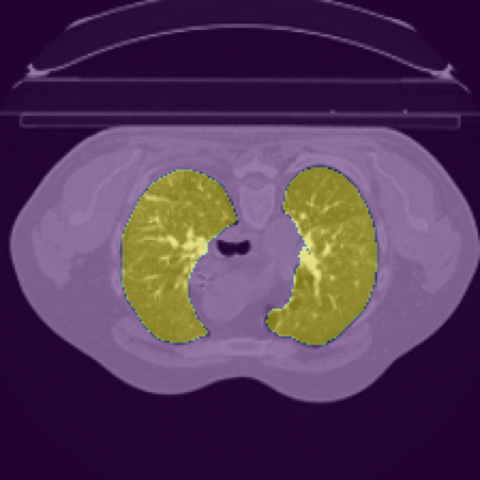}\\

            \imgcell{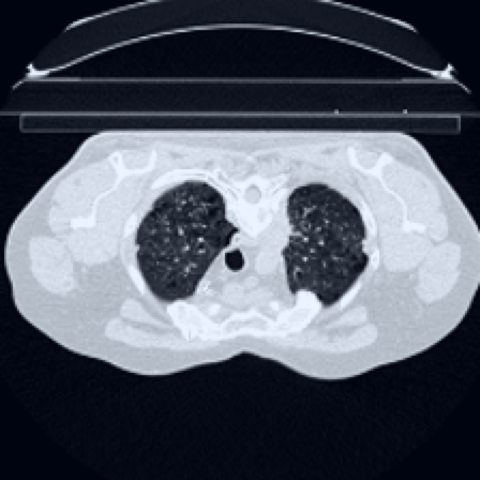} &
            \imgcell{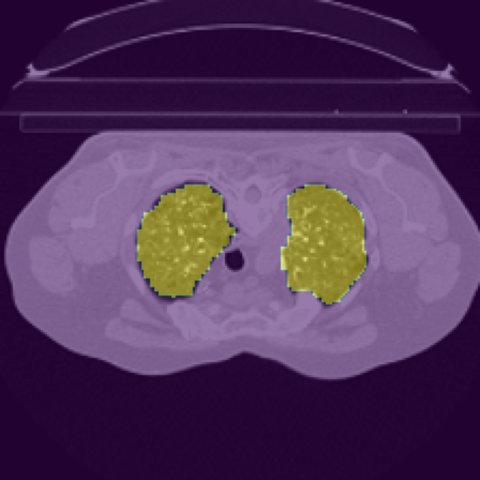} &
            \imgcell{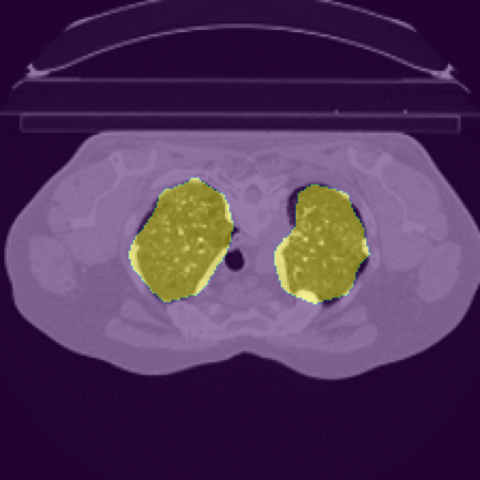} &
            \imgcell{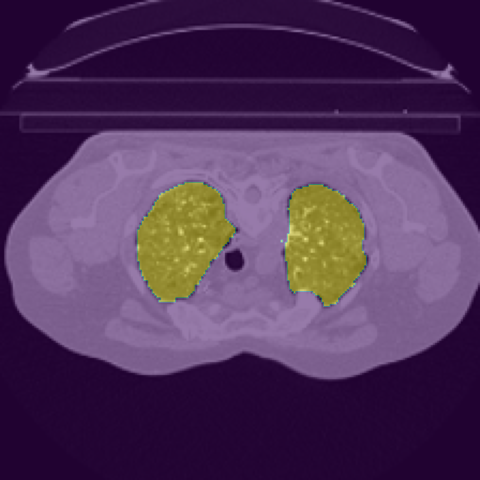} &
            \imgcell{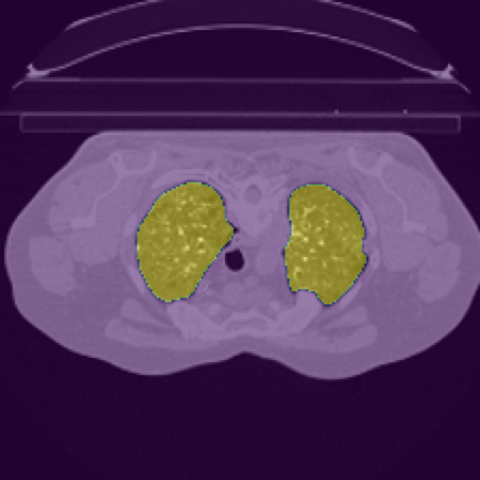}\\
            
        \end{tabular}
    \end{subfigure}

    \caption{Comparison of segmentation performance between a linear baseline, the ViT-UNet hybrid \cite{rmlp}, and our proposed ViTC-UNet utilizing DINOv2 \cite{dinov2} as encoder for semantic segmentation on the Lungs \cite{lungs} dataset.}
\end{figure}

\begin{figure}[h!]
    \centering
    \newcommand{\imgcell}[1]{%
        \begin{minipage}[c][1.9cm][c]{\linewidth}
            \centering
            \includegraphics[width=\linewidth]{#1}
        \end{minipage}%
    }
    \newcommand{\tikzimg}[2]{%
        \begin{minipage}[c][1.9cm][c]{\linewidth}
            \centering
            \begin{tikzpicture}
                \node[inner sep=0pt] (image) {\includegraphics[width=\linewidth]{#1}};
                #2
            \end{tikzpicture}
        \end{minipage}%
    }
    \setlength{\tabcolsep}{7pt}
    \renewcommand{\arraystretch}{5.5} 

    \begin{subfigure}{\textwidth}
        \centering
        \begin{tabular}{>{\centering\arraybackslash}p{0.16\textwidth} >{\centering\arraybackslash}p{0.16\textwidth} >{\centering\arraybackslash}p{0.16\textwidth} >{\centering\arraybackslash}p{0.16\textwidth} >{\centering\arraybackslash}p{0.16\textwidth}}
            \textbf{CT Image} & \textbf{Ground Truth} & \textbf{Linear} & \textbf{ViT-UNet hybrid} & \textbf{ViTC-UNet} \\[-2.0em]
            
            \imgcell{examples/overlays/ct/vitc_unet/totalseg_organs/slice88_raw.png} &
            \imgcell{examples/overlays/ct/vitc_unet/totalseg_organs/slice88_target.png} &
            \imgcell{examples/overlays/ct/linear/totalseg_organs/slice88_pred.png} &
            \imgcell{examples/overlays/ct/hybrid/totalseg_organs/slice88_pred.png} &
            \imgcell{examples/overlays/ct/vitc_unet/totalseg_organs/slice88_pred.png} \\

            \imgcell{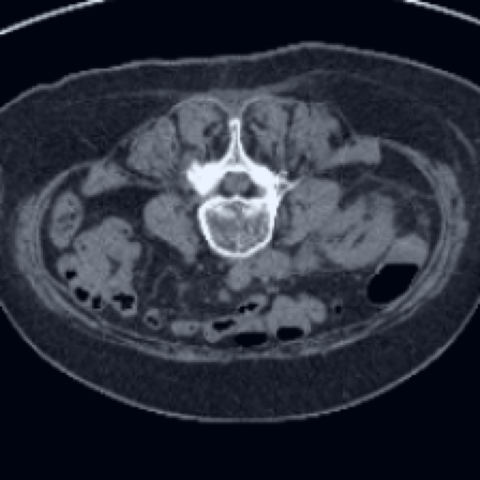} &
            \imgcell{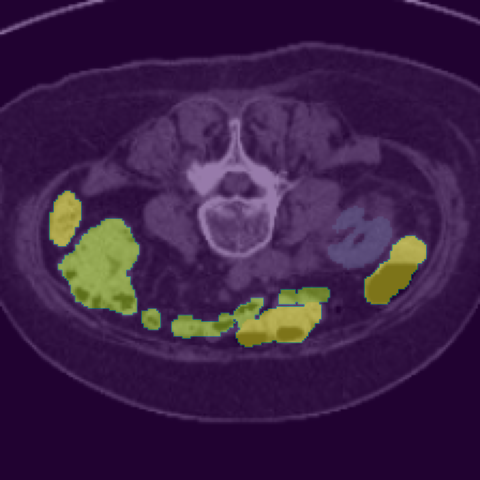} &
            \imgcell{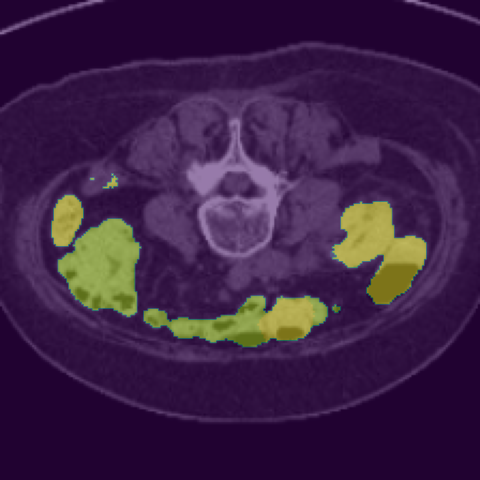} &
            \imgcell{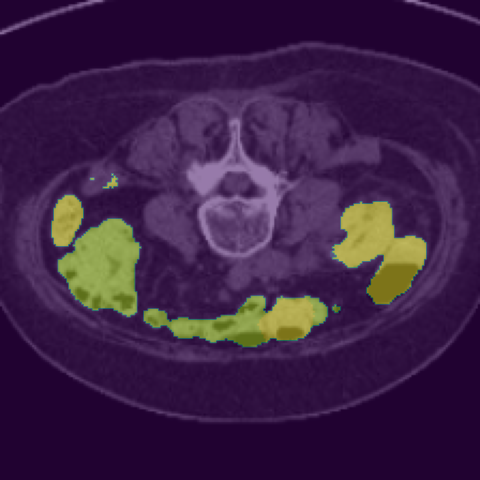} &
            \imgcell{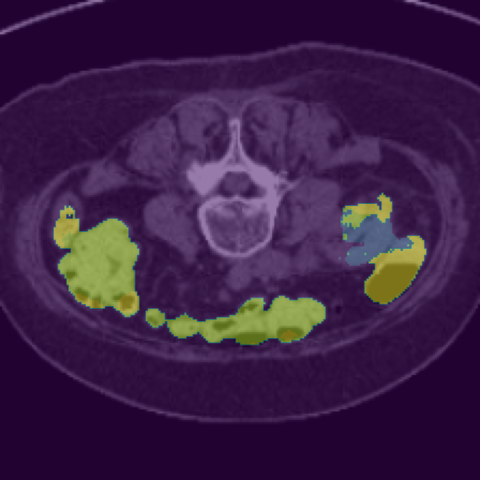}\\

            \imgcell{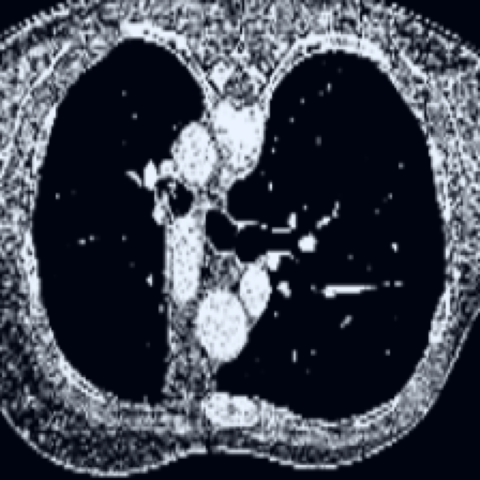} &
            \imgcell{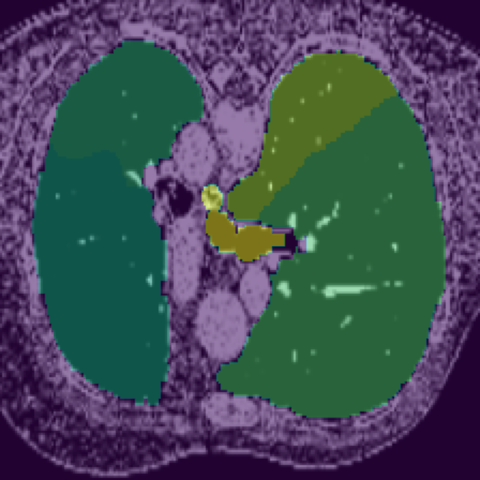} &
            \imgcell{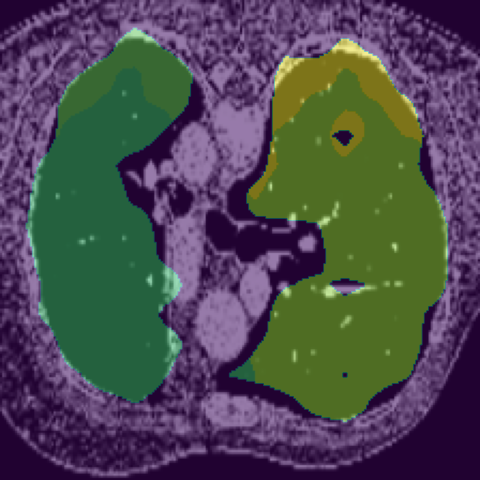} &
            \imgcell{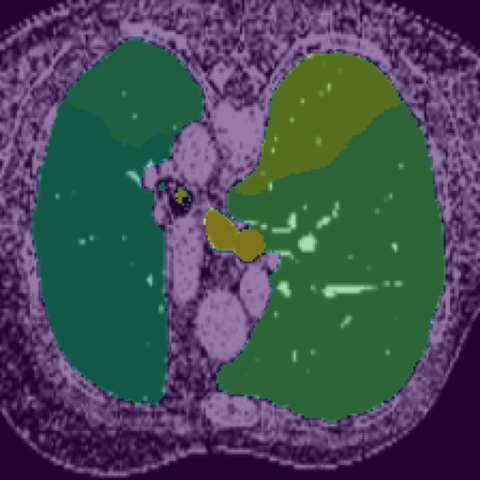} &
            \imgcell{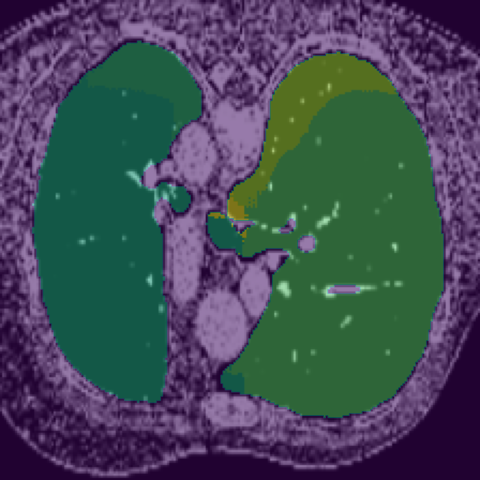}\\

            \imgcell{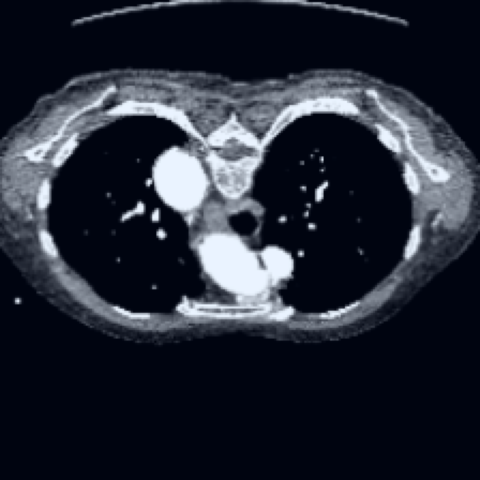} &
            \imgcell{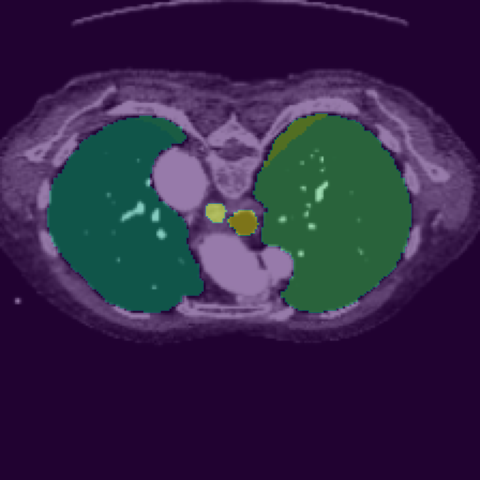} &
            \imgcell{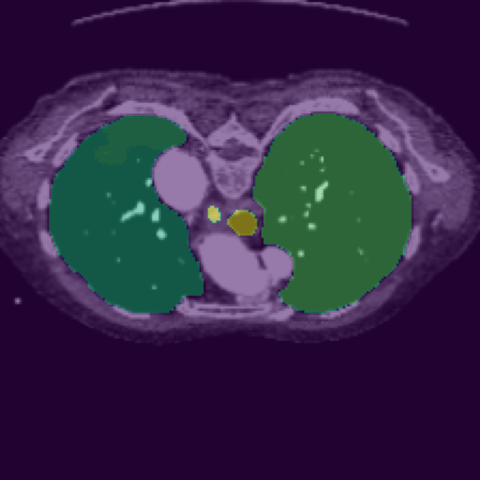} &
            \imgcell{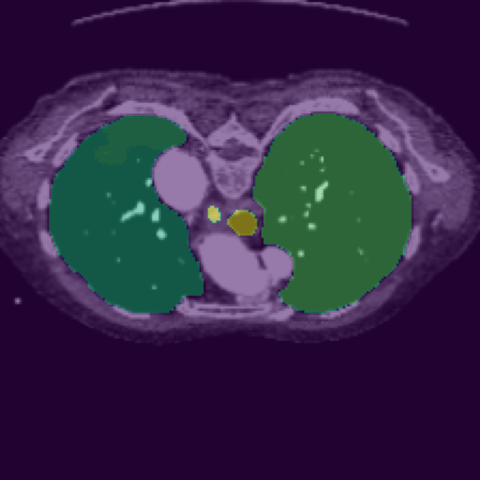} &
            \imgcell{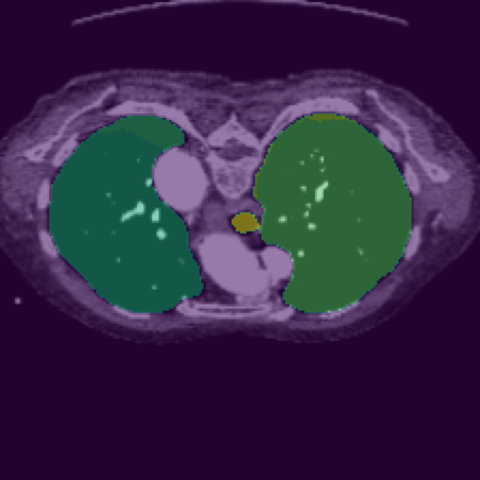}\\
            
        \end{tabular}
    \end{subfigure}

    \caption{Comparison of segmentation performance between a linear baseline, the ViT-UNet hybrid \cite{rmlp}, and our proposed ViTC-UNet utilizing DINOv2 \cite{dinov2} as encoder for semantic segmentation on the Totalseg \cite{totalseg} dataset.}
\end{figure}

\begin{figure}[h!]
    \centering
    \newcommand{\imgcell}[1]{%
        \begin{minipage}[c][1.9cm][c]{\linewidth}
            \centering
            \includegraphics[width=\linewidth]{#1}
        \end{minipage}%
    }
    \newcommand{\tikzimg}[2]{%
        \begin{minipage}[c][1.9cm][c]{\linewidth}
            \centering
            \begin{tikzpicture}
                \node[inner sep=0pt] (image) {\includegraphics[width=\linewidth]{#1}};
                #2
            \end{tikzpicture}
        \end{minipage}%
    }
    \setlength{\tabcolsep}{7pt}
    \renewcommand{\arraystretch}{5.5} 

    \begin{subfigure}{\textwidth}
        \centering
        \begin{tabular}{>{\centering\arraybackslash}p{0.16\textwidth} >{\centering\arraybackslash}p{0.16\textwidth} >{\centering\arraybackslash}p{0.16\textwidth} >{\centering\arraybackslash}p{0.16\textwidth} >{\centering\arraybackslash}p{0.16\textwidth}}
            \textbf{CT Image} & \textbf{Ground Truth} & \textbf{Linear} & \textbf{ViT-UNet hybrid} & \textbf{ViTC-UNet} \\[-2.0em]
            
            \imgcell{examples/overlays/mri/vitc_unet/amos/slice31_raw.png} &
            \imgcell{examples/overlays/mri/vitc_unet/amos/slice31_target.png} &
            \imgcell{examples/overlays/mri/linear/amos/slice31_pred.png} &
            \imgcell{examples/overlays/mri/hybrid/amos/slice31_pred.png} &
            \imgcell{examples/overlays/mri/vitc_unet/amos/slice31_pred.png} \\
            
            \imgcell{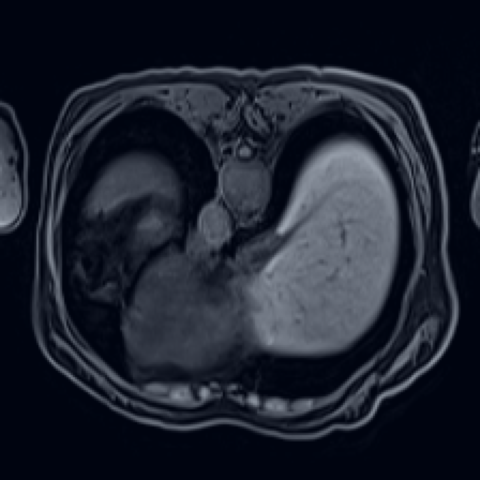} &
            \imgcell{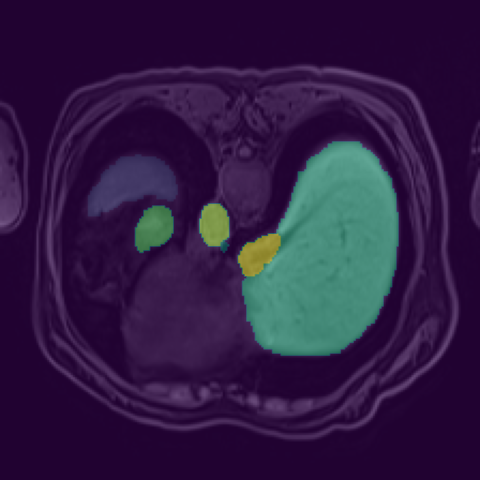} &
            \imgcell{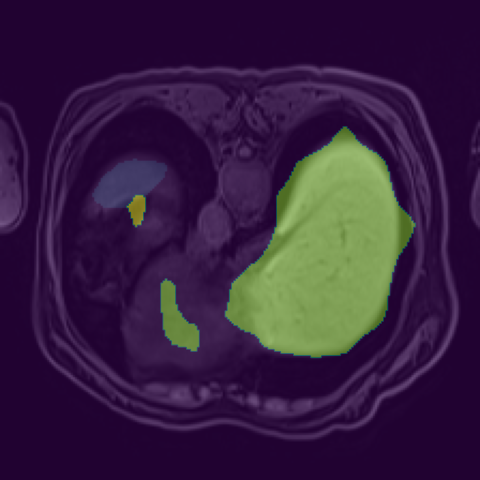} &
            \imgcell{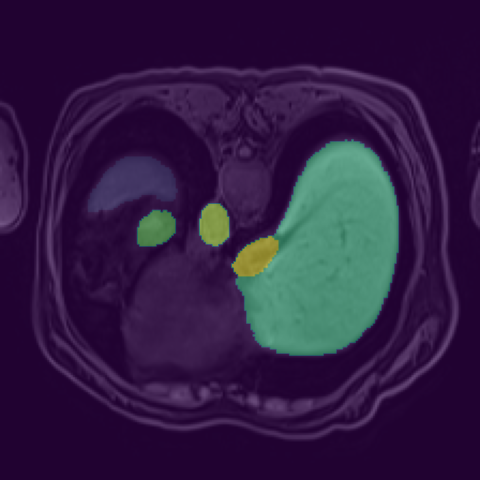} &
            \imgcell{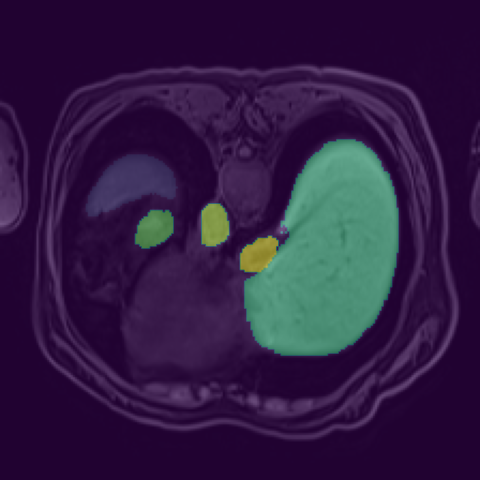} \\

            \imgcell{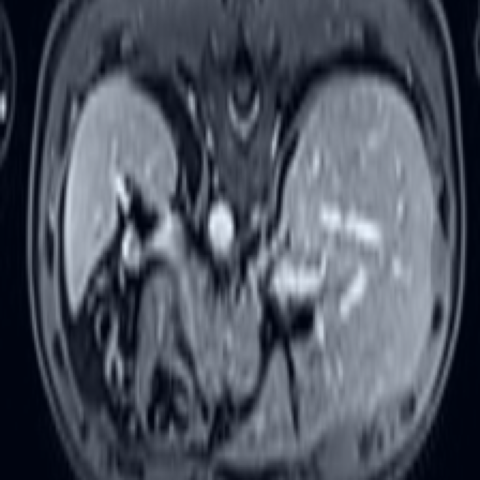} &
            \imgcell{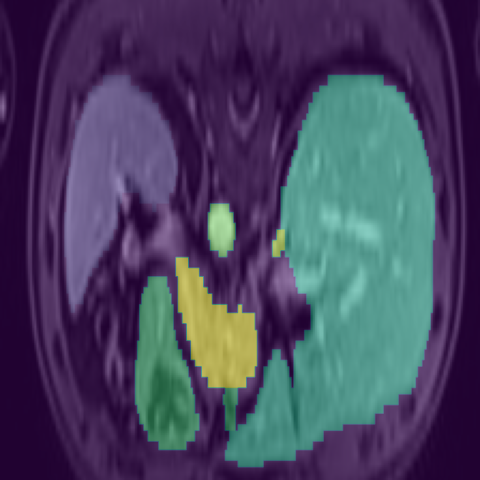} &
            \imgcell{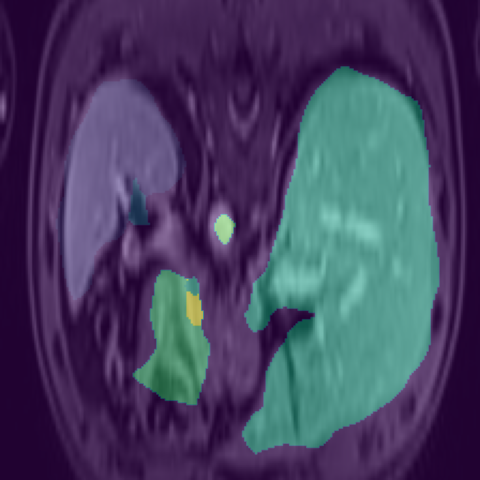} &
            \imgcell{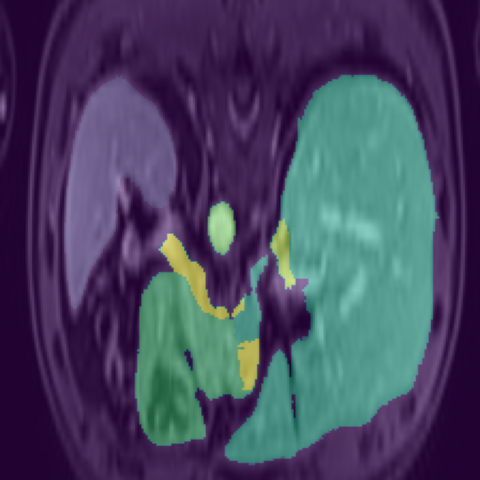} &
            \imgcell{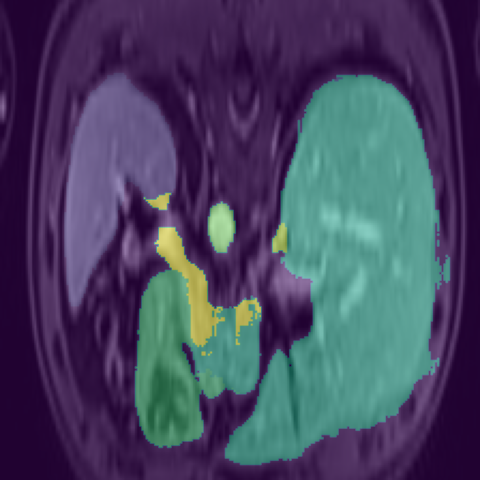} \\

            \imgcell{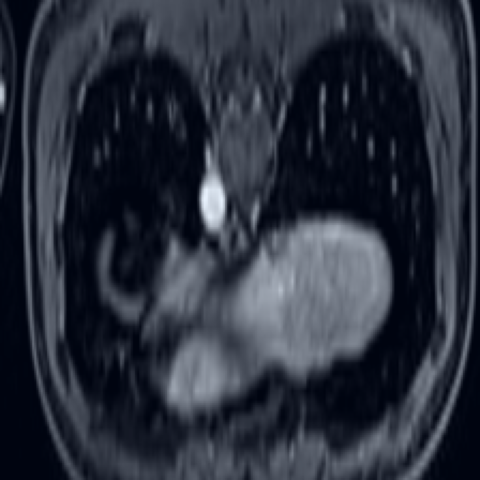} &
            \imgcell{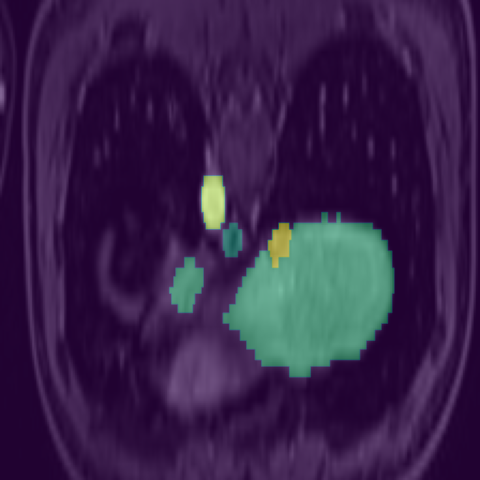} &
            \imgcell{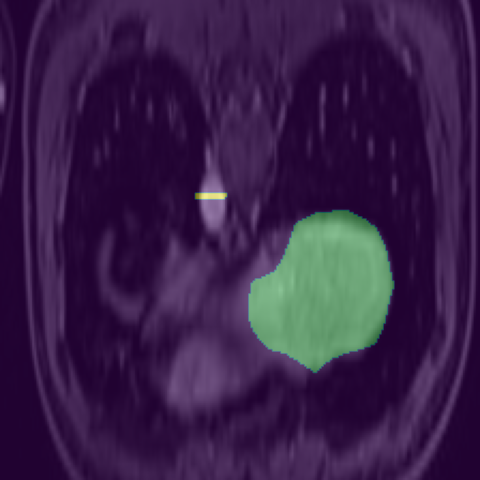} &
            \imgcell{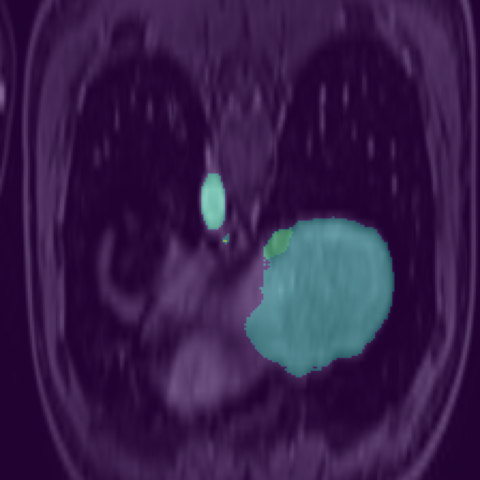} &
            \imgcell{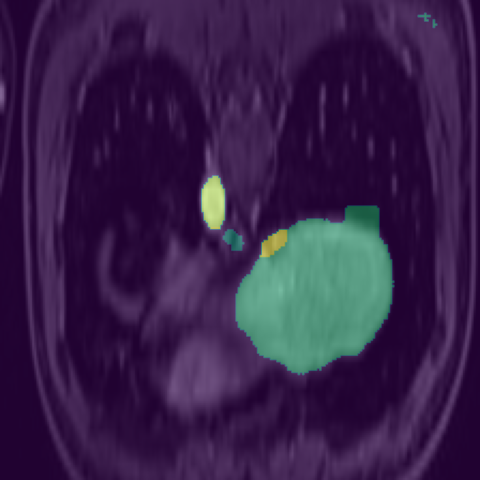} \\

            \imgcell{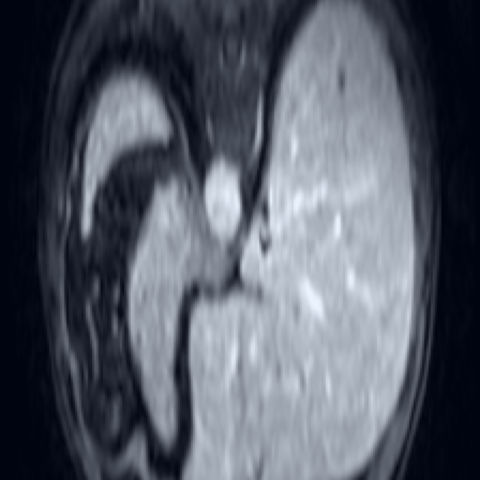} &
            \imgcell{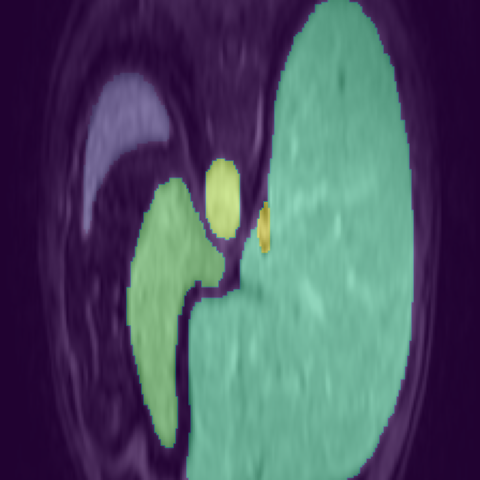} &
            \imgcell{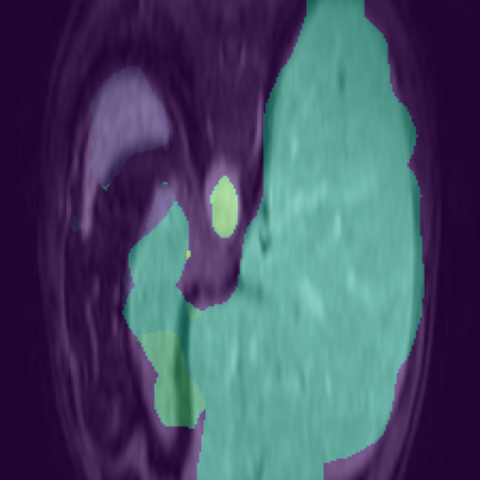} &
            \imgcell{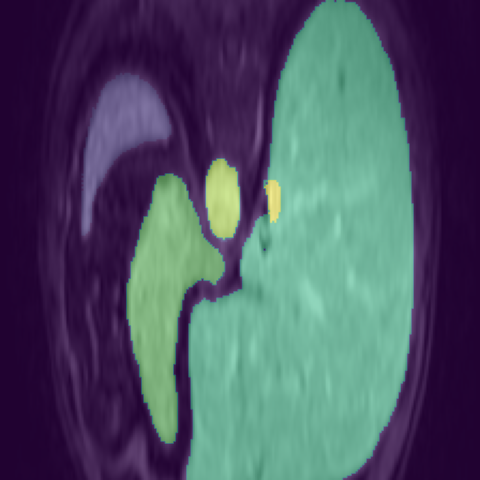} &
            \imgcell{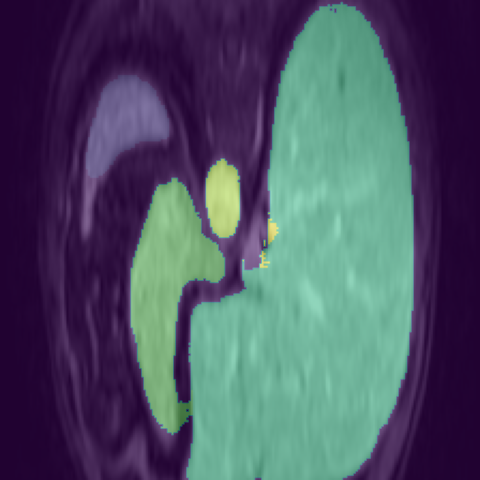} \\
            
        \end{tabular}
    \end{subfigure}

    \caption{Comparison of segmentation performance between a linear baseline, the ViT-UNet hybrid \cite{rmlp}, and our proposed ViTC-UNet utilizing DINOv2 \cite{dinov2} as encoder for semantic segmentation on the AMOS \cite{amos} dataset.}
\end{figure}

\begin{figure}[h!]
    \centering
    \newcommand{\imgcell}[1]{%
        \begin{minipage}[c][1.9cm][c]{\linewidth}
            \centering
            \includegraphics[width=\linewidth]{#1}
        \end{minipage}%
    }
    \newcommand{\tikzimg}[2]{%
        \begin{minipage}[c][1.9cm][c]{\linewidth}
            \centering
            \begin{tikzpicture}
                \node[inner sep=0pt] (image) {\includegraphics[width=\linewidth]{#1}};
                #2
            \end{tikzpicture}
        \end{minipage}%
    }
    \setlength{\tabcolsep}{7pt}
    \renewcommand{\arraystretch}{5.5} 

    \begin{subfigure}{\textwidth}
        \centering
        \begin{tabular}{>{\centering\arraybackslash}p{0.16\textwidth} >{\centering\arraybackslash}p{0.16\textwidth} >{\centering\arraybackslash}p{0.16\textwidth} >{\centering\arraybackslash}p{0.16\textwidth} >{\centering\arraybackslash}p{0.16\textwidth}}
            \textbf{CT Image} & \textbf{Ground Truth} & \textbf{Linear} & \textbf{ViT-UNet hybrid} & \textbf{ViTC-UNet} \\[-2.0em]
            
            \imgcell{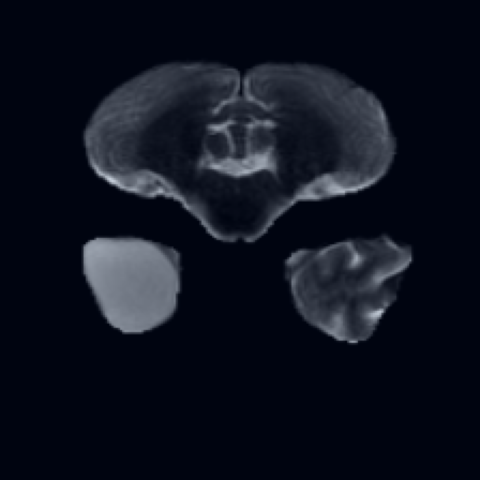} &
            \imgcell{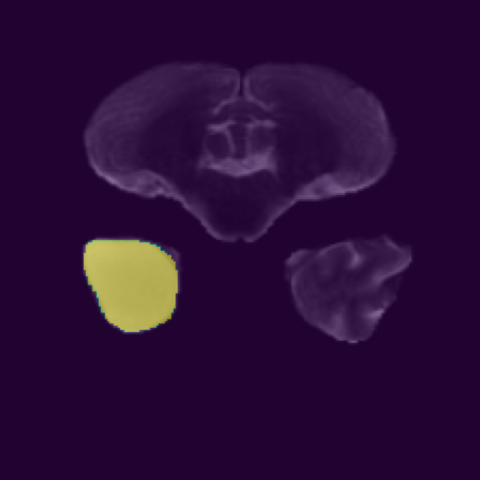} &
            \imgcell{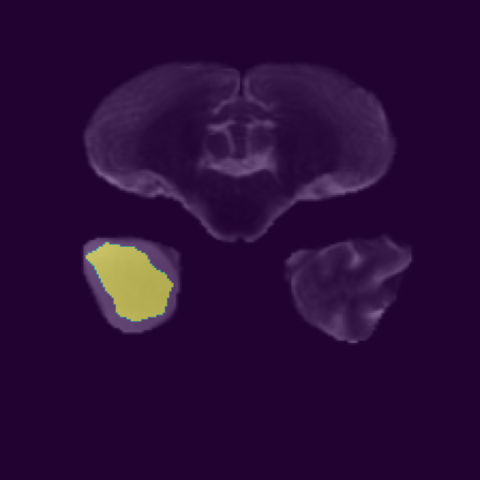} &
            \imgcell{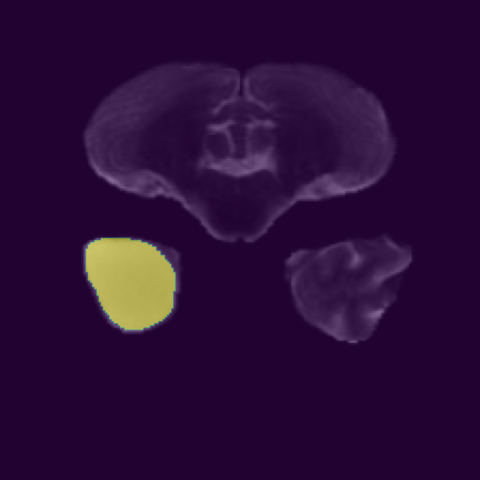} &
            \imgcell{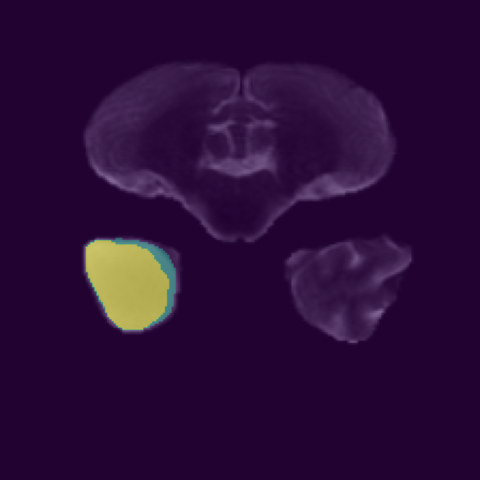} \\
            
            \imgcell{examples/overlays/mri/vitc_unet/brats/slice16_raw.png} &
            \imgcell{examples/overlays/mri/vitc_unet/brats/slice16_target.png} &
            \imgcell{examples/overlays/mri/linear/brats/slice16_pred.png} &
            \imgcell{examples/overlays/mri/hybrid/brats/slice16_pred.png} &
            \imgcell{examples/overlays/mri/vitc_unet/brats/slice16_pred.png} \\

            \imgcell{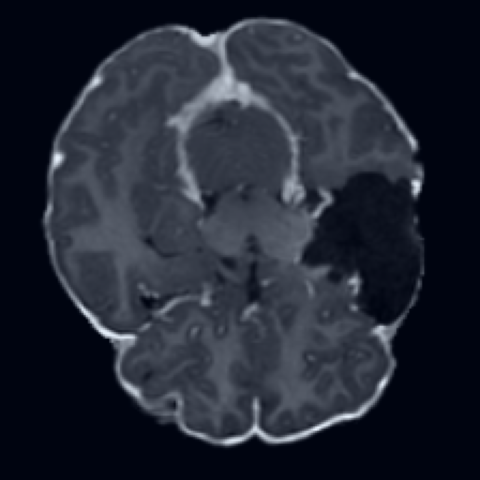} &
            \imgcell{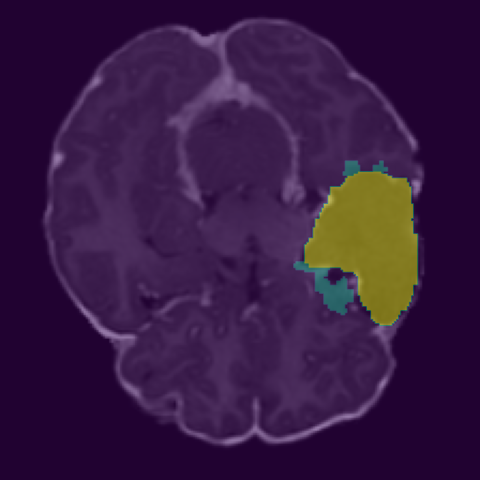} &
            \imgcell{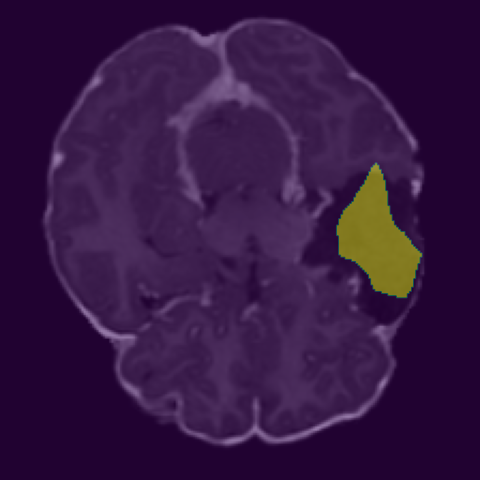} &
            \imgcell{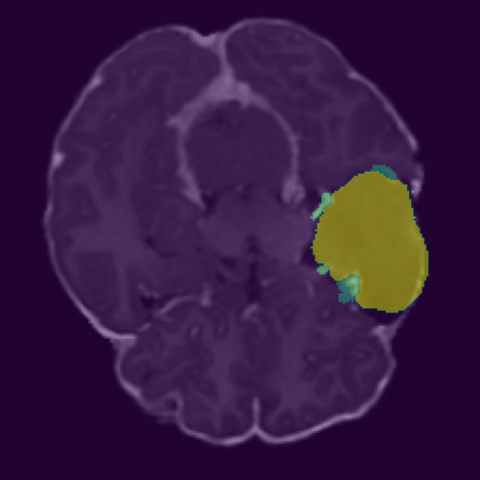} &
            \imgcell{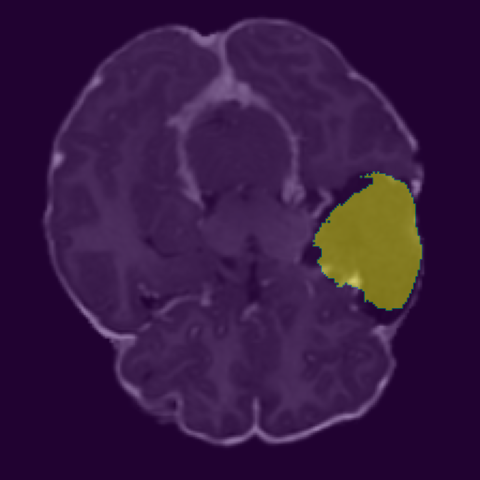} \\

            \imgcell{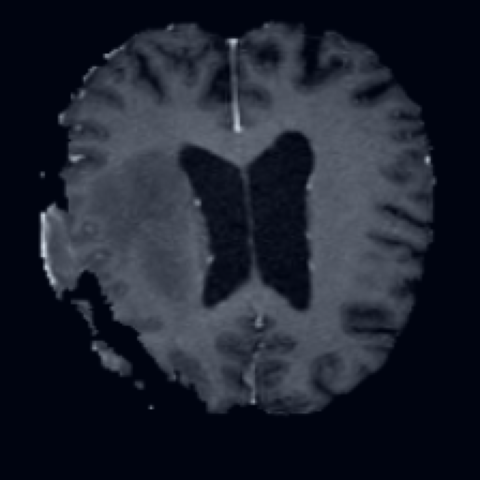} &
            \imgcell{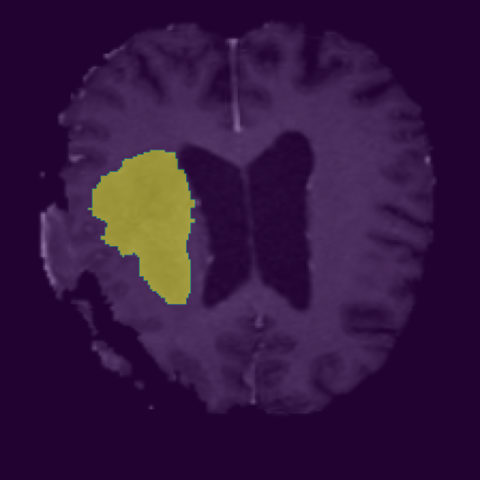} &
            \imgcell{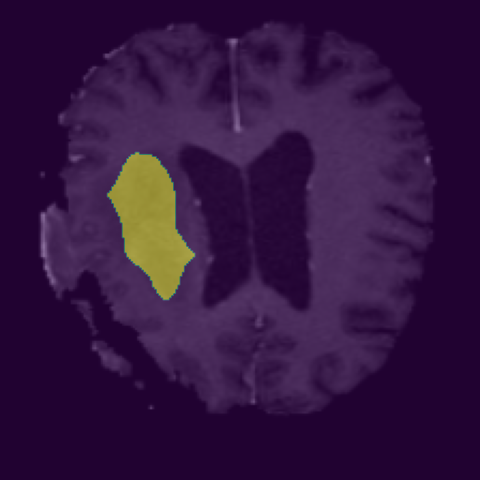} &
            \imgcell{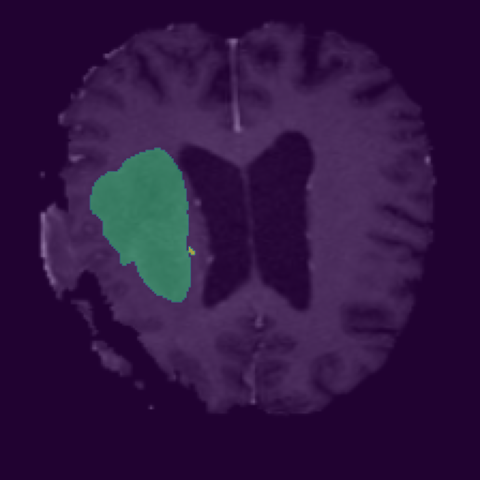} &
            \imgcell{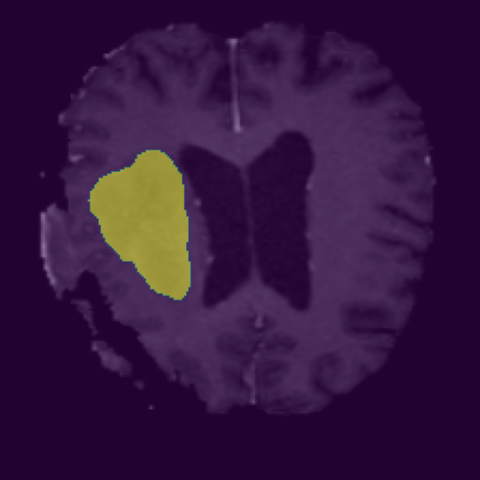} \\

            \imgcell{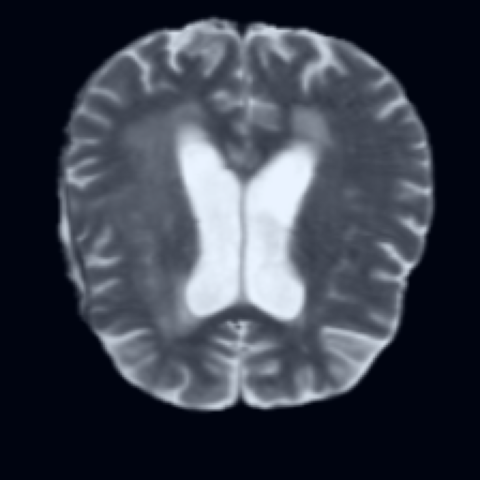} &
            \imgcell{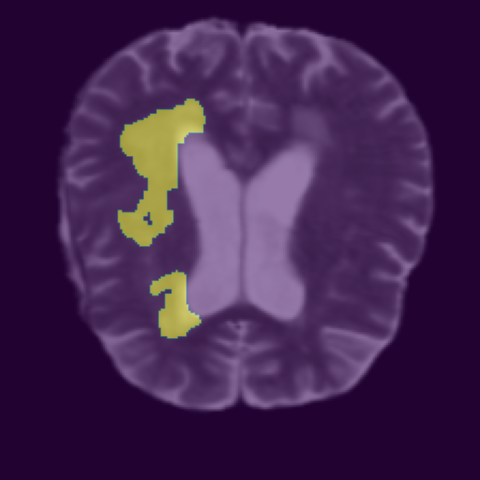} &
            \imgcell{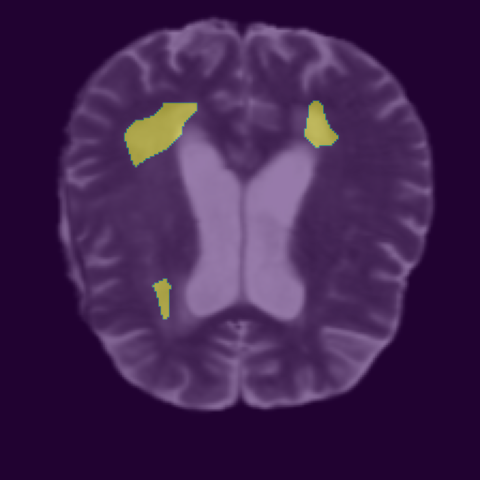} &
            \imgcell{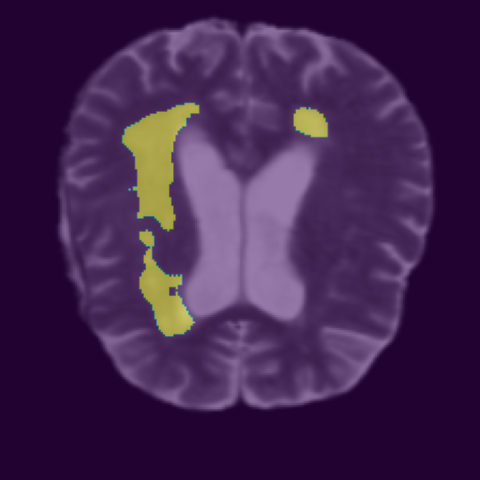} &
            \imgcell{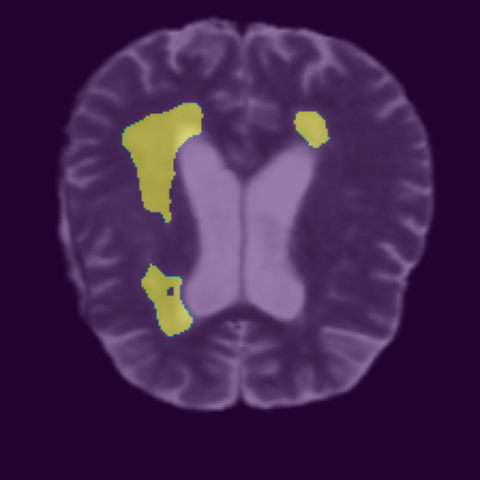} \\

            \imgcell{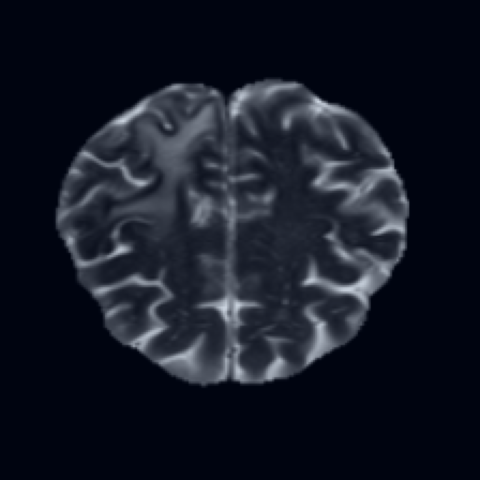} &
            \imgcell{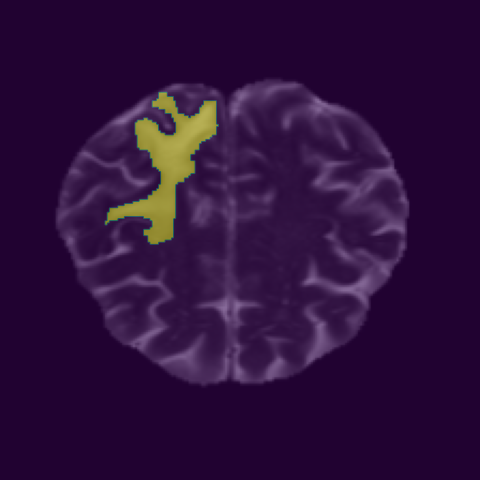} &
            \imgcell{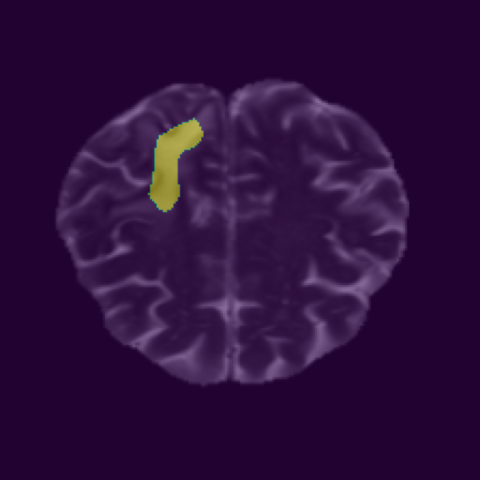} &
            \imgcell{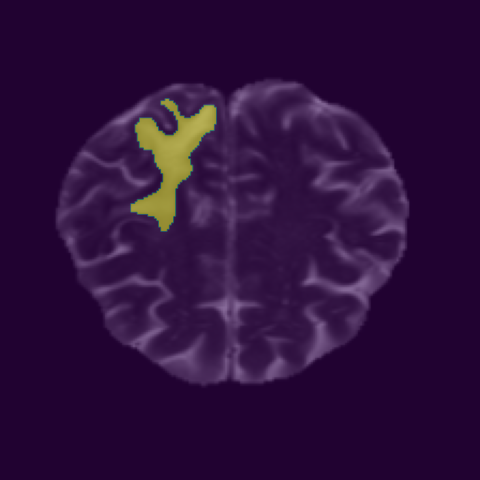} &
            \imgcell{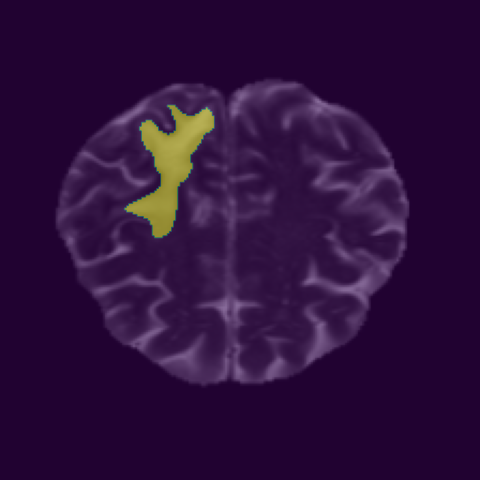} \\

            \imgcell{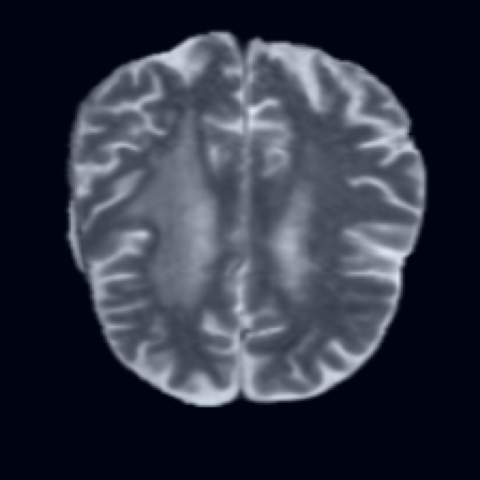} &
            \imgcell{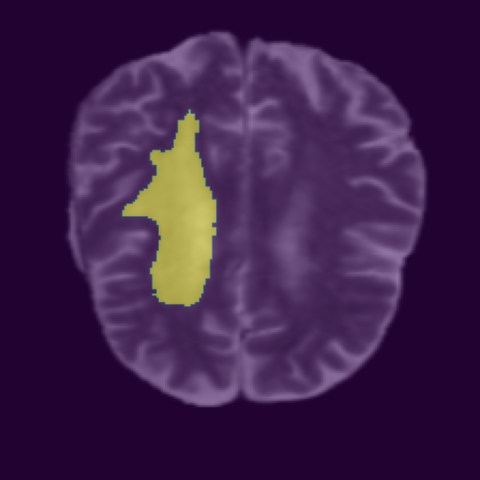} &
            \imgcell{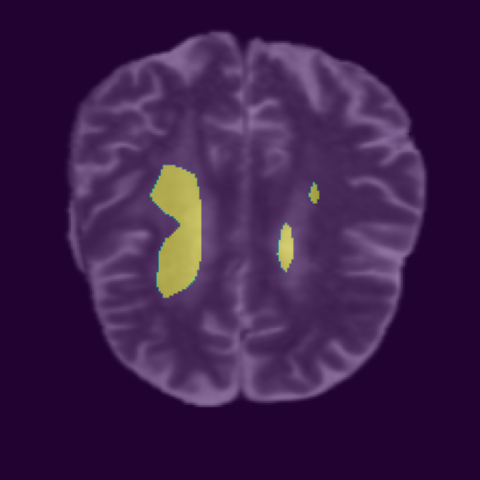} &
            \imgcell{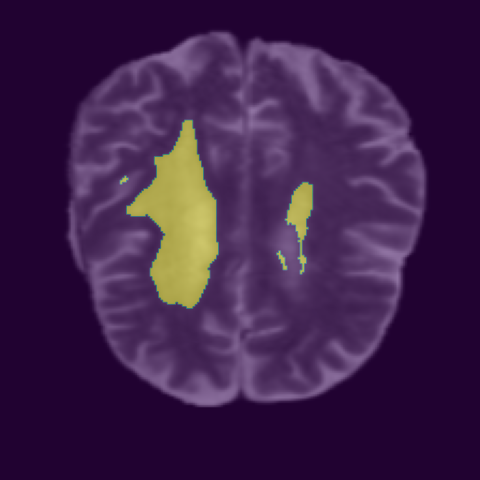} &
            \imgcell{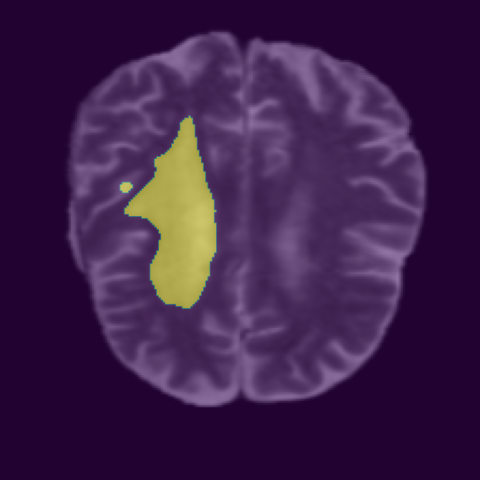} \\
            
        \end{tabular}
    \end{subfigure}

    \caption{Comparison of segmentation performance between a linear baseline, the ViT-UNet hybrid \cite{rmlp}, and our proposed ViTC-UNet utilizing DINOv2 \cite{dinov2} as encoder for semantic segmentation on the BraTS \cite{brats 1, brats 2,brats 3} dataset.}
\end{figure}

\begin{figure}[h!]
    \centering
    \newcommand{\imgcell}[1]{%
        \begin{minipage}[c][1.9cm][c]{\linewidth}
            \centering
            \includegraphics[width=\linewidth]{#1}
        \end{minipage}%
    }
    \newcommand{\tikzimg}[2]{%
        \begin{minipage}[c][1.9cm][c]{\linewidth}
            \centering
            \begin{tikzpicture}
                \node[inner sep=0pt] (image) {\includegraphics[width=\linewidth]{#1}};
                #2
            \end{tikzpicture}
        \end{minipage}%
    }
    \setlength{\tabcolsep}{7pt}
    \renewcommand{\arraystretch}{5.5} 

    \begin{subfigure}{\textwidth}
        \centering
        \begin{tabular}{>{\centering\arraybackslash}p{0.16\textwidth} >{\centering\arraybackslash}p{0.16\textwidth} >{\centering\arraybackslash}p{0.16\textwidth} >{\centering\arraybackslash}p{0.16\textwidth} >{\centering\arraybackslash}p{0.16\textwidth}}
            \textbf{CT Image} & \textbf{Ground Truth} & \textbf{Linear} & \textbf{ViT-UNet hybrid} & \textbf{ViTC-UNet} \\[-2.0em]
            
            \imgcell{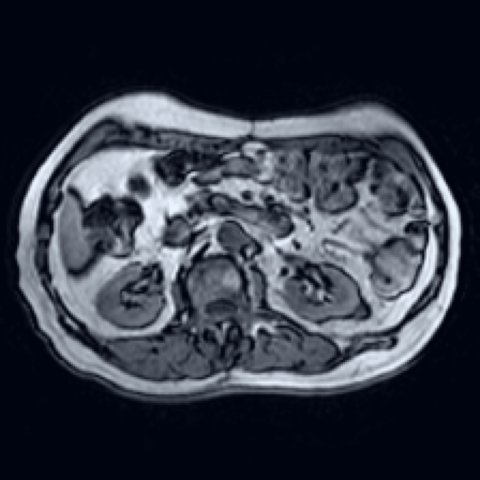} &
            \imgcell{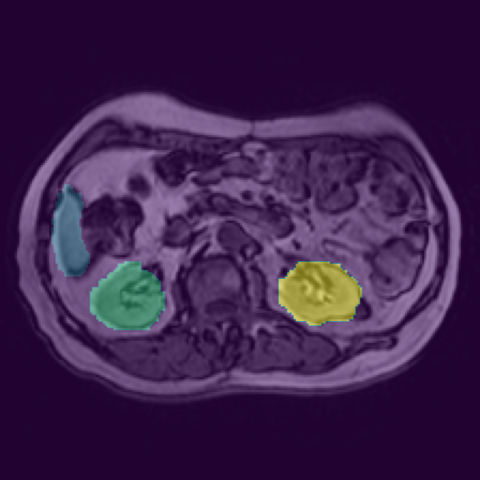} &
            \imgcell{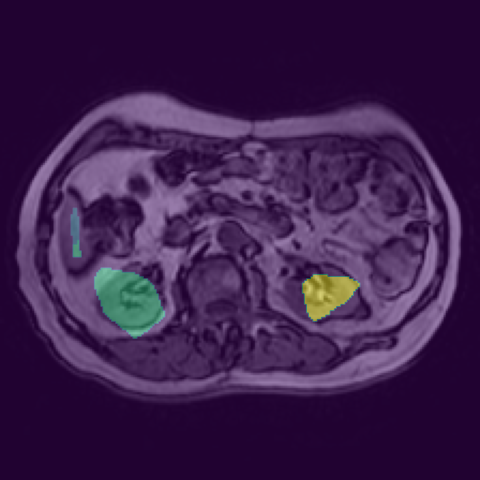} &
            \imgcell{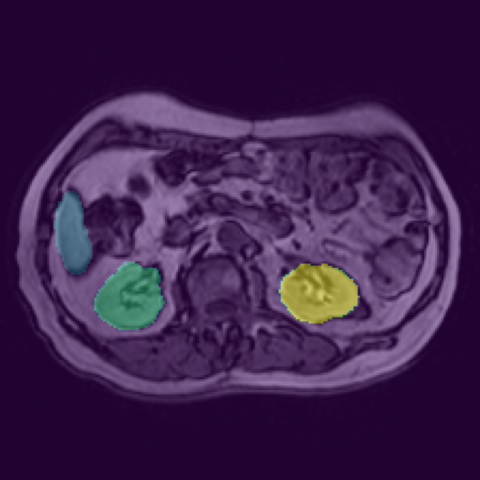} &
            \imgcell{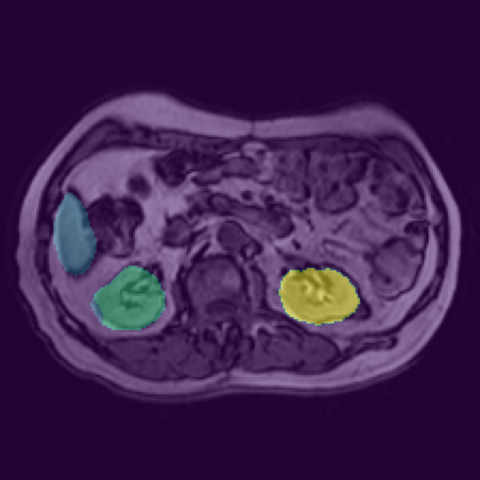} \\
            
            \imgcell{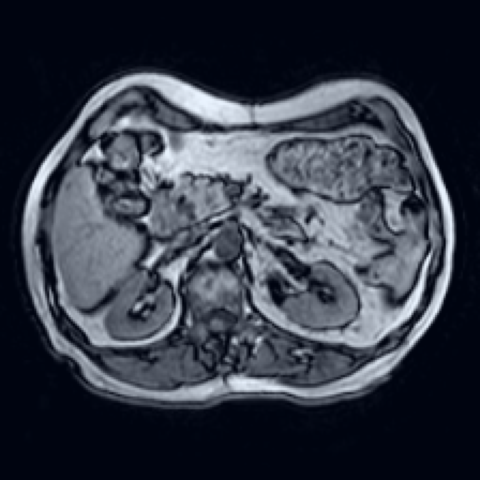} &
            \imgcell{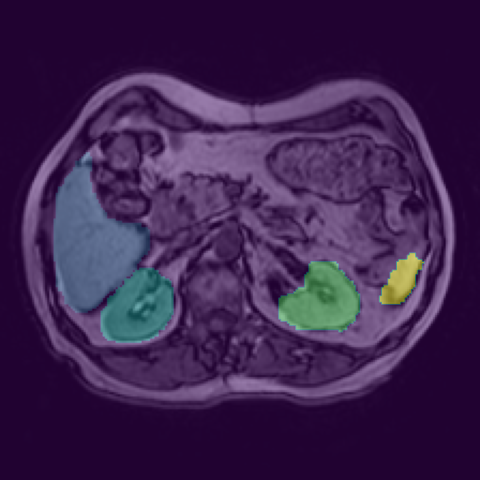} &
            \imgcell{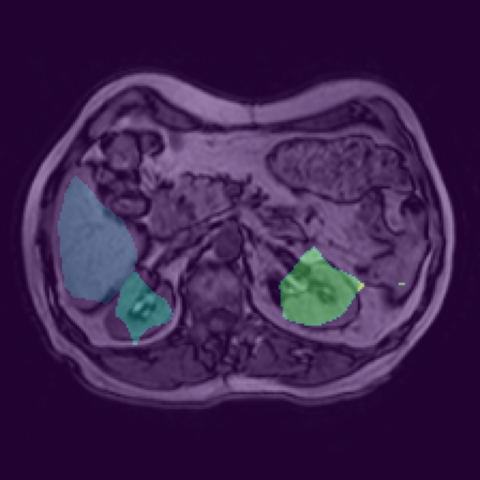} &
            \imgcell{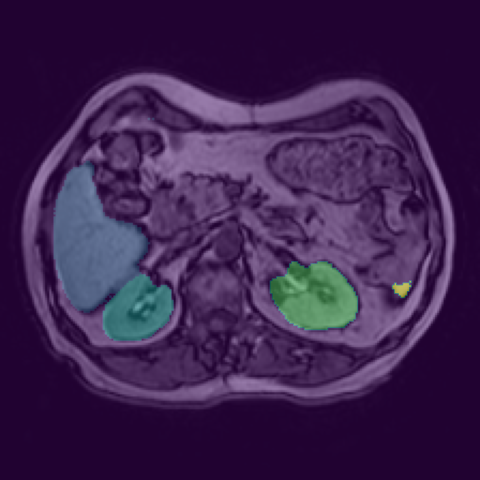} &
            \imgcell{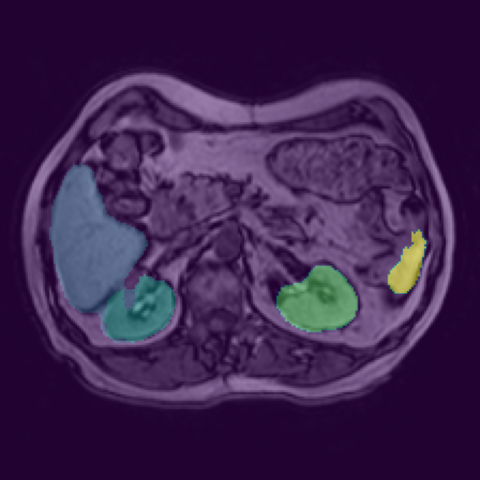} \\

            \imgcell{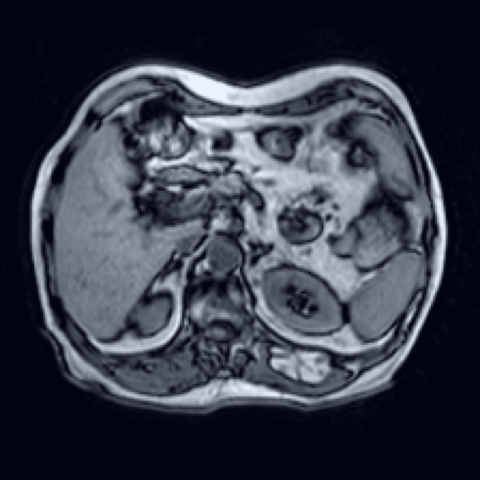} &
            \imgcell{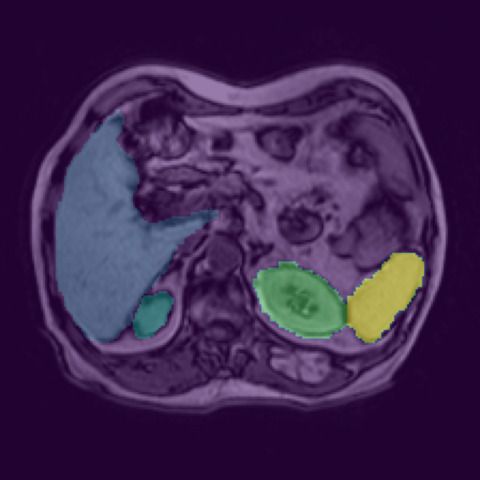} &
            \imgcell{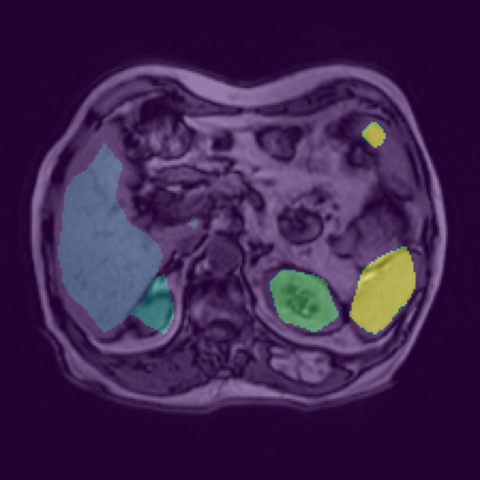} &
            \imgcell{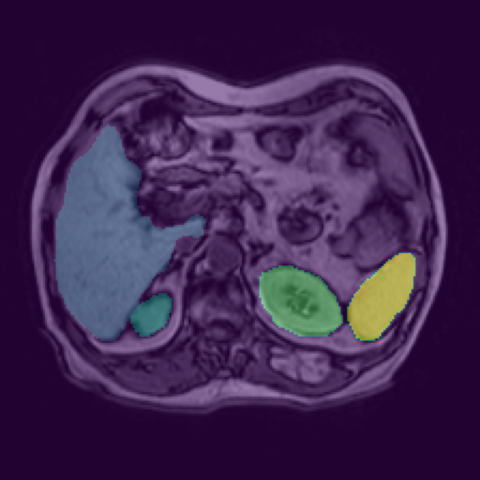} &
            \imgcell{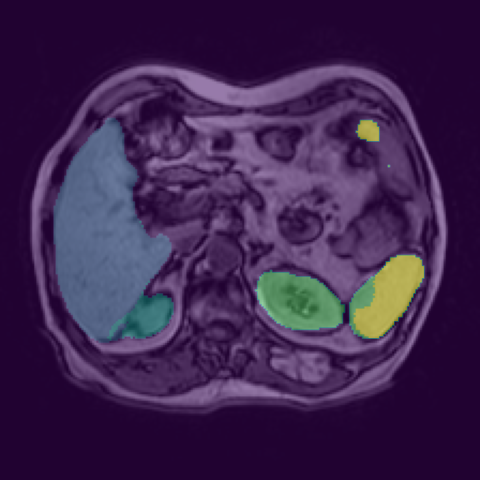} \\

            \imgcell{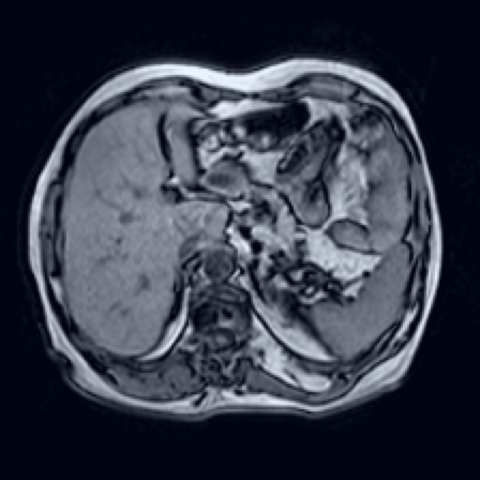} &
            \imgcell{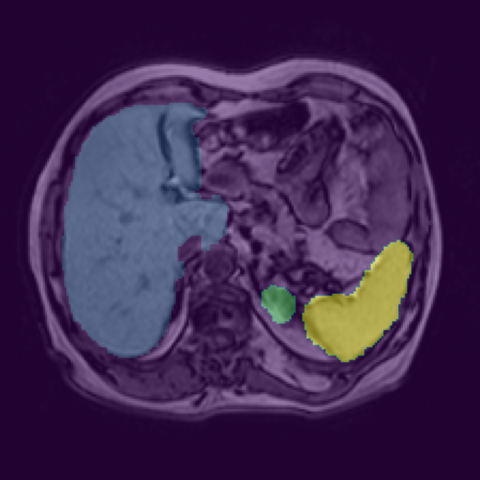} &
            \imgcell{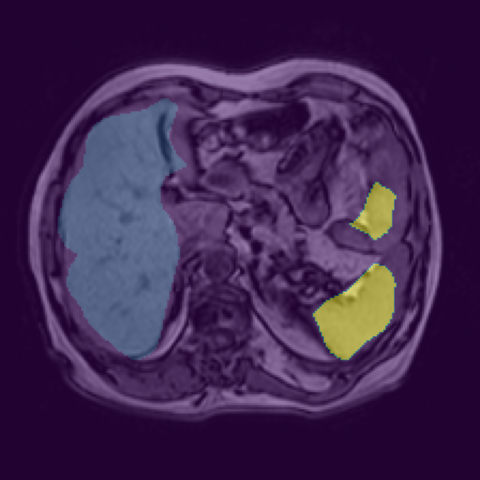} &
            \imgcell{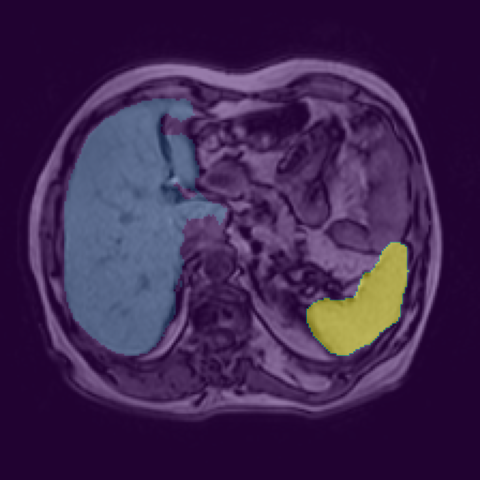} &
            \imgcell{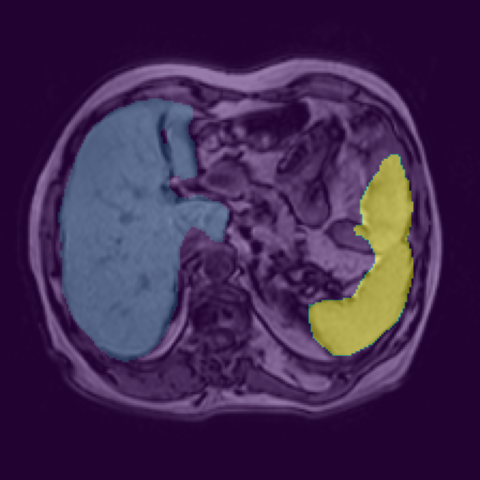} \\

            \imgcell{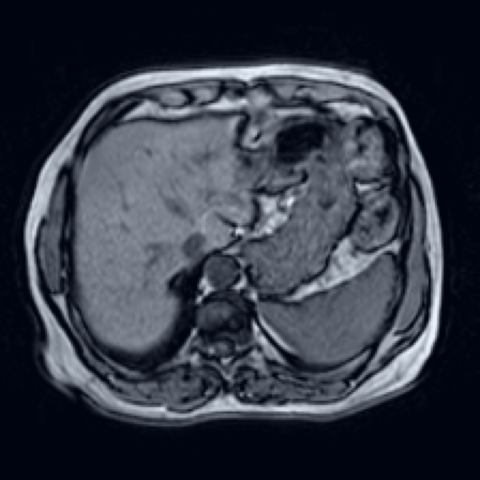} &
            \imgcell{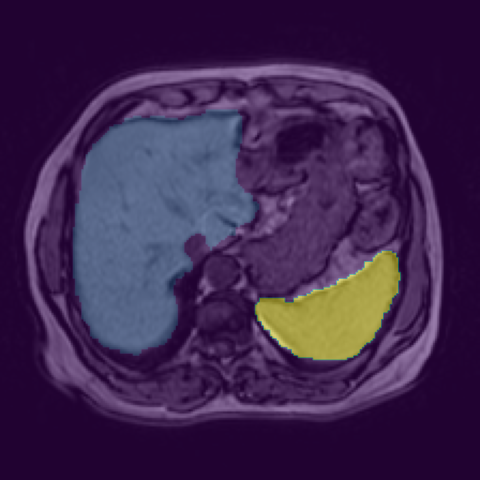} &
            \imgcell{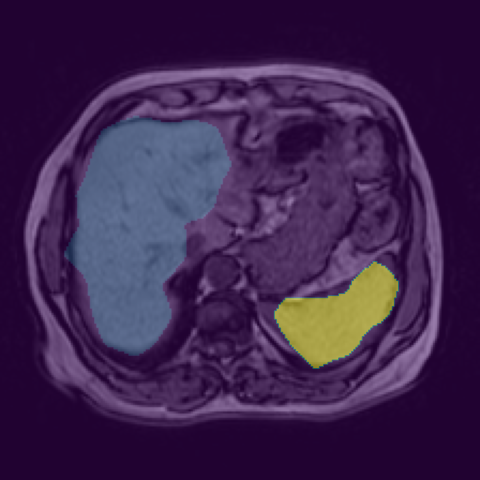} &
            \imgcell{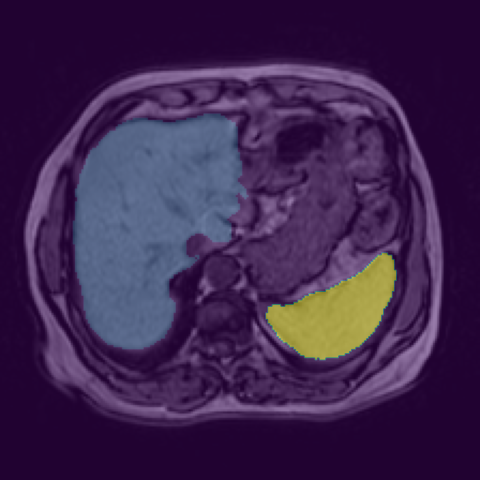} &
            \imgcell{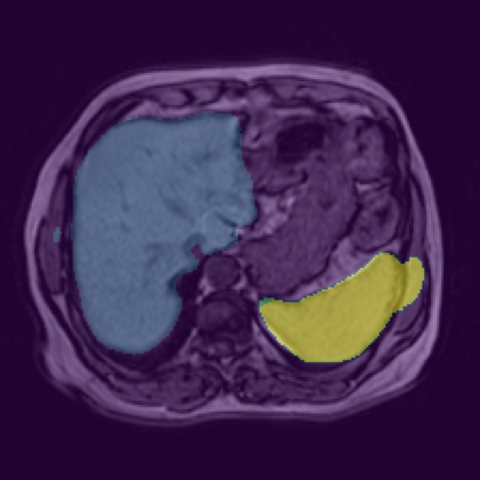} \\

        \end{tabular}
    \end{subfigure}

    \caption{Comparison of segmentation performance between a linear baseline, the ViT-UNet hybrid \cite{rmlp}, and our proposed ViTC-UNet utilizing DINOv2 \cite{dinov2} as encoder for semantic segmentation on the CHAOS \cite{chaos} dataset.}
\end{figure}

\begin{figure}[h!]
    \centering
    \newcommand{\imgcell}[1]{%
        \begin{minipage}[c][1.9cm][c]{\linewidth}
            \centering
            \includegraphics[width=\linewidth]{#1}
        \end{minipage}%
    }
    \newcommand{\tikzimg}[2]{%
        \begin{minipage}[c][1.9cm][c]{\linewidth}
            \centering
            \begin{tikzpicture}
                \node[inner sep=0pt] (image) {\includegraphics[width=\linewidth]{#1}};
                #2
            \end{tikzpicture}
        \end{minipage}%
    }
    \setlength{\tabcolsep}{7pt}
    \renewcommand{\arraystretch}{5.5} 

    \begin{subfigure}{\textwidth}
        \centering
        \begin{tabular}{>{\centering\arraybackslash}p{0.16\textwidth} >{\centering\arraybackslash}p{0.16\textwidth} >{\centering\arraybackslash}p{0.16\textwidth} >{\centering\arraybackslash}p{0.16\textwidth} >{\centering\arraybackslash}p{0.16\textwidth}}
            \textbf{CT Image} & \textbf{Ground Truth} & \textbf{Linear} & \textbf{ViT-UNet hybrid} & \textbf{ViTC-UNet} \\[-2.0em]
            
            \imgcell{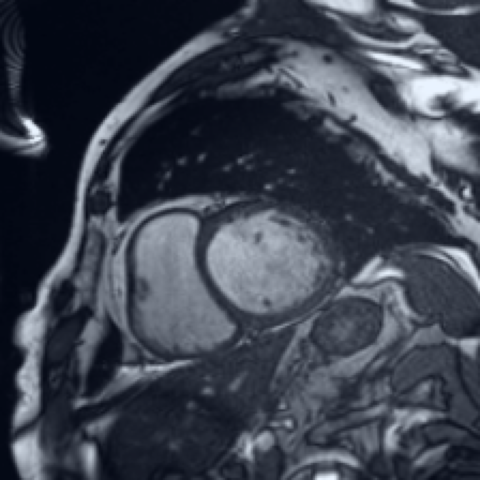} &
            \imgcell{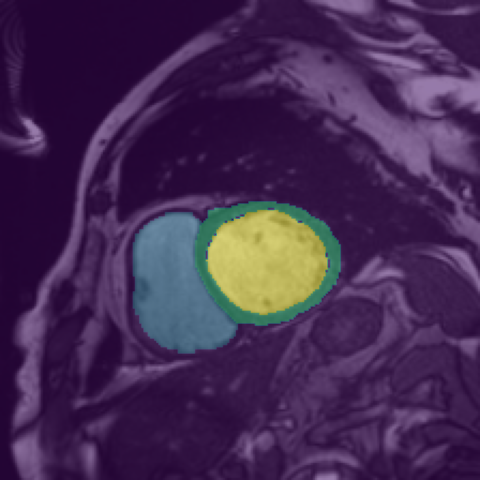} &
            \imgcell{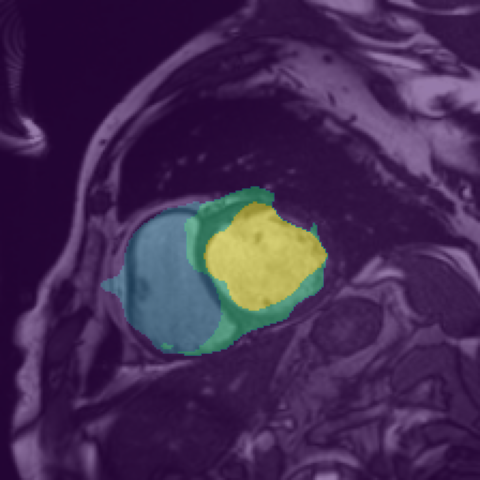} &
            \imgcell{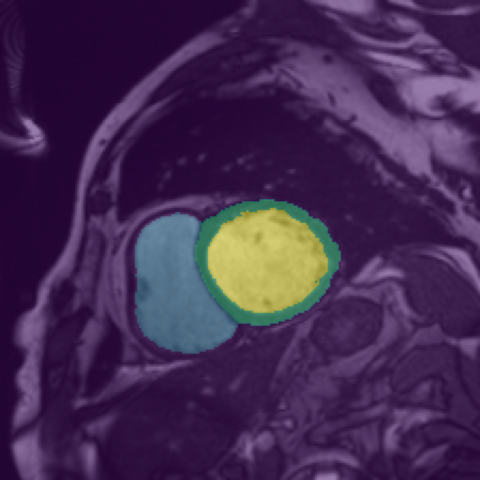} &
            \imgcell{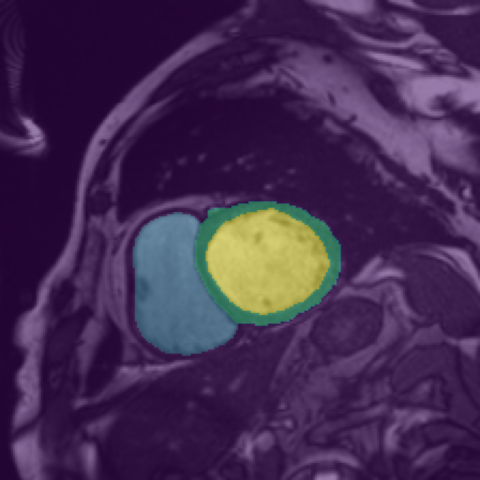} \\
            
            \imgcell{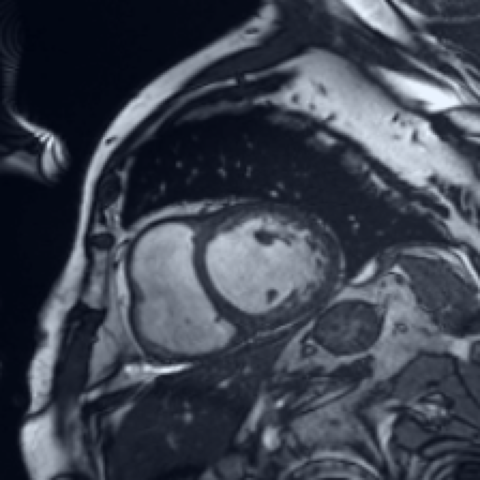} &
            \imgcell{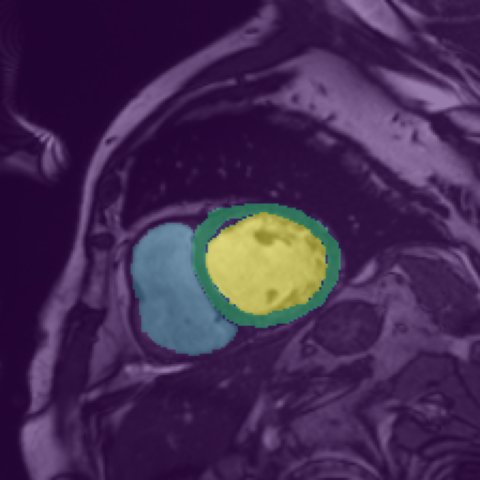} &
            \imgcell{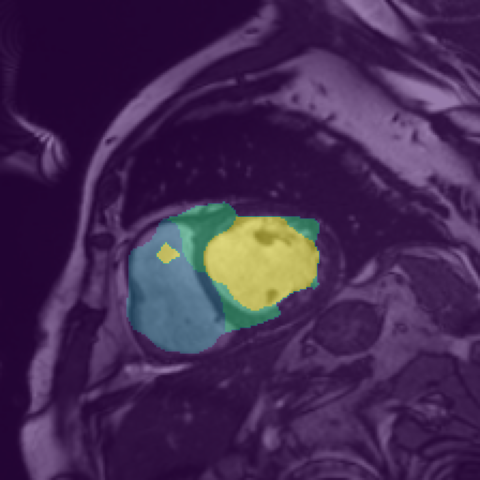} &
            \imgcell{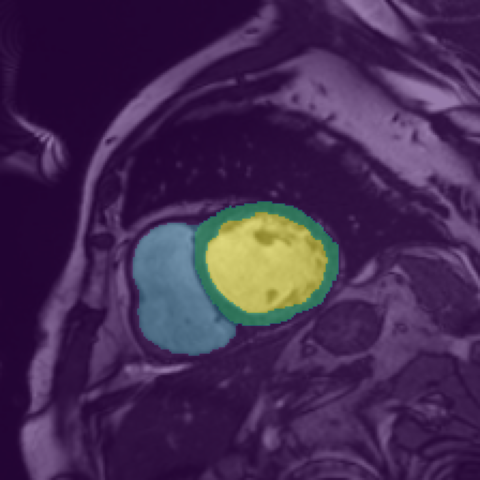} &
            \imgcell{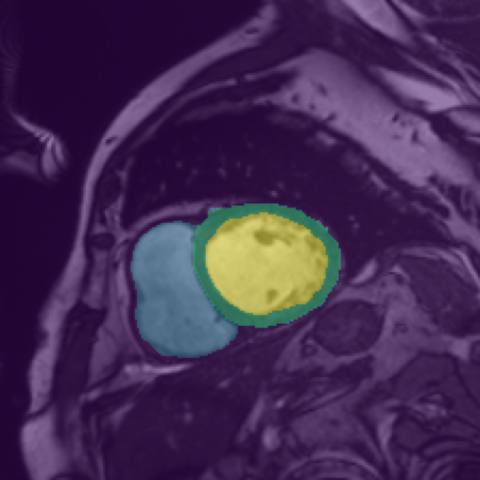} \\
            
            \imgcell{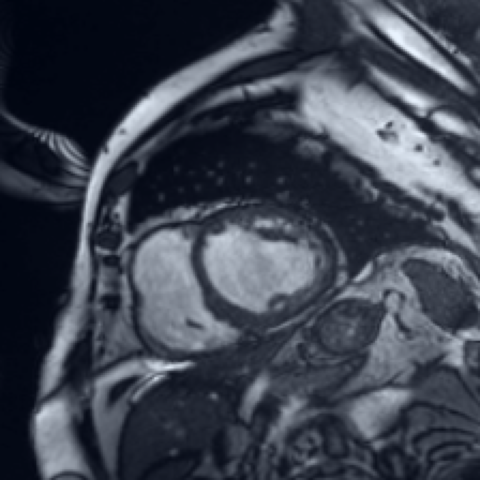} &
            \imgcell{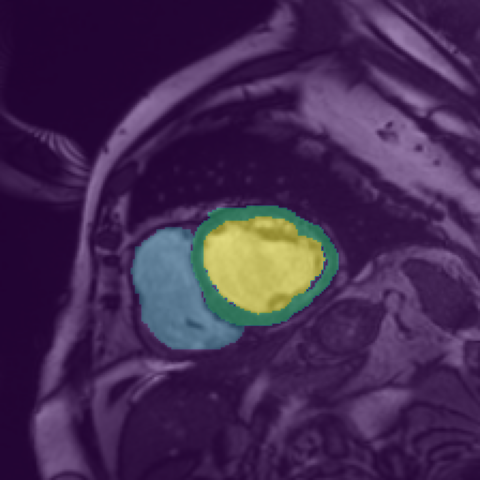} &
            \imgcell{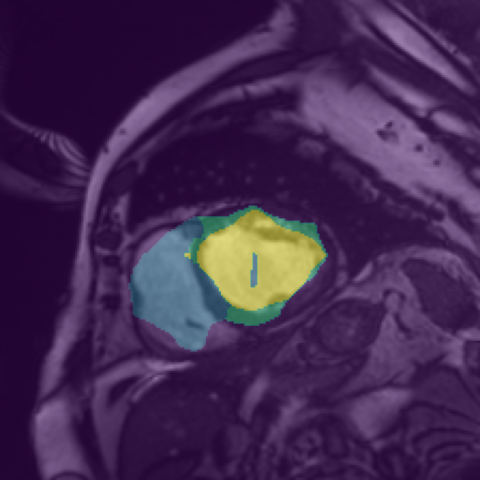} &
            \imgcell{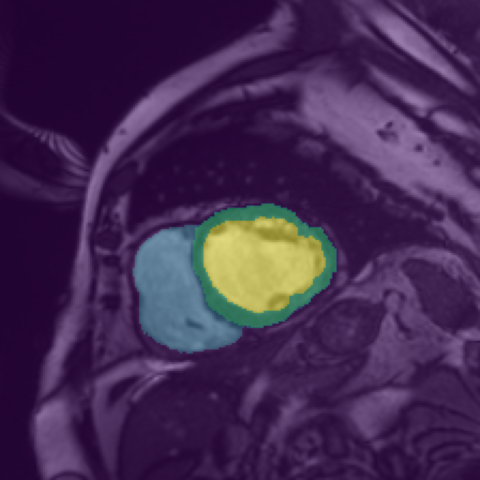} &
            \imgcell{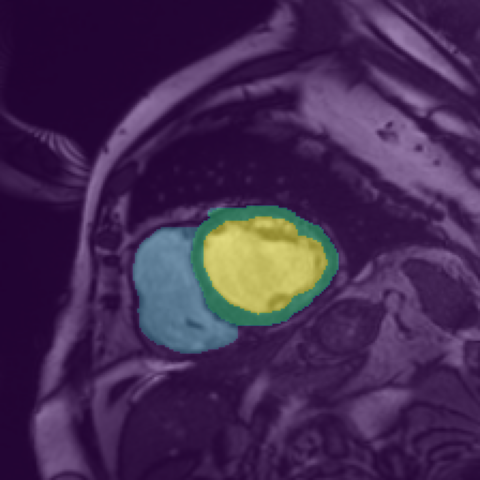} \\

            \imgcell{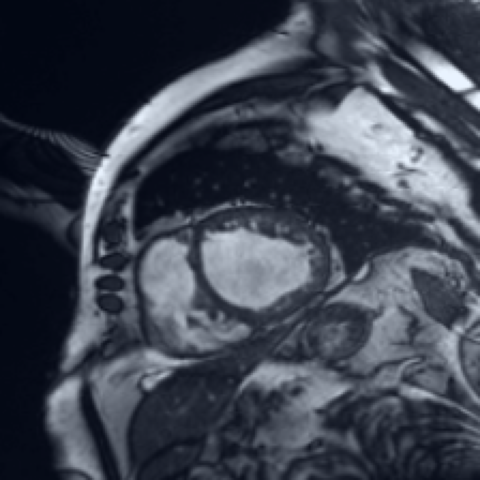} &
            \imgcell{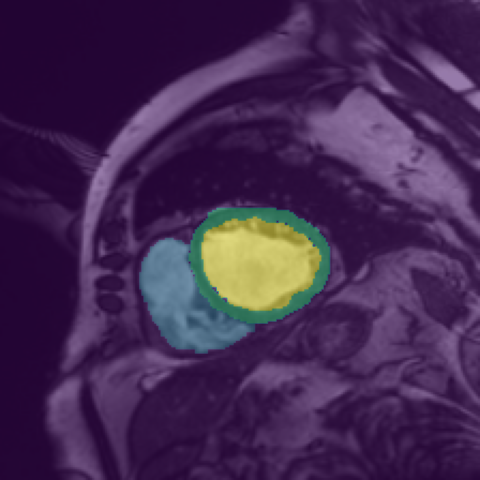} &
            \imgcell{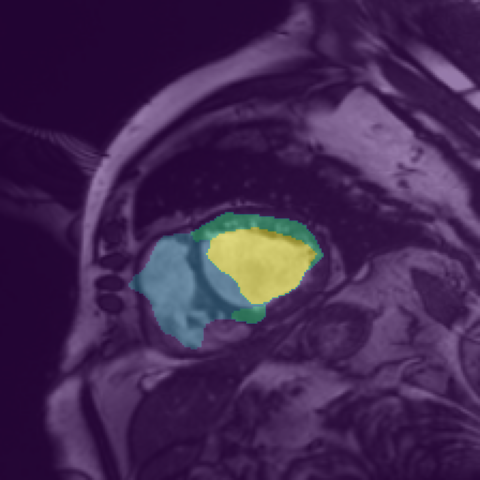} &
            \imgcell{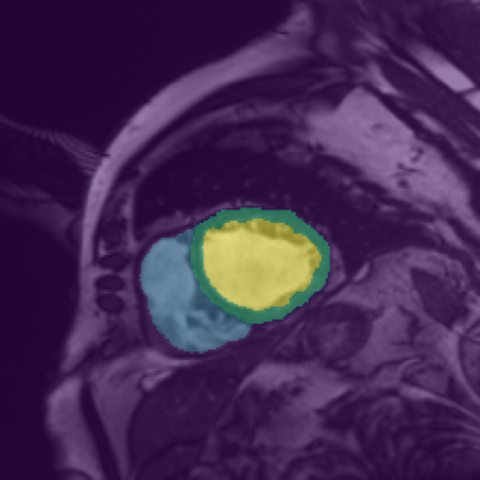} &
            \imgcell{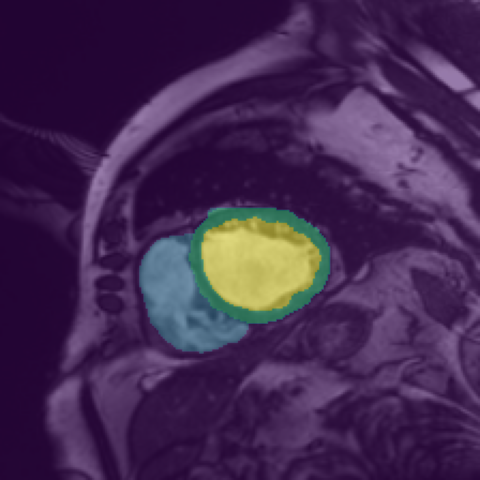} \\

            \imgcell{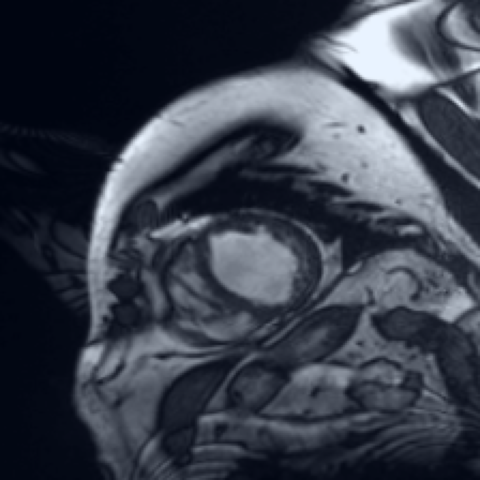} &
            \imgcell{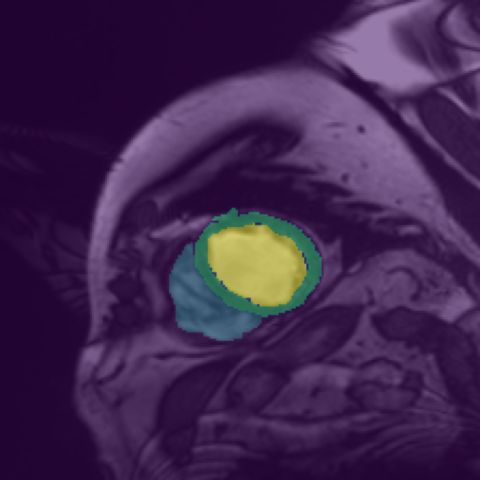} &
            \imgcell{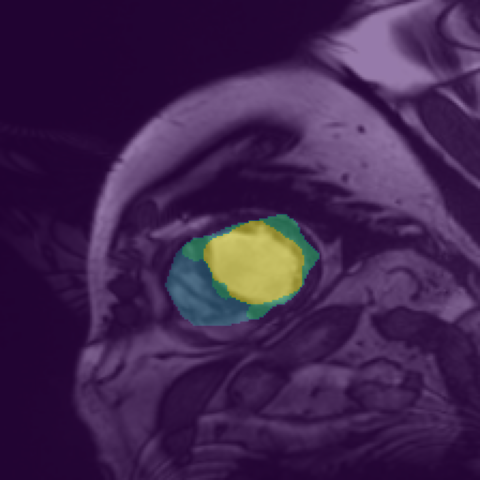} &
            \imgcell{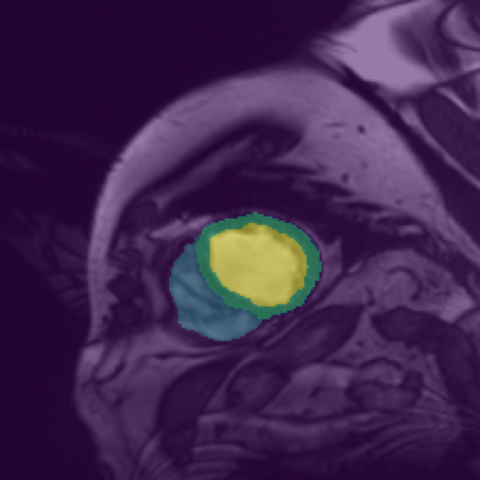} &
            \imgcell{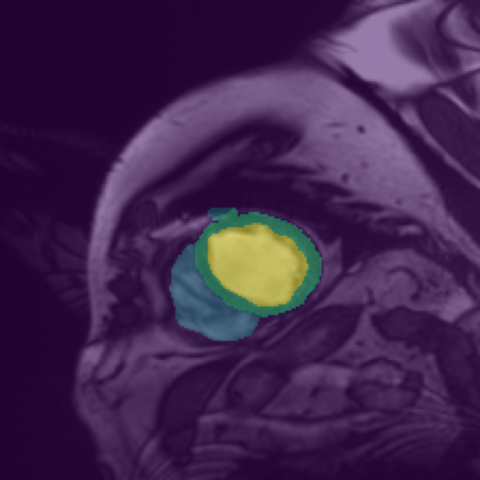} \\

        \end{tabular}
    \end{subfigure}

    \caption{Comparison of segmentation performance between a linear baseline, the ViT-UNet hybrid \cite{rmlp}, and our proposed ViTC-UNet utilizing DINOv2 \cite{dinov2} as encoder for semantic segmentation on the Heart\_ACDC \cite{nnunet heart acdc} dataset.}
\end{figure}

\begin{figure}[h!]
    \centering
    \newcommand{\imgcell}[1]{%
        \begin{minipage}[c][1.9cm][c]{\linewidth}
            \centering
            \includegraphics[width=\linewidth]{#1}
        \end{minipage}%
    }
    \newcommand{\tikzimg}[2]{%
        \begin{minipage}[c][1.9cm][c]{\linewidth}
            \centering
            \begin{tikzpicture}
                \node[inner sep=0pt] (image) {\includegraphics[width=\linewidth]{#1}};
                #2
            \end{tikzpicture}
        \end{minipage}%
    }
    \setlength{\tabcolsep}{7pt}
    \renewcommand{\arraystretch}{5.5} 

    \begin{subfigure}{\textwidth}
        \centering
        \begin{tabular}{>{\centering\arraybackslash}p{0.16\textwidth} >{\centering\arraybackslash}p{0.16\textwidth} >{\centering\arraybackslash}p{0.16\textwidth} >{\centering\arraybackslash}p{0.16\textwidth} >{\centering\arraybackslash}p{0.16\textwidth}}
            \textbf{CT Image} & \textbf{Ground Truth} & \textbf{Linear} & \textbf{ViT-UNet hybrid} & \textbf{ViTC-UNet} \\[-2.0em]
            
            \imgcell{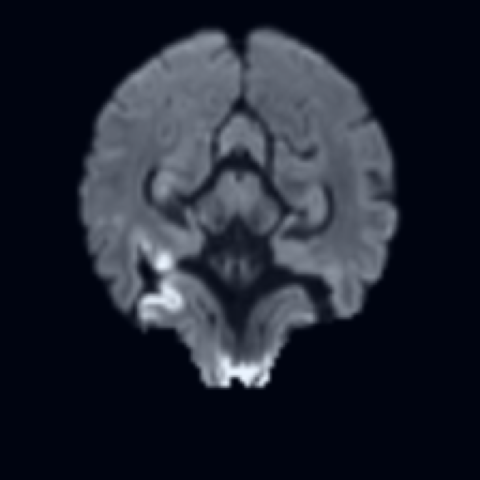} &
            \imgcell{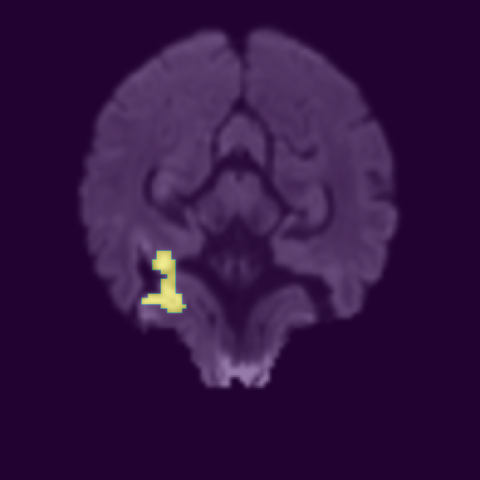} &
            \imgcell{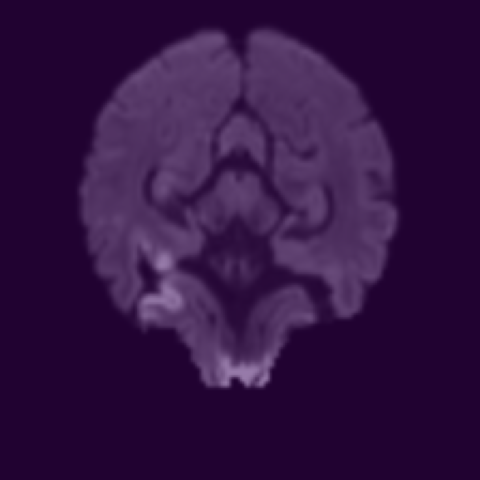} &
            \imgcell{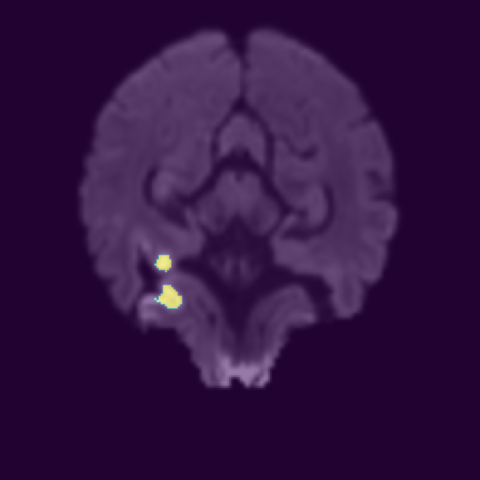} &
            \imgcell{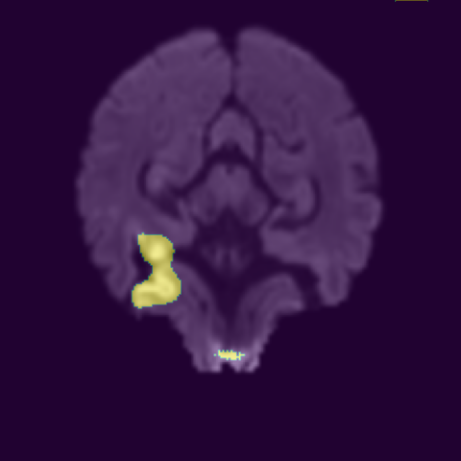} \\
            
            \imgcell{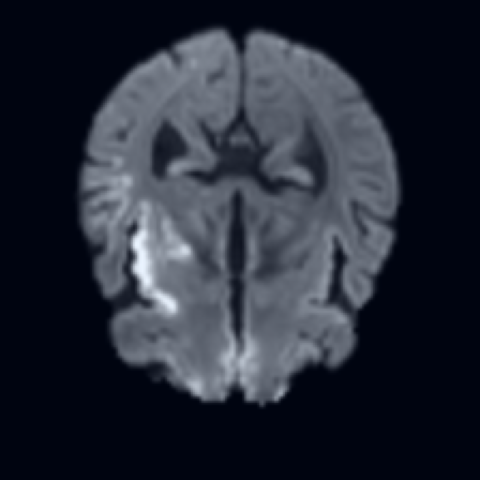} &
            \imgcell{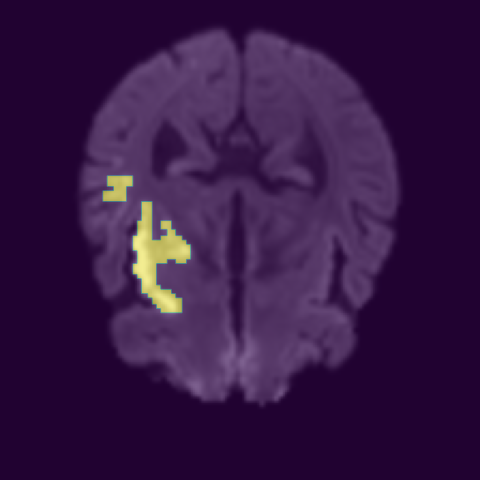} &
            \imgcell{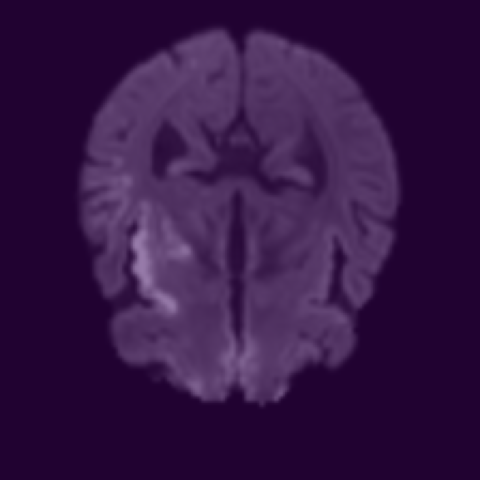} &
            \imgcell{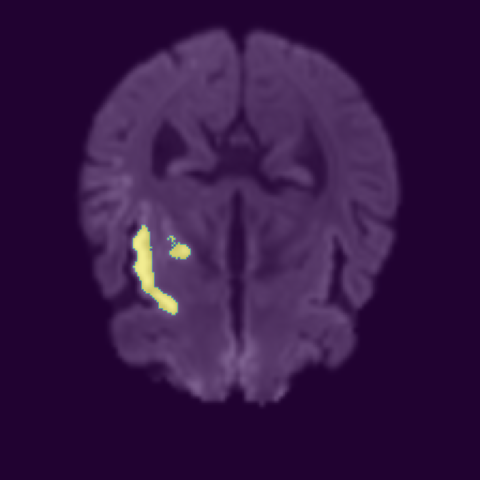} &
            \imgcell{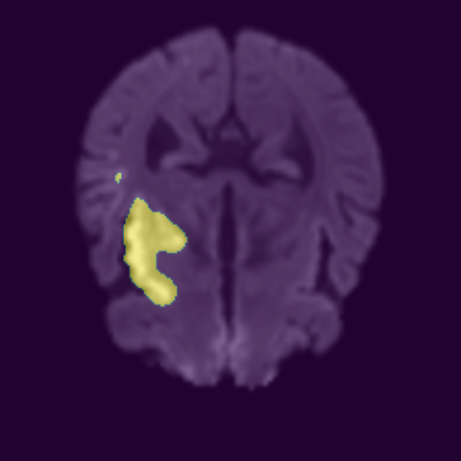} \\

            \imgcell{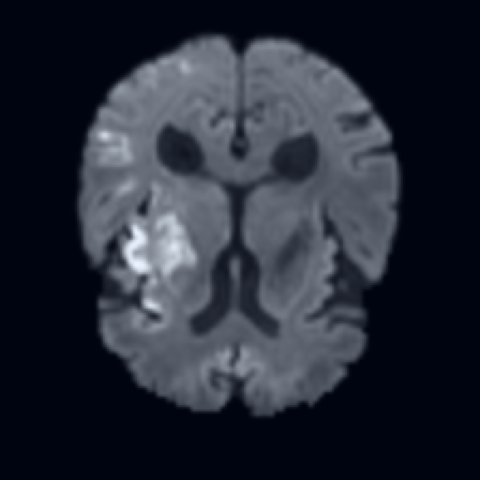} &
            \imgcell{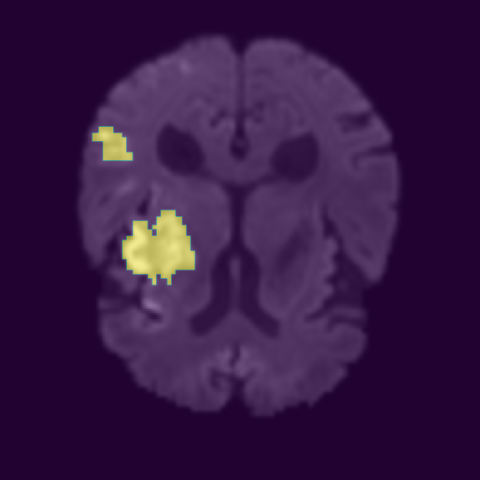} &
            \imgcell{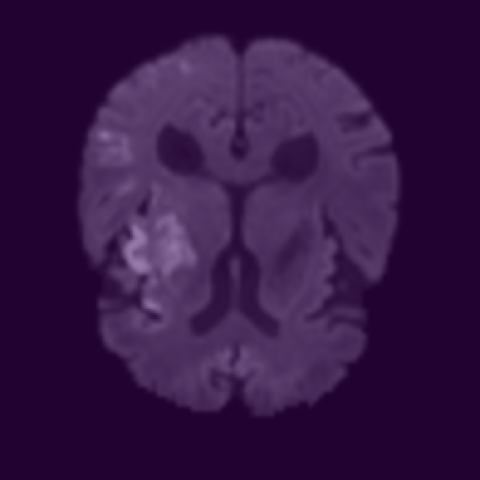} &
            \imgcell{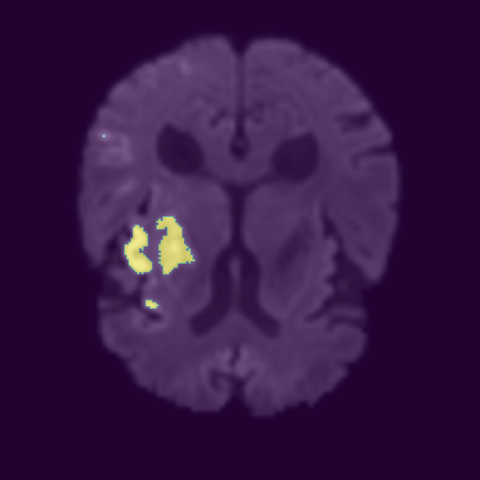} &
            \imgcell{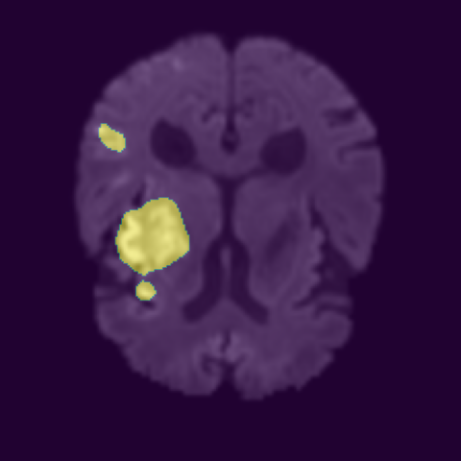} \\

            \imgcell{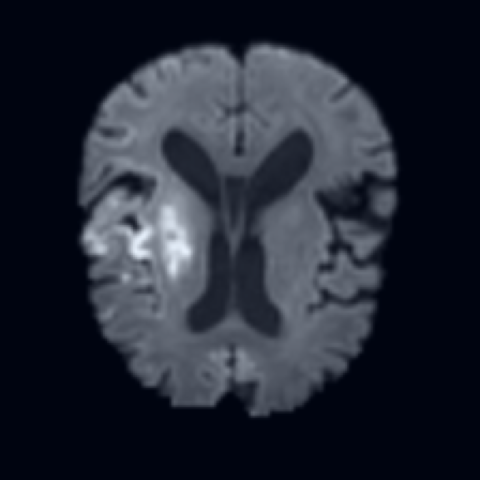} &
            \imgcell{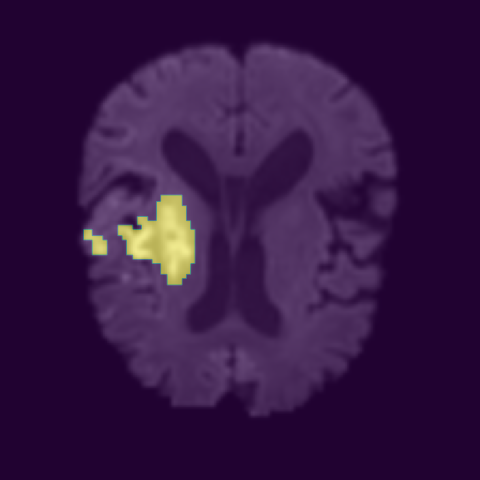} &
            \imgcell{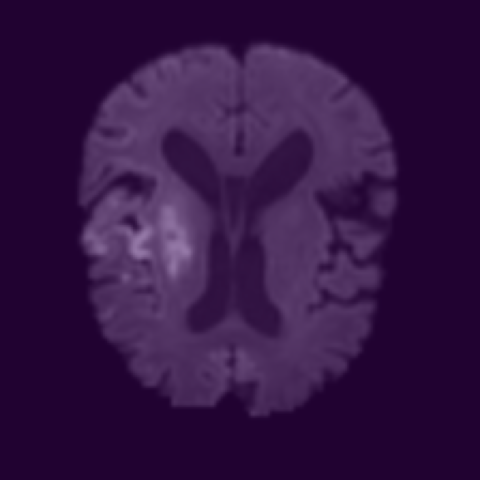} &
            \imgcell{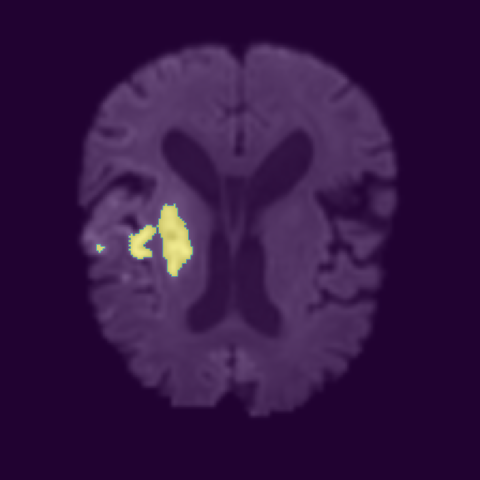} &
            \imgcell{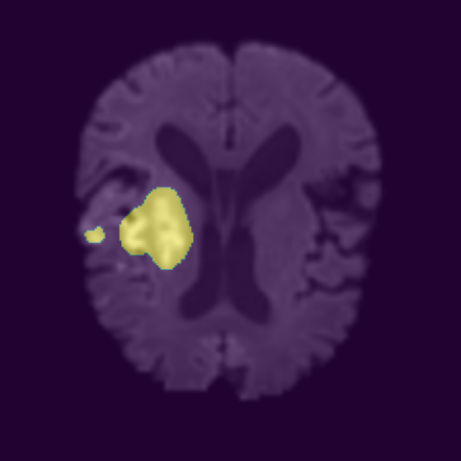} \\

            \imgcell{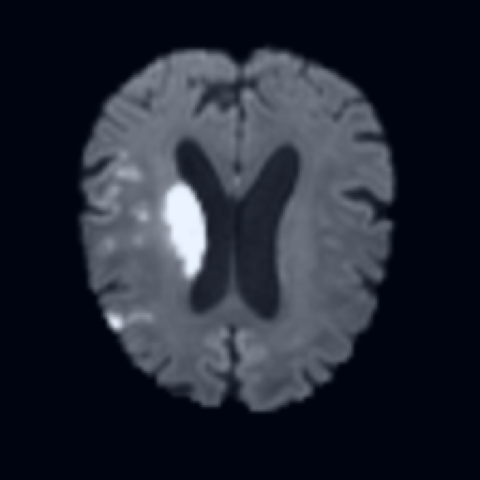} &
            \imgcell{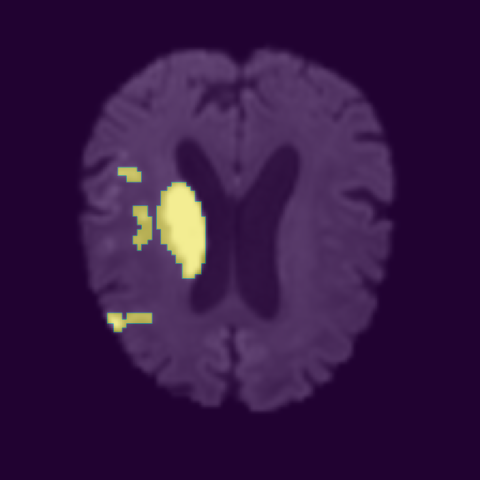} &
            \imgcell{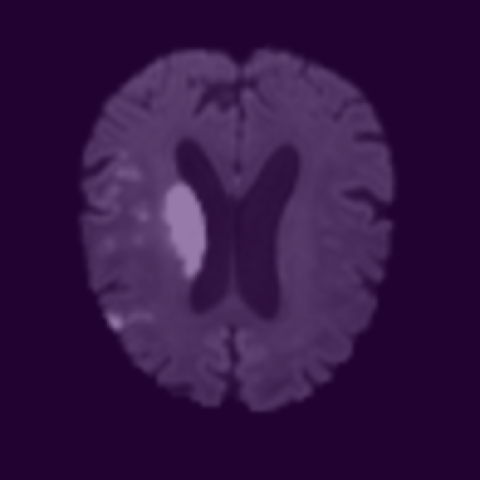} &
            \imgcell{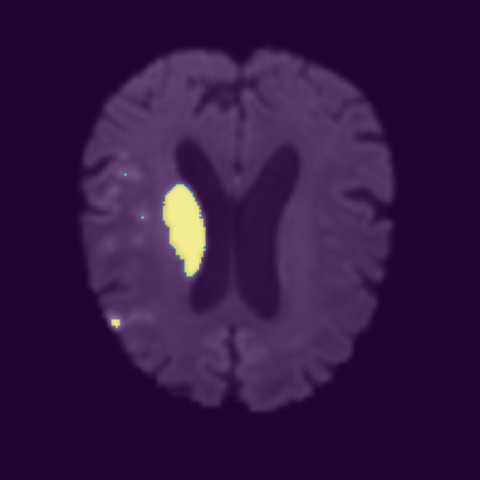} &
            \imgcell{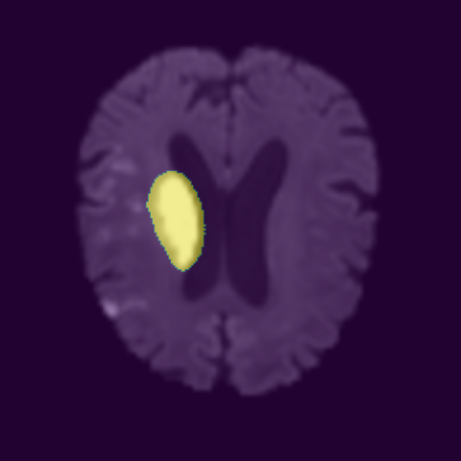} \\
            
        \end{tabular}
    \end{subfigure}

    \caption{Comparison of segmentation performance between a linear baseline, the ViT-UNet hybrid \cite{rmlp}, and our proposed ViTC-UNet utilizing DINOv2 \cite{dinov2} as encoder for semantic segmentation on the ISLES \cite{isles} dataset.}
\end{figure}

\begin{figure}[h!]
    \centering
    \newcommand{\imgcell}[1]{%
        \begin{minipage}[c][1.9cm][c]{\linewidth}
            \centering
            \includegraphics[width=\linewidth]{#1}
        \end{minipage}%
    }
    \newcommand{\tikzimg}[2]{%
        \begin{minipage}[c][1.9cm][c]{\linewidth}
            \centering
            \begin{tikzpicture}
                \node[inner sep=0pt] (image) {\includegraphics[width=\linewidth]{#1}};
                #2
            \end{tikzpicture}
        \end{minipage}%
    }
    \setlength{\tabcolsep}{7pt}
    \renewcommand{\arraystretch}{5.5} 

    \begin{subfigure}{\textwidth}
        \centering
        \begin{tabular}{>{\centering\arraybackslash}p{0.16\textwidth} >{\centering\arraybackslash}p{0.16\textwidth} >{\centering\arraybackslash}p{0.16\textwidth} >{\centering\arraybackslash}p{0.16\textwidth} >{\centering\arraybackslash}p{0.16\textwidth}}
            \textbf{CT Image} & \textbf{Ground Truth} & \textbf{Linear} & \textbf{ViT-UNet hybrid} & \textbf{ViTC-UNet} \\[-2.0em]
            
            \imgcell{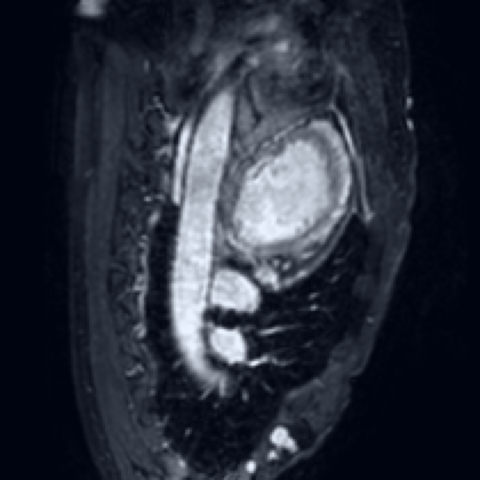} &
            \imgcell{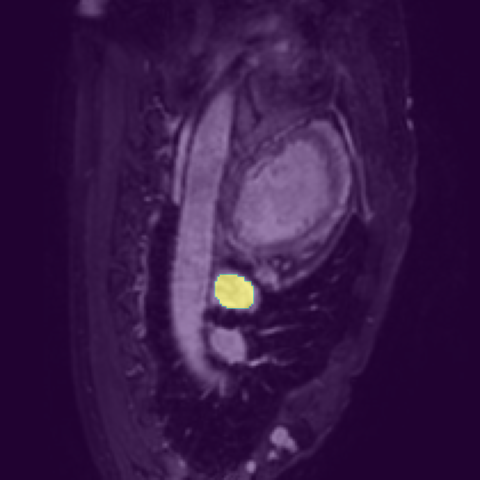} &
            \imgcell{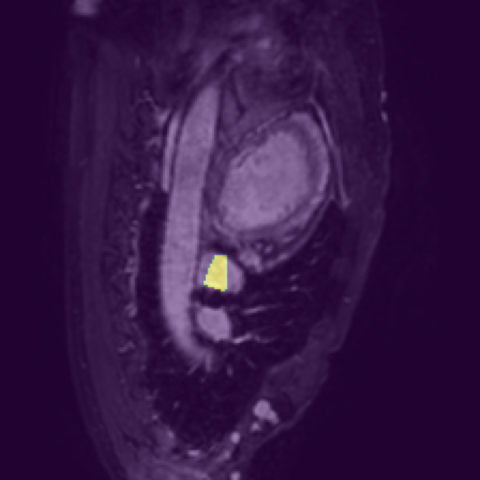} &
            \imgcell{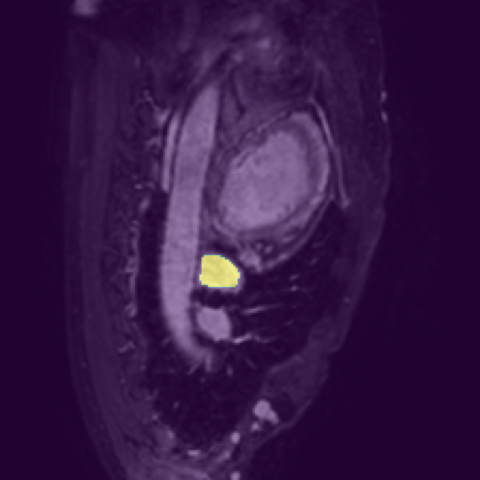} &
            \imgcell{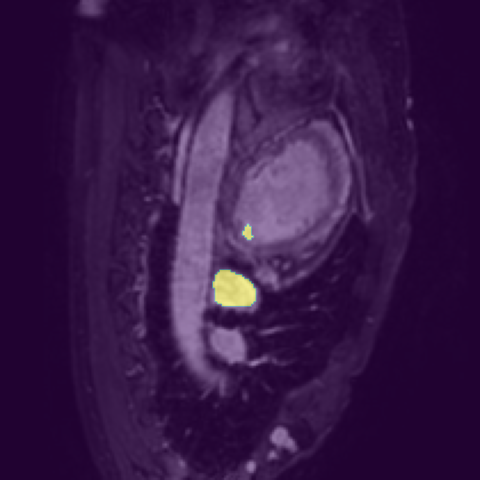} \\
            
            \imgcell{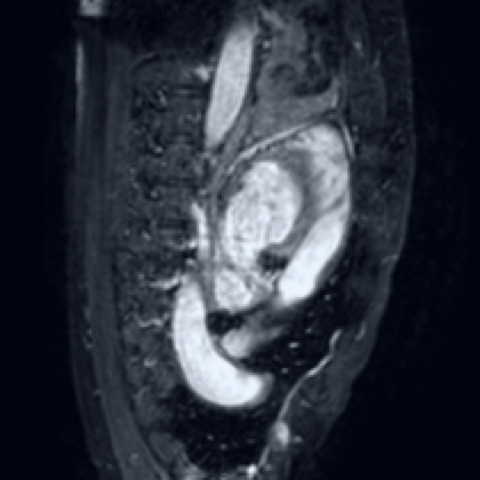} &
            \imgcell{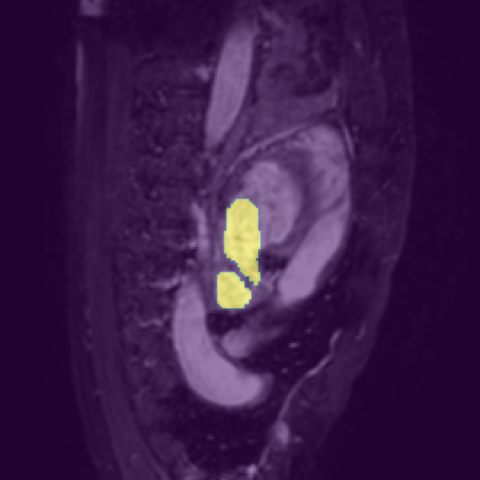} &
            \imgcell{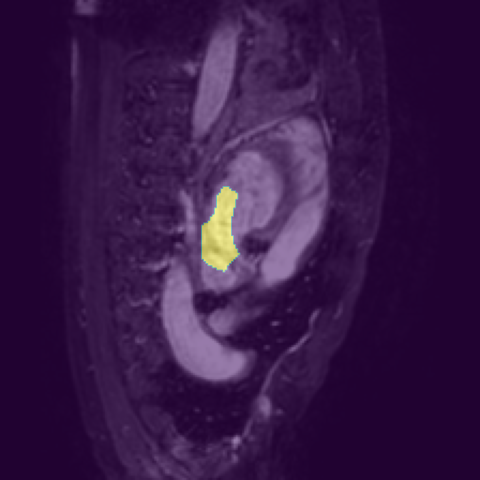} &
            \imgcell{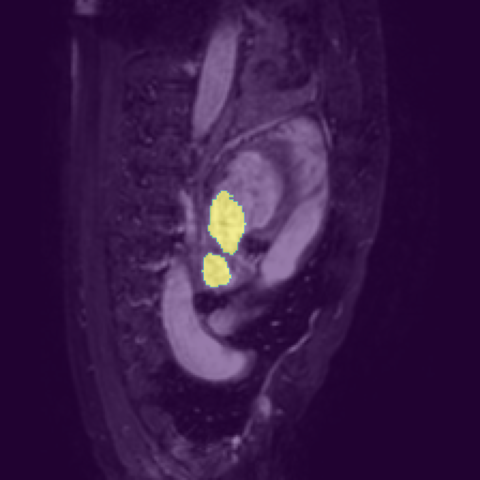} &
            \imgcell{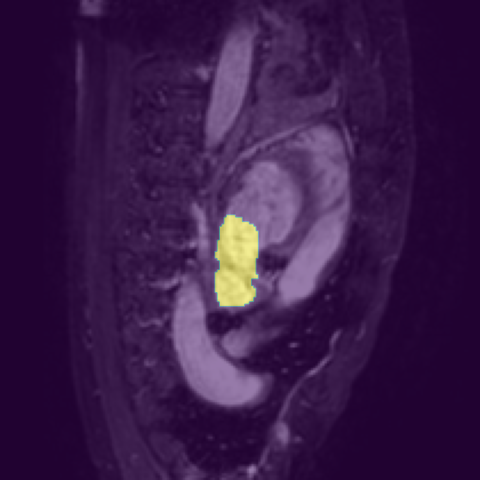} \\

            \imgcell{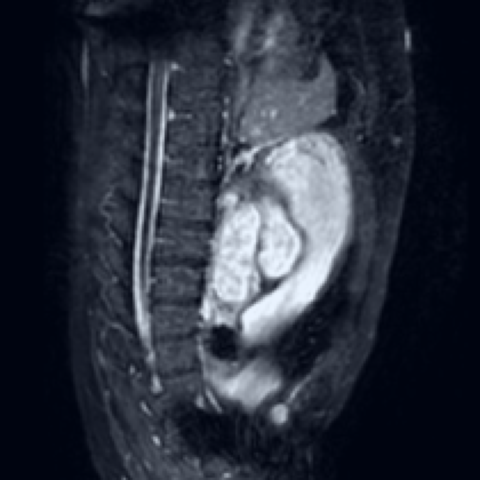} &
            \imgcell{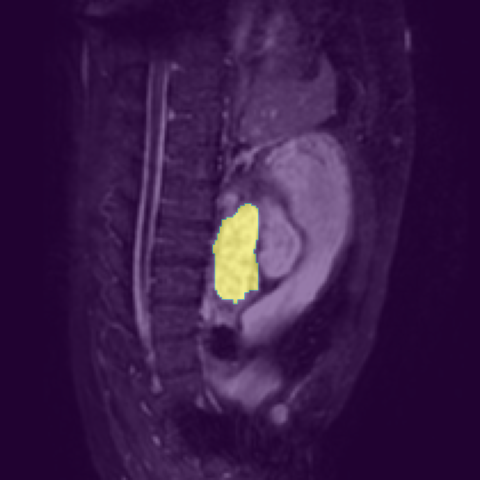} &
            \imgcell{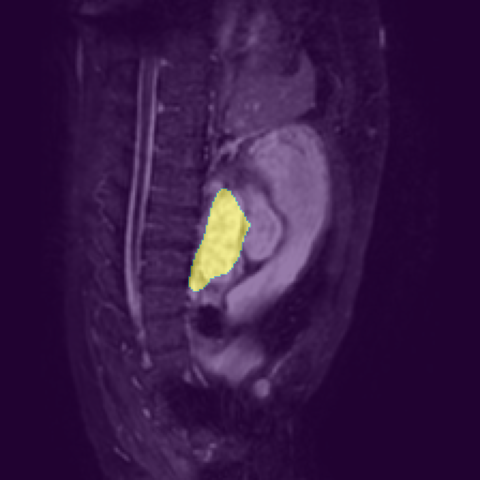} &
            \imgcell{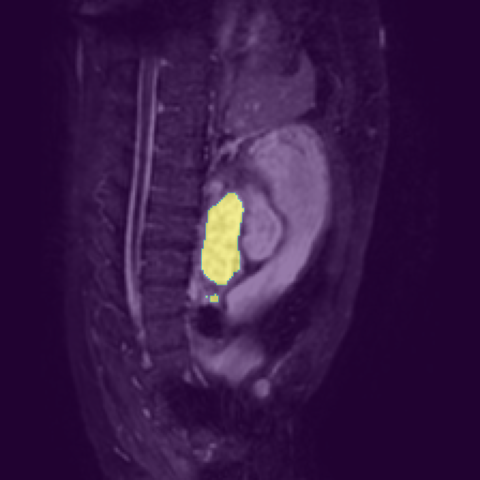} &
            \imgcell{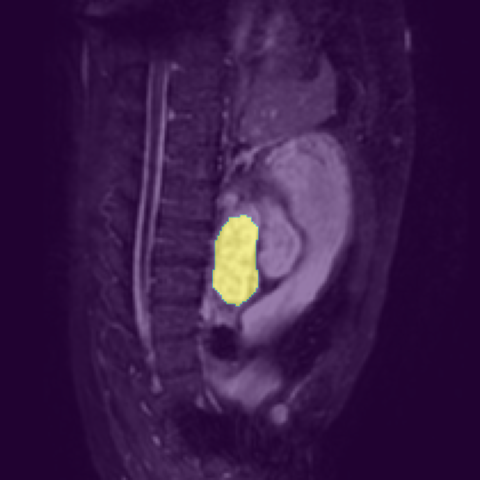} \\

            \imgcell{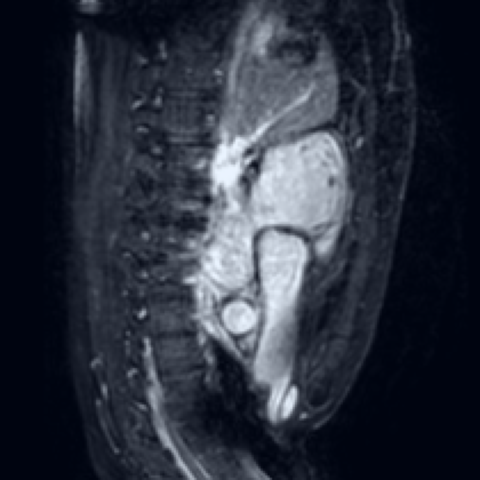} &
            \imgcell{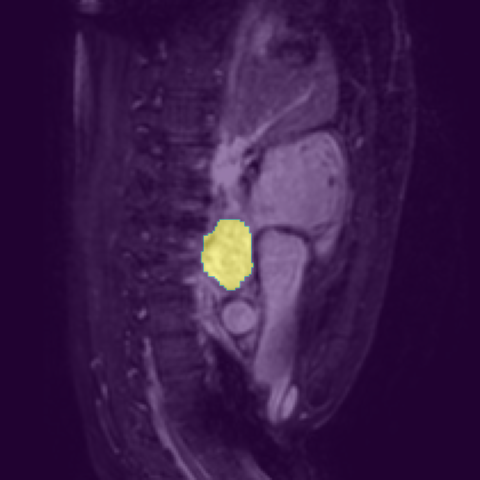} &
            \imgcell{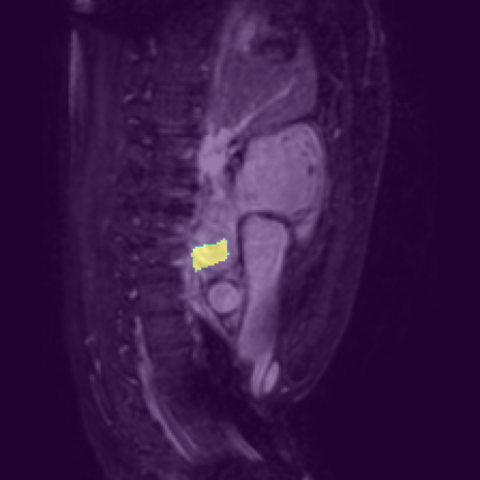} &
            \imgcell{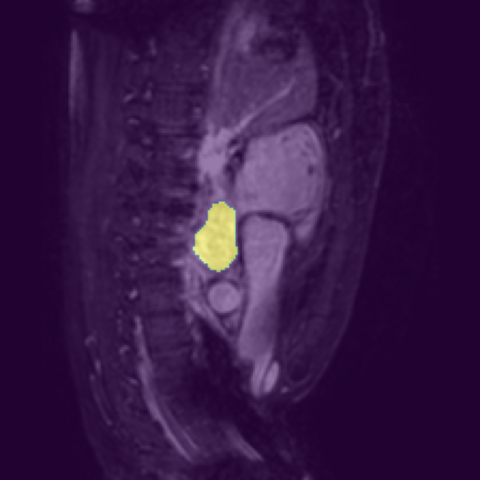} &
            \imgcell{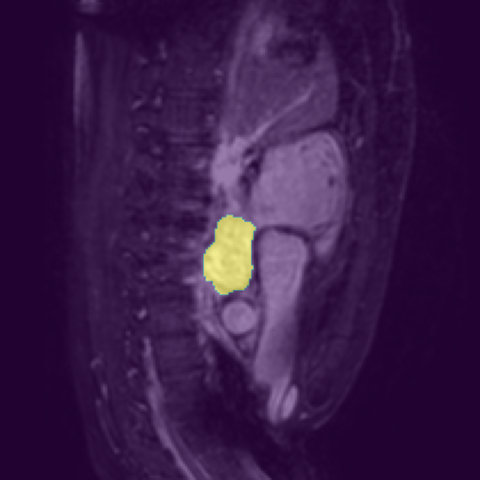} \\

            \imgcell{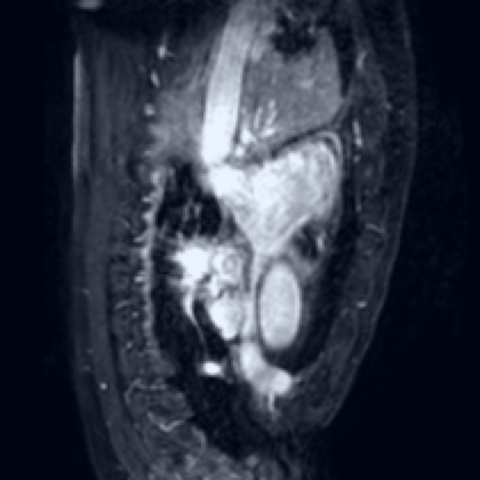} &
            \imgcell{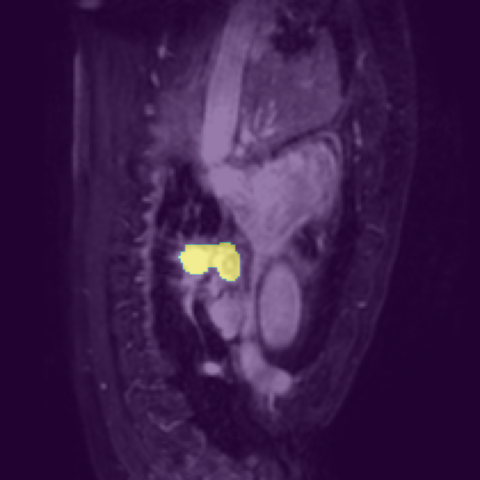} &
            \imgcell{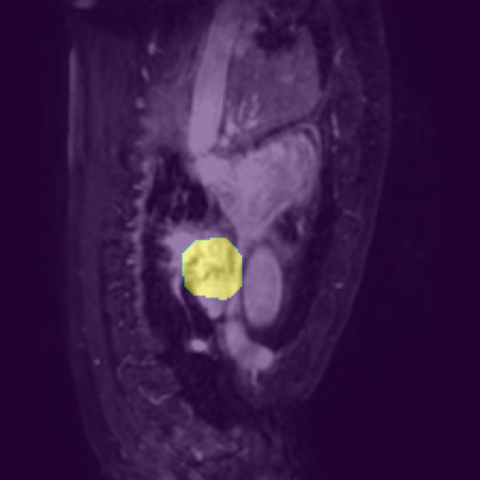} &
            \imgcell{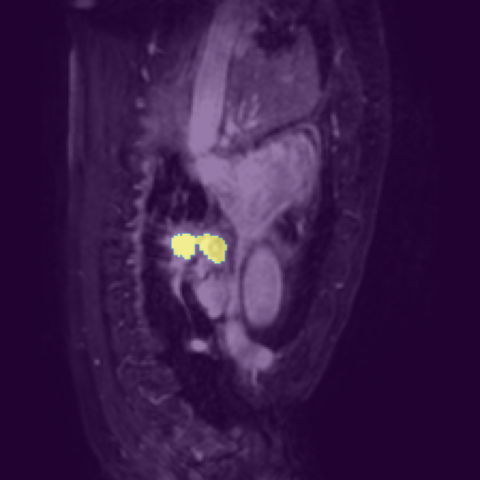} &
            \imgcell{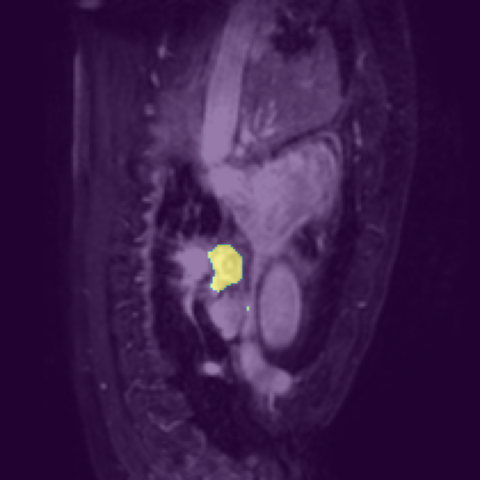} \\
            
        \end{tabular}
    \end{subfigure}

    \caption{Comparison of segmentation performance between a linear baseline, the ViT-UNet hybrid \cite{rmlp}, and our proposed ViTC-UNet utilizing DINOv2 \cite{dinov2} as encoder for semantic segmentation of the left atrium on the Medical Decathlon \cite{medical decathlon} dataset.}
\end{figure}

\begin{figure}[h!]
    \centering
    \newcommand{\imgcell}[1]{%
        \begin{minipage}[c][1.9cm][c]{\linewidth}
            \centering
            \includegraphics[width=\linewidth]{#1}
        \end{minipage}%
    }
    \newcommand{\tikzimg}[2]{%
        \begin{minipage}[c][1.9cm][c]{\linewidth}
            \centering
            \begin{tikzpicture}
                \node[inner sep=0pt] (image) {\includegraphics[width=\linewidth]{#1}};
                #2
            \end{tikzpicture}
        \end{minipage}%
    }
    \setlength{\tabcolsep}{7pt}
    \renewcommand{\arraystretch}{5.5} 

    \begin{subfigure}{\textwidth}
        \centering
        \begin{tabular}{>{\centering\arraybackslash}p{0.16\textwidth} >{\centering\arraybackslash}p{0.16\textwidth} >{\centering\arraybackslash}p{0.16\textwidth} >{\centering\arraybackslash}p{0.16\textwidth} >{\centering\arraybackslash}p{0.16\textwidth}}
            \textbf{CT Image} & \textbf{Ground Truth} & \textbf{Linear} & \textbf{ViT-UNet hybrid} & \textbf{ViTC-UNet} \\[-2.0em]
            
            \imgcell{examples/overlays/mri/vitc_unet/spider_ivd/slice2_raw.png} &
            \imgcell{examples/overlays/mri/vitc_unet/spider_ivd/slice2_target.png} &
            \imgcell{examples/overlays/mri/linear/spider_ivd/slice2_pred.png} &
            \imgcell{examples/overlays/mri/hybrid/spider_ivd/slice2_pred.png} &
            \imgcell{examples/overlays/mri/vitc_unet/spider_ivd/slice2_pred.png} \\
            
            \imgcell{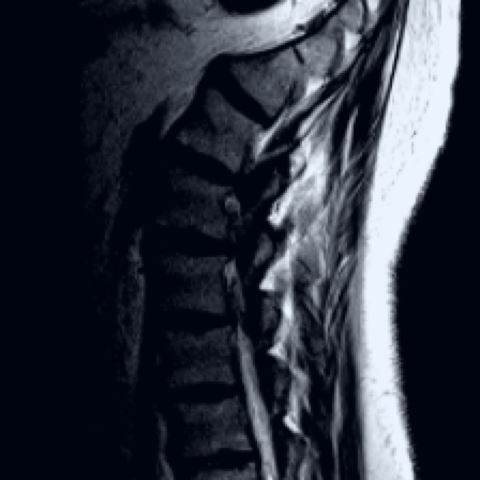} &
            \imgcell{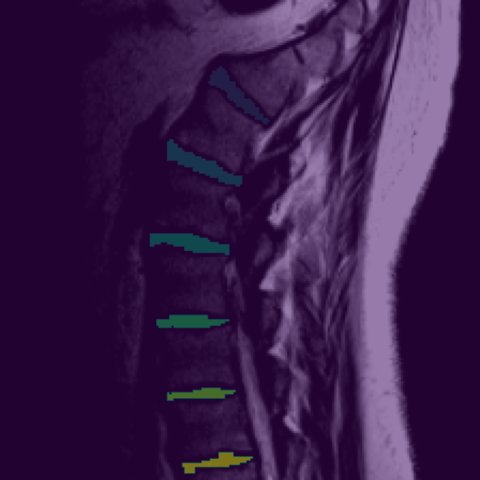} &
            \imgcell{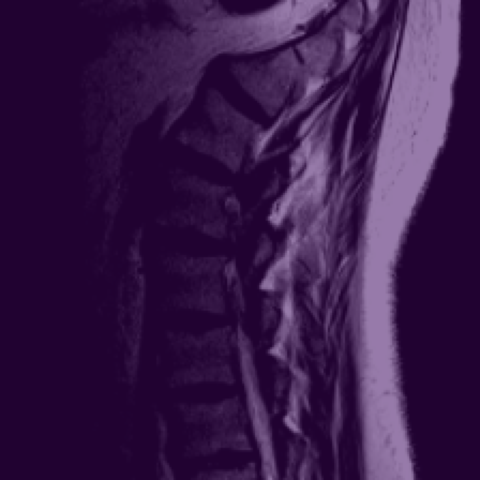} &
            \imgcell{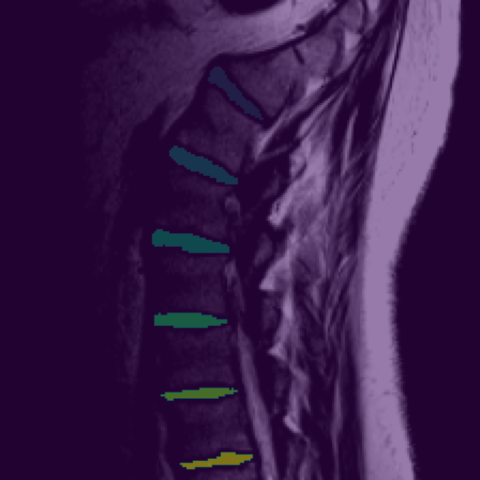} &
            \imgcell{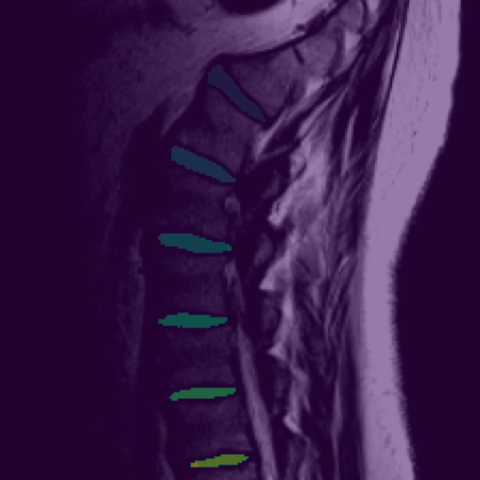} \\

            \imgcell{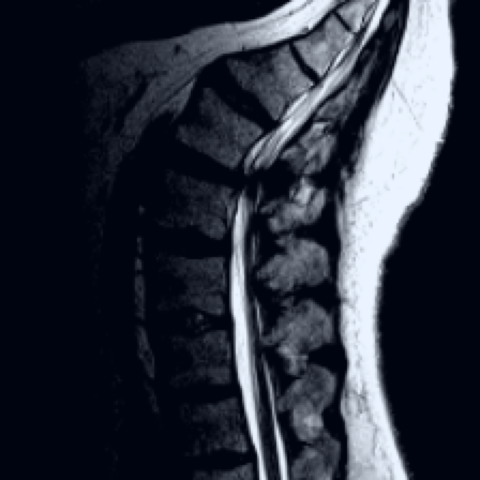} &
            \imgcell{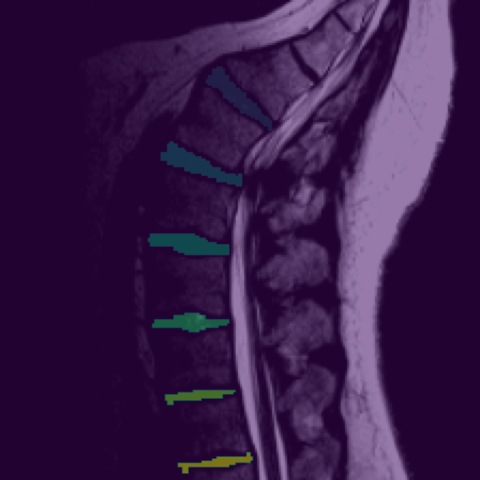} &
            \imgcell{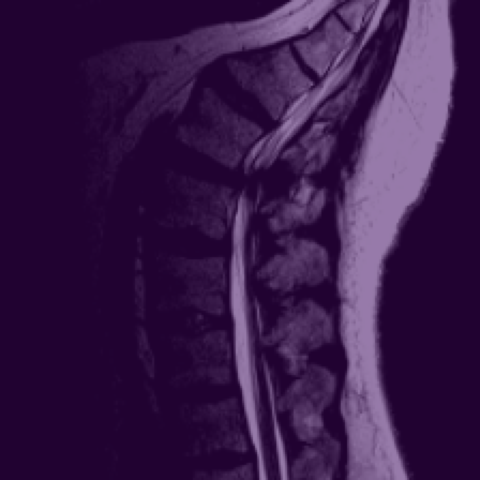} &
            \imgcell{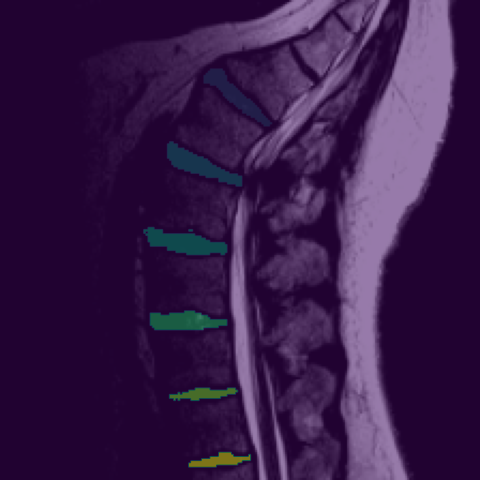} &
            \imgcell{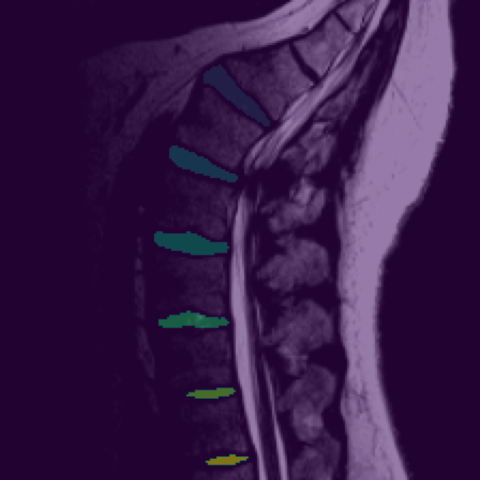} \\

            \imgcell{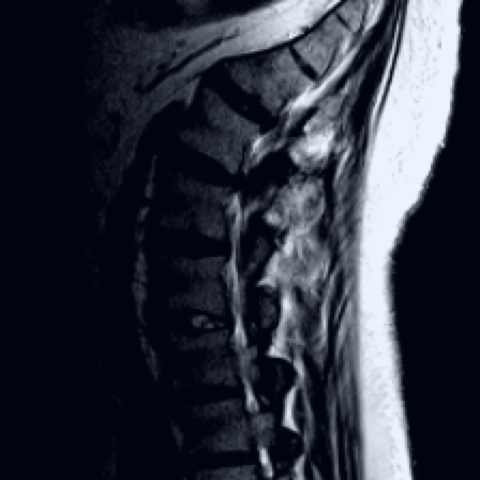} &
            \imgcell{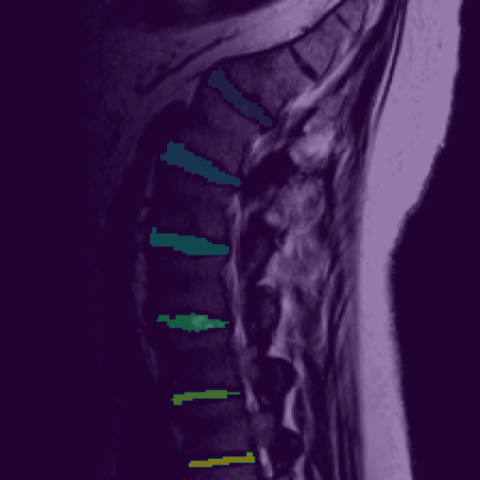} &
            \imgcell{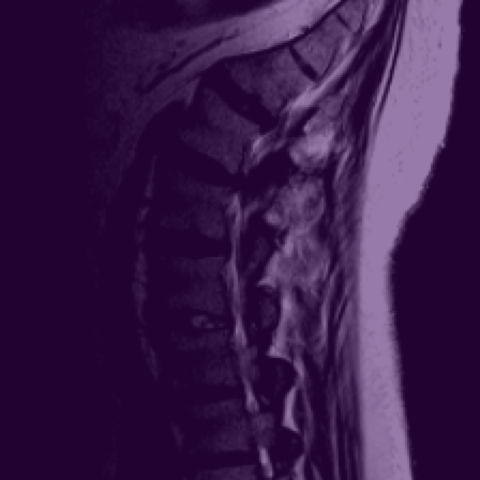} &
            \imgcell{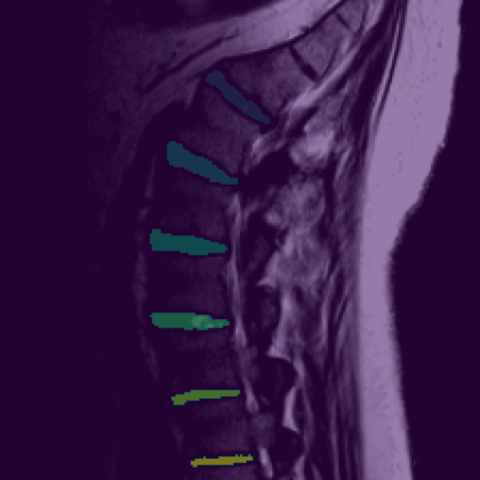} &
            \imgcell{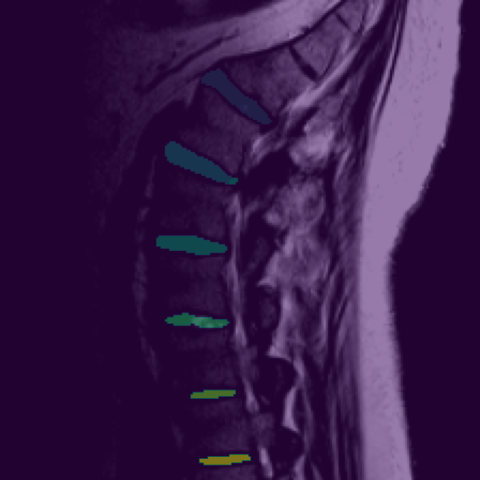} \\

            \imgcell{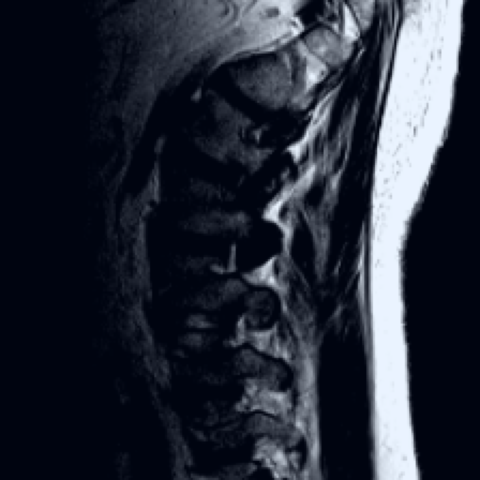} &
            \imgcell{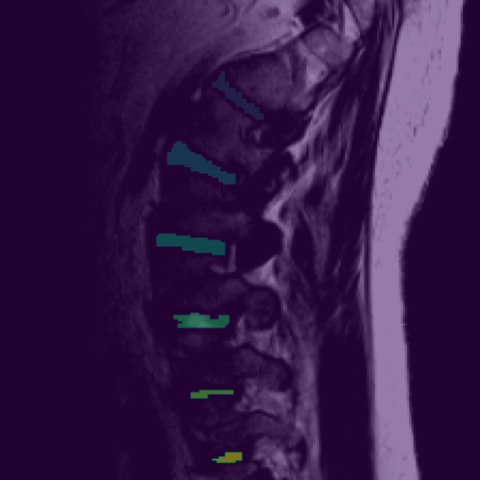} &
            \imgcell{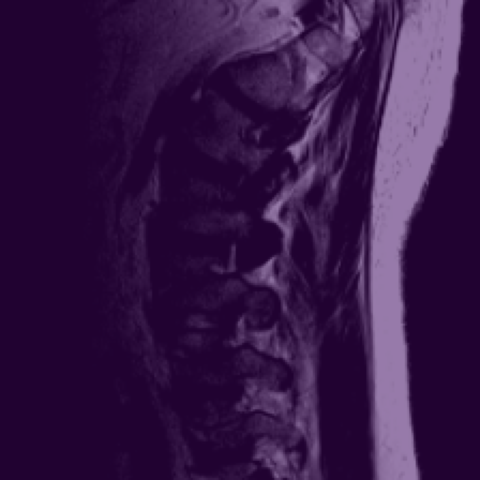} &
            \imgcell{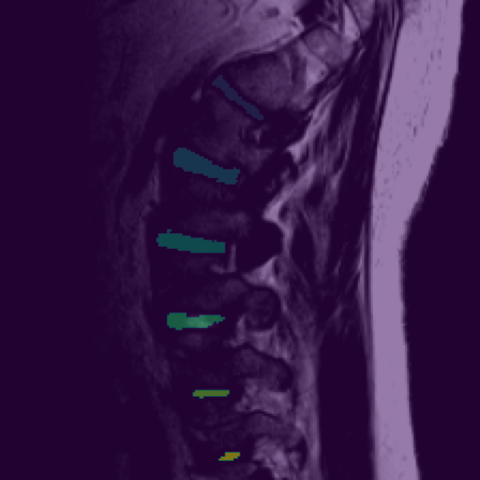} &
            \imgcell{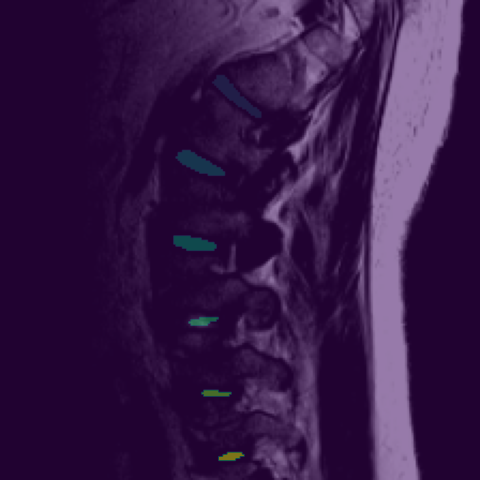} \\
            
        \end{tabular}
    \end{subfigure}

    \caption{Comparison of segmentation performance between a linear baseline, the ViT-UNet hybrid \cite{rmlp}, and our proposed ViTC-UNet utilizing DINOv2 \cite{dinov2} as encoder for semantic segmentation of inter vertebrae disks on the Spider \cite{spider} dataset.}
\end{figure}

\begin{figure}[h!]
    \centering
    \newcommand{\imgcell}[1]{%
        \begin{minipage}[c][1.9cm][c]{\linewidth}
            \centering
            \includegraphics[width=\linewidth]{#1}
        \end{minipage}%
    }
    \newcommand{\tikzimg}[2]{%
        \begin{minipage}[c][1.9cm][c]{\linewidth}
            \centering
            \begin{tikzpicture}
                \node[inner sep=0pt] (image) {\includegraphics[width=\linewidth]{#1}};
                #2
            \end{tikzpicture}
        \end{minipage}%
    }
    \setlength{\tabcolsep}{7pt}
    \renewcommand{\arraystretch}{5.5} 

    \begin{subfigure}{\textwidth}
        \centering
        \begin{tabular}{>{\centering\arraybackslash}p{0.16\textwidth} >{\centering\arraybackslash}p{0.16\textwidth} >{\centering\arraybackslash}p{0.16\textwidth} >{\centering\arraybackslash}p{0.16\textwidth} >{\centering\arraybackslash}p{0.16\textwidth}}
            \textbf{CT Image} & \textbf{Ground Truth} & \textbf{Linear} & \textbf{ViT-UNet hybrid} & \textbf{ViTC-UNet} \\[-2.0em]
            
            \imgcell{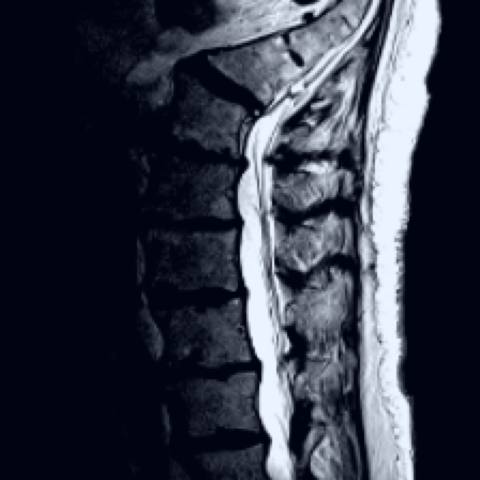} &
            \imgcell{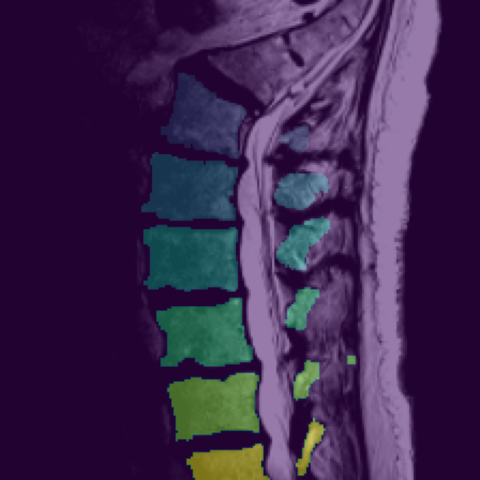} &
            \imgcell{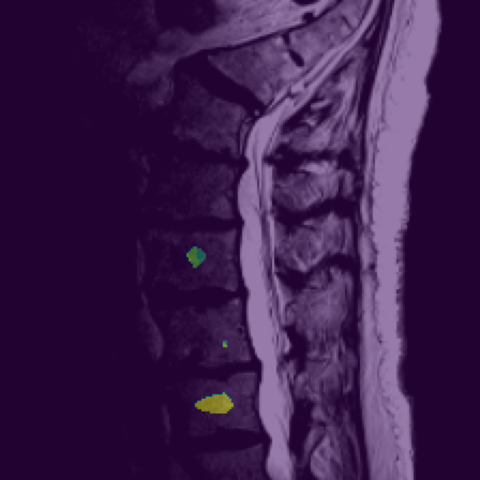} &
            \imgcell{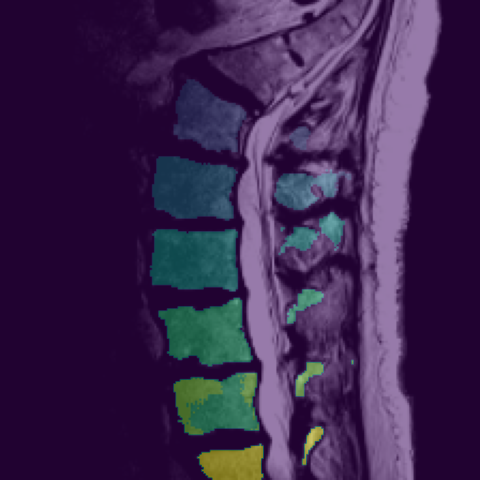} &
            \imgcell{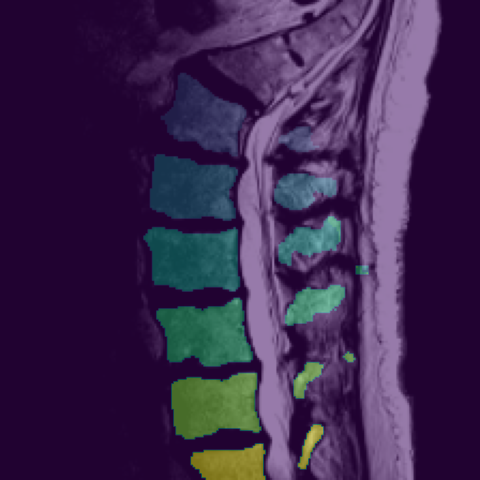} \\
            
            \imgcell{examples/overlays/mri/vitc_unet/spider_vertebrae/slice9_raw.png} &
            \imgcell{examples/overlays/mri/vitc_unet/spider_vertebrae/slice9_target.png} &
            \imgcell{examples/overlays/mri/linear/spider_vertebrae/slice9_pred.png} &
            \imgcell{examples/overlays/mri/hybrid/spider_vertebrae/slice9_pred.png} &
            \imgcell{examples/overlays/mri/vitc_unet/spider_vertebrae/slice9_pred.png} \\

            \imgcell{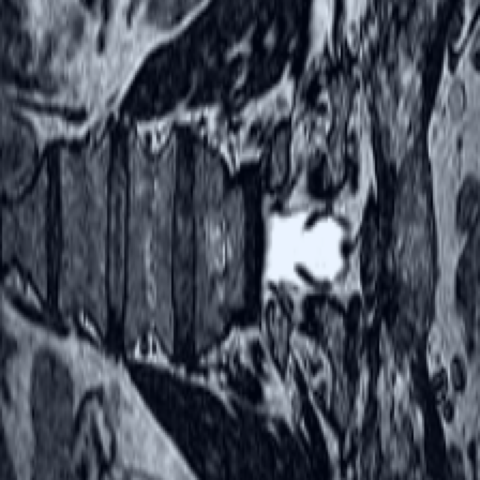} &
            \imgcell{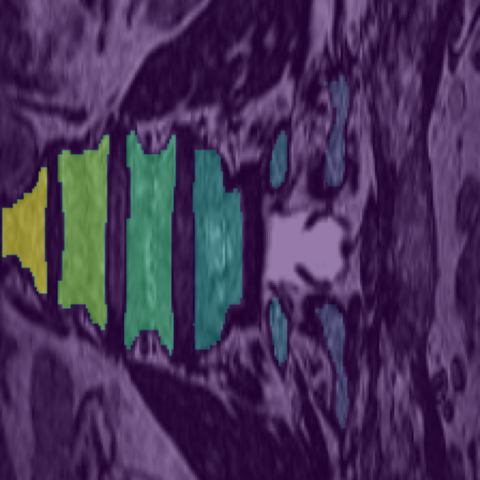} &
            \imgcell{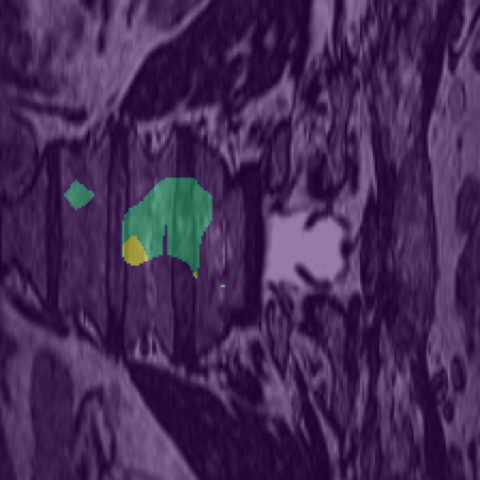} &
            \imgcell{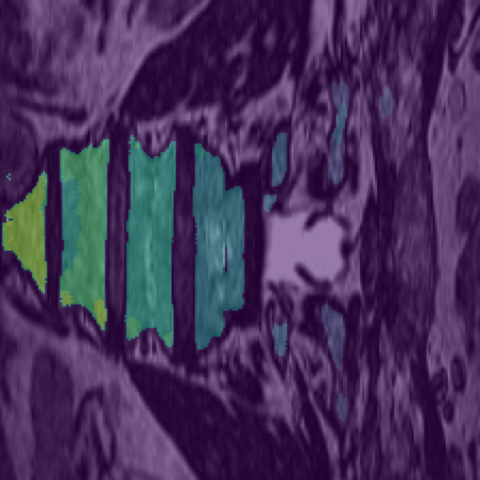} &
            \imgcell{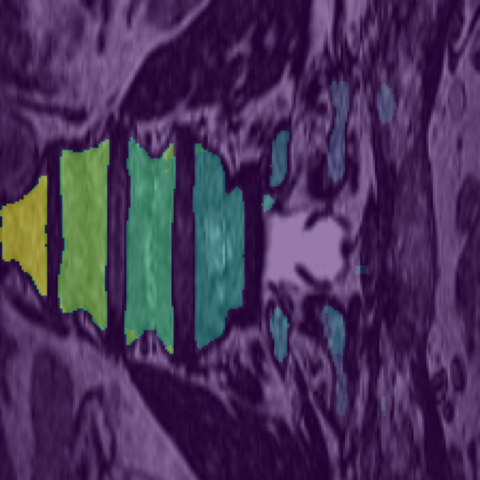} \\
        \end{tabular}
    \end{subfigure}

    \caption{Comparison of segmentation performance between a linear baseline, the ViT-UNet hybrid \cite{rmlp}, and our proposed ViTC-UNet utilizing DINOv2 \cite{dinov2} as encoder for semantic segmentation of vertebrae on the Spider \cite{spider} dataset.}
\end{figure}

\clearpage
\newpage

\end{document}